\theoremstyle{plain}
\newtheorem{theorem}{Theorem}[section]
\newtheorem*{theorem*}{Theorem}
\newtheorem*{"theorem"}{``Theorem''}
\newtheorem{corollary}[theorem]{Corollary}
\newtheorem{proposition}[theorem]{Proposition}
\newtheorem{lemma}[theorem]{Lemma}
\theoremstyle{definition}
\theoremstyle{remark}
\newtheorem{remark}[theorem]{Remark}
\newtheorem{example}[theorem]{Example}
\numberwithin{equation}{section}
\newcommand{\N}{\mathbb N}
\newcommand{\R}{\mathbb R} 
\renewcommand{\P}{{\mathbb P}}
\newcommand{\dist}{{\rm dist}}
\newcommand{\diam}{{\rm diam}}
\renewcommand{\H}{{\mathcal H}}
\newcommand{\M}{{\mathcal M}}
\renewcommand{\L}{{\mathcal L}}
\newcommand{\F}{{\mathcal F}}
\newcommand{\G}{{\mathcal G}}
\newcommand{\LRa} {\Leftrightarrow}
\newcommand{\Ra} {\Rightarrow}
\renewcommand{\d}{\mathrm{d}}
\newcommand{\dx}{\,\mathrm{d}x}
\newcommand{\dz}{\,\mathrm{d}z}
\newcommand{\ds}{\,\mathrm{d}s}
\newcommand{\dr}{\,\mathrm{d}r}
\newcommand{\eps}{\varepsilon}
\newcommand{\average}{{\mathchoice {\kern1ex\vcenter{\hrule height.4pt
width 6pt depth0pt} \kern-9.7pt} {\kern1ex\vcenter{\hrule
height.4pt width 4.3pt depth0pt} \kern-7pt} {} {} }}
\newcommand{\avint}{\average\int}
\newcommand\showlabel{\addtocounter{equation}{1}\tag{\theequation}}
\DeclareMathOperator*{\argmin}{argmin} 
\DeclareMathOperator*{\argmax}{argmax} 
\newcommand{\B}{\mathcal{B}}
\newcommand{\meas}{\mathrm{meas}}
\begin{document}

\title[Bump functions and shallow ReLU networks]{Optimal bump functions for shallow ReLU networks\\ {\Small Weight decay, depth separation and the curse of dimensionality}}

\author{Stephan Wojtowytsch}
\address{Stephan Wojtowytsch\\
Department of Mathematics\\
Texas A\&M University\\
155 Ireland Street\\
College Station, TX 77840
}
\email{swoj@tamu.edu}

\date{\today}

\subjclass[2020]{
68T07, 
65D40,  
41A30
}
\keywords{Deep learning, depth separation, Barron space, Radon-BV, compact support, mollifier, weight decay, minimum norm solution, symmetry learning, explicit regularization, curse of dimensionality, radial symmetry}

\begin{abstract}
In this note, we study how neural networks with a single hidden layer and ReLU activation interpolate data drawn from a radially symmetric distribution with target labels 1 at the origin and 0 outside the unit ball, if no labels are known inside the unit ball. With weight decay regularization and in the infinite neuron, infinite data limit, we prove that a unique radially symmetric minimizer exists, whose weight decay regularizer and Lipschitz constant grow as $d$ and $\sqrt{d}$ respectively. 

We furthermore show that the weight decay regularizer grows exponentially in $d$ if the label $1$ is imposed on a ball of radius $\eps$ rather than just at the origin. By comparison, a neural networks with two hidden layers can approximate the target function without encountering the curse of dimensionality.
\end{abstract}

\maketitle

\section{Introduction}

Neural networks have revolutionized fields from computer vision \cite{krizhevsky2012imagenet} to natural language processing \cite{vaswani2017attention}. They are the driving force behind AIs which play strategy games at superhuman levels of proficiency \cite{silver2018general,silver2016mastering,silver2017mastering}, facilitated major advances in scientific problems such as protein folding \cite{tunyasuvunakool2021highly,jumper2021highly}, and have been used for computer-assisted proofs in applied mathematics \cite{wang2022self}. While empirical evidence indicates that they often generalize well to previously unseen data when trained appropriately, there is little rigorous understanding of how neural networks interpolate a function between known data points.

In this article, we provide insight in the simple setting of infinitely wide ReLU networks with a single hidden layer and data which are drawn from a radially symmetric distribution on a Euclidean space $\R^d$. The target function $f^*$ satisfies $f^*(0) = 1$ and $f^*(x) = 0$ for $|x|\geq 1$, where $|\cdot|$ denotes the Euclidean norm on $\R^d$. We consider a loss functional composed of an $\ell^2$-error and a weight decay regularizer. Despite the fact that neural networks with a single hidden layer cannot represent compactly supported target functions exactly \cite{lu2021note}, there are such functions which can be approximated efficiently even in high dimension. Here, we construct optimal infinitely wide networks, and show that the weight decay regularizer grows only linearly in the dimension $d$ of the data space, improving on the quadratic upper bound established by \cite{ongie2019function}.

While highly idealized, this setting allows us to study several important aspects of neural network models:

\begin{enumerate}
\item {\bf Learning symmetries.} The target function has two important symmetries:
\begin{itemize}

\item $f^*$ is radially symmetric. While it is impossible to fit this symmetry exactly by finite networks, it can be attained asymptotically for highly overparametrized networks. More precisely, one could ask whether regularized risk minimization leads to symmetry learning. While we show that a unique radially symmetric solution exists, it remains open whether other solutions exist which do not exhibit radial symmetry. 

\item $0\leq f^*\leq 1$ almost everywhere with respect to the data distribution. Unlike linear models, which necessarily output negative data even if all training data points are positive, neural networks have the capacity to respect this constraint. We show that risk minimization asymptotically enforces the bound {\em everywhere} on the data space, at least for the unique radially symmetric solution.

\end{itemize} 

\item {\bf Fitting random or perturbed data.} It is known that overparametrized neural networks can fit random data, but due to the great generality of the result, the network weights may be prohibitively large for given data. Assuming that labels are generated by a function which can be approximated well by a neural network, compactly supported bump functions can be used to obtain an upper bound on the magnitude needed for specific labels or the increase necessary in the weight decay regularizer if the labels are perturbed.

\item {\bf Depth separation and curse of dimensionality.} We prove two complimentary results:
\begin{itemize}
\item In dimension $d$, there exists an infinitely wide ReLU network with one hidden layer $f_d^*$ with weight decay regularizer $\sim d$ such that $f_d^*(0) = 1$ and $f_d^*(x) = 0$ if $|x|\geq 1$.

\item If $f_{d,\eps}^*$ is an infinitely wide ReLU network with one hidden layer such that $f_d^*(x) = 1$ for $|x|\leq \eps$ and $f_d^*(x) = 0$ if $|x|\geq 1$, then the weight decay regularizer of $f_d^*$ grows at least exponentially as $\eps^2d^{1/2}(1-\eps^2)^{-\frac{d+1}2}$ in the dimension $d$ of the data space.
\end{itemize}

The curse of dimensionality can be avoided in the second situation by using a neural network with two hidden layers, for which the weight decay regularizer only grows as $\sim d^{1/3}(1-\eps)^{-1}$.

\item {\bf Effect of regularization.} Weight decay regularization is often taken as a proxy for controlling the Lipschitz constant of a neural network, as it can be computed more easily. In this highly symmetric setting, we can compare two optimal solutions:

\begin{enumerate}
\item The data is fitted optimally by the function $\hat f_d(x) = \max\{1-|x|, 0\}$, which attains the minimal Lipschitz constant $1$. The function cannot be represented by a ReLU network with a single hidden layer and finite weights, even in the infinite width limit \cite[Example 5.19]{barron_new}. It can be presented by a neural network with two infinitely wide hidden layers and weight decay $\sim \sqrt d$.

\item The weight decay regularizer of the optimal two-layer ReLU network $f_d^*$ grows like $d$, while its Lipschitz constant grows like $\sqrt d$.  
\end{enumerate}

\item {\bf Highly localized peaks.} The target function can be seen as the prototypical example of learning functions which take values $y_1, \dots, y_N$ at isolated points $x_1, \dots, x_N$ which are separated as `islands' in a `sea' of points $x_{N+1}, \dots, x_M$ with labels $y_{N+1} = \dots = y_M =0$. 

\item {\bf Mollification.} The infinitely wide neural networks constructed in this note can be used to establish approximation rates in function spaces for shallow neural networks by mollification, if the mollification width $\eps$ is optimized to balance the competition between approximation of the target function by the infinitely wide network and approximation of the infinitely wide network by finite neural networks.
\end{enumerate}

To the best of our knowledge, this is the first time that an optimal solution for fitting data by neural networks has been computed in dimension $d>1$. For technical reasons, we focus on the case that $d$ is odd. The optimal radial solution can be written as a finite sum
\[
f_d(x) = \sum_{i=0}^{n+1} \mu_i \avint_{S^{d-1}} \sigma\big(\nu^Tx-b_i\big)\,\d\H^{d-1}, \qquad n = \frac{d-1}2, \quad 0=b_0<\dots<b_{n+1} = 1
\]
for some coefficients $\mu_i\in\R$ satisfying $\sum_{i=0}^n|\mu_i| = \gamma_n \sim 3.7d$. 

The article is organized as follows. In the remainder of the Introduction, we briefly review the context of this work in the literature and the notation we will use throughout the article. In Section \ref{section barron spaces}, we give a brief introduction to the function spaces associated to two-layer ReLU networks with a weight decay regularizer (Barron or Radon BV spaces). Sections \ref{section results} and \ref{section proofs} are dedicated to the statement and proof of our main results respectively. Applications of our results can be found in Section \ref{section applications}. Numerical approximations of the optimal solutions $f_d^*$ can be found in Section \ref{section examples}. We conclude the article with a brief summary and list of open problems in Section \ref{section conclusion}. 

Further numerical experiments can be found in Appendix \ref{appendix plots}. Some proofs from the main part of the article are postponed to Appendix \ref{appendix postponed}, while proofs of results which are known in similar form are postponed to Appendix \ref{appendix known results}. Slight extensions of the main results can be found in Appendix \ref{appendix extensions}.

\subsection{Previous work}

The complexity of a neural network is often measured by the number of its non-zero coefficients (weights) \cite{louizos2017learning,srinivas2017training,gribonval2022approximation} or by a measure of their magnitude. From a practical perspective, both are crucial pieces of information: a neural network with an excessive number of non-zero connections is expensive to store and evaluate, while a network with very large coefficients is likely to depend on subtle cancellations at training data points and unlikely to generalize well to unseen data.

\cite{barron1993universal} realized that a large class of functions can be approximated efficiently by neural networks with a single hidden layer and any sigmoidal activation function while keeping the outer layer coefficients bounded. The function class is defined in terms of a spectral criterion and diverse enough that any linear method of approximation must face the curse of dimensionality in it.

Subsequently, function approximation by ReLU networks with a single hidden layer and bounded coefficients in both layers was studied in \cite{bach2017breaking,E:2018ab,weinan2019lei,barron_new}. Optimal rates of approximation were obtained in \cite{siegel2019approximation,siegel2021optimal}. A spectral criterion for this scenario in terms of the Fourier transform was developed in \cite{klusowski2018approximation}, and a sharp criterion in terms of the Radon transform in \cite{ongie2019function, parhi2021banach}. A detailed study of Fourier-like criteria in this context is given by \cite{barron_boundaries}.

The norm in these function spaces is related to the popular explicit `weight decay' regularizer (the $\ell^2$-norm of the network weights). It retains significance in the context of implicit regularization, as \cite{Chizat:2020aa} showed that infinitely wide two-layer ReLU networks converge to minimum norm/maximum margin classifiers with respect to the weight decay norm, when trained by a gradient flow optimizer for binary classification with logistic loss.

While the structure of the function spaces has been studied and many of their functional analytic properties are understood \cite{barron_new, parhi2021banach, siegel2021optimal,siegel2021characterization}, explicit examples remain rare. Spectral criteria have been used to show that functions in certain smoothness classes can be expressed as infinitely wide two-layer networks with finite weight-decay norm. \cite{weinan2022emergence} construct a maximum margin classifier in a simple one-dimensional scenario. A structure theorem is given in \cite{barron_new} to easily demonstrate that certain functions cannot be expressed this way. Closest to the present work is \cite{hanin2021ridgeless}, where the minimum norm interpolants of a finite one-dimensional data set are studied.

Much of the work on ReLU-activated two-layer networks makes heavy use of the homogeneity of the activation function. Two-layer neural networks with arbitrary activation are studied e.g.\ in \cite{siegel2020approximation, li2020complexity}. Partial (and different) extensions to deeper neural networks can be found e.g.\ in \cite{parhi2022kinds, deep_barron}, while residual neural networks of continuous depth (`neural ODEs') have been studied from this perspective in \cite{weinan2019priori,weinan2019lei,E:2019aa}.

\subsection{Notation}

We denote by $\avint_A$ the average integral over a set $A$ which has finite measure for a measure $\mu$, i.e.\ $\avint_A f(x)\,\d\mu_x = \frac1{\mu(A)} \int_A f(x)\,\d\mu_x$. By $\d\mu_x$ we mean that we integrate with respect to the (signed) measure $\mu$ in the variable $x$. In this article, $\mu$ will always be a measure (often signed), while $\nu$ denotes the exterior normal vector field on a sphere.

The natural $d-1$-dimensional area (Hausdorff) measure is denoted by $\H^{d-1}$. In this article, it will always refer to the (unnormalized) uniform distribution on a $d-1$-dimensional sphere.

The total variation norm of a measure $\mu$ on a measurable space $X$ is defined as $\|\mu\|_{TV} = \mu_+(X) + \mu_-(X)$, where $\mu_+, \mu_-$ is the Hahn decomposition of the signed measure $\mu$.

In the following, $g$ is always going to be a function of one variable and $f$ is going to be a radially symmetric function on $\R^d$. By an abuse of notation, we will also consider $f:[0,\infty) \to \R$ defined by $f(r) = f(r\cdot e_1)$. We denote by
\[
c_d = \frac{|S^{d-2}|}{|S^{d-1}|} = \frac1{\int_{-1}^1 (1-s^2)^\frac{d-3}2\ds}
\]
a quotient related to the area of hyperspheres in dimension $d$ and $d-1$, and by $\gamma_n$ a constant related to the approximability of the function $\sqrt s$ by polynomials of degree at most $n$ in $L^\infty(0,1)$, which also relates to the minimal value of the weight decay regularizer for fitting data as above.

The variables $d$ and $n$ are always related by $n = \frac{d-1}2$, i.e.\ $d = 2n+1$.

\section{Weight decay and Barron spaces}\label{section barron spaces}

In this section, we briefly review the theory of infinitely wide ReLU networks with a single hidden layer. Function spaces for this setting have been studied under the name $\mathcal F_1$ in \cite{bach2017breaking}, Barron space in \cite{review_article,weinan2019lei,E:2019aa,E:2018ab}, Radon-BV in \cite{parhi2022kinds,parhi2021banach} and the convex hull of the ReLU dictionary or the variation space of the ReLU dictionary in \cite{siegel2021characterization, siegel2021sharp}. In this note, we refer to them as Barron spaces in reference to the seminal work of Andrew Barron \cite{barron1993universal}. Some results presented below are extensions of known results to the case where we consider a Barron semi-norm rather than the full Barron norm, corresponding to a weight decay regularizer which does not control the magnitude of the biases.

A neural network with a single hidden layer and $m\in\N$ neurons can be represented as 
\[\showlabel\label{eq finite network}
f_m(x) = \sum_{i=1}^m a_i \,\sigma(w_i^Tx + b_i)\qquad\text{or}\qquad f_m(x) = \frac1m\sum_{i=1}^m a_i \,\sigma(w_i^Tx + b_i)
\]
where $(a_i, w_i, b_i)\in \R\times \R^d\times \R$ are the {\em weights} of the neural network. For networks in which the size of the weights is controlled, this representation can be generalized to
\[\showlabel\label{eq barron representation}
f_\mu(x) = \int_{\R\times \R^d\times \R} a\,\sigma(w^Tx+b)\,\d\mu_{(a,w,b)}\qquad\text{or } f_\pi(x) = \int_{\R\times \R^d\times \R} a\,\sigma(w^Tx+b)\,\d\pi_{(a,w,b)}
\]
where $\mu$ is a measure on $\R^{d+2}$ and $\pi$ is a probability measure on $\R^{d+2}$. More generally, due to the symmetry $a\sigma(w^Tx+b) = \lambda \big((\lambda^{-1} a)\,\sigma(w^Tx+b)\big)$ for $\lambda\neq 0$, $\mu$ can be taken to be a signed measure. Finite networks are contained in the general setting by setting
\[
\mu_m = \sum_{i=1}^m \lambda_i\, \delta_{(\lambda_i^{-1}a_i, w_i, b_i)} \quad\text{and}\quad \pi_m = \frac1m \sum_{i=1}^m \delta_{(a_i, w_i, b_i)} 
\]
respectively, where the parameters $\lambda_i\neq 0$ can be chosen freely for a convenient representation. The integral is guaranteed to converge if the {\em Barron norm} 
\begin{align*}\showlabel\label{eq barron norm}
\|f\|_\B &= \inf_\pi\left\{\int_{\R^{d+2}} |a|\cdot \big\{|w| + |b|\big\}\,\d\pi : f\equiv f_\pi\right\}
	= \inf_\mu\left\{\int_{\R^{d+2}} |a|\cdot \big\{|w| + |b|\big\}\,\d|\mu| : f\equiv f_\mu\right\}.
\end{align*}
is finite, where $|\mu| = \mu^+ + \mu^-$ denotes the total variation measure of the signed measure $\mu= \mu^+-\mu^-$. The infimum must be taken since the representation of a function in this fashion is highly non-unique \cite[Section 2.1]{barron_new}. The two representations of the norm coincide by \cite[Section 2.4]{barron_new}.  

The norm in the parameter variable $w$ is chosen dual to the norm in the data variable $x$ such that the inequality $|w^Tx| \leq |w|\cdot |x|$ holds. In particular, if distances in the data domain are measured in the $\ell^p$-sense for $p\in [1,\infty]$, then distances in the parameter domain are measured in the $\ell^q$-sense for $q = \frac{p}{p-1}$. For compatibility with radial symmetry, we focus on the case $p = \frac{p}{p-1} = 2$ in this note.

We refer to the space $\{f : \|f\|_\B<\infty\}$ as Barron space $\B$, or at times $\B(\R^d)$ to indicate dependence on dimension.

Due to the control over the bias, the Barron norm as defined in \cite{weinan2019lei, barron_new} is not invariant under translations in the data space, i.e.\ the functions $f$ and $f(\cdot +\bar x)$ generally have a different norm for $\bar x\neq 0$. By contrast, the following {\em Barron semi-norm} is translation invariant and has useful properties which suffice in many applications:
\begin{align*}\showlabel\label{eq barron semi-norm}
[f]_\B = \inf_\pi\left\{\frac12\int_{\R^{d+2}}|a|^2 + |w|^2\,\d\pi : f\equiv f_\pi\right\}
	= \inf_\mu\left\{\frac12\int_{\R^{d+2}}|a|^2 + |w|^2\,\d|\mu| : f\equiv f_\mu\right\}.
\end{align*}
We will address the convergence of the integrals in \eqref{eq barron representation} without control over $b$ in Proposition \ref{proposition properties of barron functions}.
This is more in line with the approach in \cite{ongie2019function,parhi2021banach}, where the magnitude of the bias is also not controlled. We opt for controlling $|a|^2 + |w|^2$ rather than $|a|\cdot |w|$ for convenience, but note that the classical Barron norm could be defined in this fashion, too. The key observation is that the ReLU activation function $\sigma(z) = \max\{z,0\}$ is positively one-homogeneous, i.e.\ $\sigma(\lambda z) = \lambda \sigma(z)$ for all $\lambda >0$. In particular
\[
a \sigma(w^Tx+b) = a \sqrt{\frac{|w|}{|a|}}\,\sigma\left(  \sqrt{\frac{|a|}{|w|}} w^Tx+\sqrt{\frac{|a|}{|w|}}b\right),
\]
i.e.\ we may normalize neurons $(a_i,w_i,b_i)$ to
\[
a_i' = a_i \sqrt{\frac{|w_i|}{|a_i|}}, \quad w_i' = \sqrt{\frac{|a_i|}{|w_i|}} \,w_i\qquad\text{s.t. } |a_i'|^2 = |w_i'|^2 = |a_i|\,|w_i|
\]
without changing the output of the neural network. In particular $|a|^2 + |w|^2 = 2|a|\,|w|$, indicating that we could define the Barron norm in the analogous fashion by squares. Indeed, in the infinite limit it is even possible to assume that $\pi$ is supported on the set $|a| = |w| = \sqrt{[f]_\B}$. For a more technically rigorous discussion, see e.g.\ \cite{barron_new}.

By a slight abuse of terminology, we will also refer to $\B_0$ as Barron space from now on. We briefly note the following properties, which relate the Barron semi-norm and more well-established quantities.

\begin{proposition}\label{proposition properties of barron functions}
\begin{enumerate}
\item If the integral in \eqref{eq barron semi-norm} is finite for $\pi$, then the integral defining $f_\pi$ in \eqref{eq barron representation} exists for all $x\in \R^d$ if and only if it exists for $x=0$. It may then be re-cast as
\[
f_\pi(x) = f_\pi(0) + \int_{\R^{d+2}} a\,\big[\sigma(w^Tx+b) - \sigma(b)\big] \,\d\pi.
\]
This expression always converges if the integral in \eqref{eq barron semi-norm} is finite. The integral exists as a Bochner integral with values in $C^0(K)$ for compact $K\subseteq \R^d$ or $L^p(\P)$ for a probability distribution $\P$ on $\R^{d+2}$ with finite $p$-th moments.
\item $[f]_\B$ is a norm on the modified Barron space $V_0 = \{f\in C^0(\R^d): f(0) = 0, [f]_\B<\infty\}$, which makes $V_0$ a Banach space. Compared to classical Barron spaces, $V_0\not\subseteq \B(\R^d)$.
\item $[f]_\B\leq \|f\|_\B$.
\item If $f\in\B$, then $f$ is Lipschitz-continuous and the Lipschitz-constant of $f$ satisfies $[f]_{Lip}\leq [f]_\B$.
\end{enumerate}
\end{proposition}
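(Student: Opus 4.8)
I would reduce all four assertions to three elementary properties of the ReLU: it is $1$-Lipschitz with $\sigma(0)=0$, it is positively $1$-homogeneous, and AM--GM gives $|a|\,|w|\le\tfrac12(|a|^2+|w|^2)$, with equality after the normalization $|a|=|w|$ described above. For part (1), start from the pointwise bound $\bigl|\sigma(w^Tx+b)-\sigma(b)\bigr|\le|w^Tx|\le|w|\,|x|$, which yields
\[
\int_{\R^{d+2}}|a|\,\bigl|\sigma(w^Tx+b)-\sigma(b)\bigr|\,\d\pi\ \le\ |x|\int_{\R^{d+2}}|a|\,|w|\,\d\pi\ \le\ |x|\int_{\R^{d+2}}\tfrac12\bigl(|a|^2+|w|^2\bigr)\,\d\pi\ <\ \infty
\]
whenever the integral in \eqref{eq barron semi-norm} is finite. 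Thus the ``recast'' integral converges absolutely for every $x$, irrespective of the behaviour at $x=0$; adding back the constant $\int a\,\sigma(b)\,\d\pi$ --- which is the integrand of \eqref{eq barron representation} at $x=0$ --- both shows that \eqref{eq barron representation} converges at $x$ iff it converges at $0$ and produces the stated identity. For the Bochner statements, the map $(a,w,b)\mapsto\bigl(x\mapsto a[\sigma(w^Tx+b)-\sigma(b)]\bigr)$ is continuous, hence strongly measurable, into $C^0(K)$ and into $L^p(\P)$, with norms bounded by $|a|\,|w|\sup_{x\in K}|x|$ and $|a|\,|w|\,(\int|x|^p\,\d\P)^{1/p}$; both are $\pi$-integrable by the same estimate, so the Bochner integral exists.

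Parts (3) and (4) are then short. For (3), given any $\mu$ with $f\equiv f_\mu$, discard the neurons with $|a|=0$ or $|w|=0$ (they do not affect $f_\mu$) and push $\mu$ forward under the homogeneity rescaling $(a,w,b)\mapsto(a\sqrt{|w|/|a|},\,\sqrt{|a|/|w|}\,w,\,\sqrt{|a|/|w|}\,b)$, which leaves $f_\mu$ unchanged and forces $|a|^2=|w|^2=|a|\,|w|$ on the support; then $[f]_\B\le\int|a|\,|w|\,\d|\mu|\le\int|a|(|w|+|b|)\,\d|\mu|$, and the infimum over $\mu$ gives $[f]_\B\le\|f\|_\B$. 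For (4), the identity from part (1) gives $f(x)-f(y)=\int a\,[\sigma(w^Tx+b)-\sigma(w^Ty+b)]\,\d\pi$, the bracket is $\le|w|\,|x-y|$ by $1$-Lipschitzness, and the infimum over $\pi$ gives $[f]_{Lip}\le[f]_\B$; since $[f]_\B\le\|f\|_\B<\infty$ for $f\in\B$ this shows $f$ is Lipschitz, and taking $y=0$ shows every function with finite semi-norm grows at most linearly, a fact I reuse below.

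For part (2): absolute homogeneity $[\lambda f]_\B=|\lambda|\,[f]_\B$ follows by pushing a representing measure forward under $(a,w,b)\mapsto(\lambda a,w,b)$ and re-optimizing the per-neuron rescaling; the triangle inequality follows by concatenating representing measures; and definiteness on $V_0$ follows from the linear-growth bound of part (4) together with $f(0)=0$. For completeness, take a Cauchy sequence $(f_k)\subset V_0$: the bound $|f_k(x)-f_j(x)|\le|x|\,[f_k-f_j]_\B$ produces a locally uniform limit $f\in C^0(\R^d)$ with $f(0)=0$, so it remains to show $f\in V_0$ and $[f-f_k]_\B\to 0$. This is the main obstacle, because the semi-norm does not control $|b|$ and hence the near-optimal representing measures are not tight on $\R^{d+2}$; I would get around it either by summing near-optimal representing measures of a rapidly converging subsequence after the normalization $|a|=|w|$ (parametrizing each neuron by a point of the compact sphere $\{|a|=|w|=1\}$ together with a bias scale), or, more cleanly, by proving lower semicontinuity of $[\cdot]_\B$ under locally uniform convergence via Banach--Alaoglu applied to those normalized measures. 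Finally, $V_0\not\subseteq\B(\R^d)$: a function with infinitely many kinks at $x_1=1,2,3,\dots$ of summable strengths, e.g.\ $f(x)=\sum_{k\ge1}k^{-2}\sigma(x_1-k)$, lies in $V_0$ with $[f]_\B\le\sum_k k^{-2}<\infty$, but has $\|f\|_\B=\infty$ because producing a kink at location $k$ costs a bias of size $\sim k$ relative to the slope while $\sum_k k^{-2}\cdot k=\infty$; the routine lower bound can be left to a short computation. Much of this is available in similar form in \cite{barron_new}.
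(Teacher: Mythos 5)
Your proposal is correct, and parts (1), (3) and (4) follow essentially the same estimates as the paper's proof: the pointwise bound $|a|\,|\sigma(w^Tx+b)-\sigma(b)|\le|a|\,|w|\,|x|$ combined with AM--GM for (1) and (4), and the $|a|=|w|$ normalization for (3). The two places where you genuinely diverge are in part (1)'s Bochner statement and in part (2). For the Bochner integral, you invoke continuity of $(a,w,b)\mapsto h_{(a,w,b)}$ into $C^0(K)$ resp.\ $L^p(\P)$ plus integrability of the norm; since $\R^{d+2}$ is separable this gives strong measurability by Pettis, and your route is shorter and cleaner than the paper's, which constructs explicit simple-function approximations on dyadic cubes and passes to the limit via a uniform Lipschitz bound and compact embedding. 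For part (2), the paper outsources completeness to \cite[Lemma~1]{siegel2021characterization} and proves $V_0\not\subseteq\B$ with $f(x)=\log(1+x^2)$, using the decomposition of any $f\in\B$ into a positively one-homogeneous plus a bounded part; you instead propose a self-contained completeness argument and the counterexample $\sum_k k^{-2}\sigma(x_1-k)$, whose kinks at $x_1=k$ force any representation to pay $\gtrsim k^{-2}(1+k)$ per kink, so $\|f\|_\B\ge\sum_k k^{-1}=\infty$ while $[f]_\B\le\sum_k k^{-2}<\infty$ by the one-dimensional characterization. Both counterexamples are valid; yours is more elementary in that it avoids the decomposition theorem, though the lower bound does implicitly use that the singular part of $D^2 f_\mu$ localizes on hyperplanes. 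The one place your write-up falls short of a proof is completeness of $V_0$: you correctly identify the obstacle (the semi-norm does not control $|b|$, so near-optimal representing measures are not tight) and name two standard remedies, but you carry out neither; since the paper itself only cites a reference here, this is a sketch on par with the paper's level of detail rather than a gap in the mathematics, but it would need to be executed (e.g.\ via weak-$*$ compactness of the normalized measures and lower semicontinuity of $[\cdot]_\B$ under locally uniform convergence) for a complete standalone proof.
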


All statements could be given in terms of a general signed measure $\mu$ instead of $\pi$. The proof, along with other proofs from this section, can be found in Appendix \ref{appendix known results}.

Functions in Barron spaces are defined by means of an explicit representation formula. Paradoxically, this explicit characterization often makes it difficult to verify whether a given function is in Barron space. A more abstract framework was created in \cite{ongie2019function} by the means of the Radon transform, based on the observation that 
\begin{align*}
\Delta \left(\sum_{i=1}^m a_i\sigma(w_i^T\cdot +b_i)\right) &= \sum_{i=1}^m a_i |w_i|\cdot \H^{d-1}|_{W_i}\\ 
D^2 \left(\sum_{i=1}^m a_i\sigma(w_i^T\cdot +b_i)\right) &= \sum_{i=1}^m a_i |w_i|\cdot \frac{w_i}{|w_i|} \otimes \frac{w_i}{|w_i|}\cdot \H^{d-1}|_{W_i},
\end{align*}
i.e.\ the second spatial derivatives of a ReLU network with one hidden layer are superpositions of measures concentrated on the hyperplanes $W_i = \{x : w_i^Tx + b_i = 0\}$. This allows for a characterization of Barron spaces in terms of second derivatives. The Radon transform is used as a technical tool in order to dualize from hyperplanes to points. This convenient characterization allows the construction of some examples of functions in Barron space.

\begin{example}
\begin{enumerate}
\item Assume that $f$ is a Lipschitz-continuous function and that the (possibly non-integer) power $(-\Delta)^{(d+1)/2}f$ of the Laplacian in the distributional sense exists as a measure. Then
\[
[f]_{\B(\R^d)} \leq \frac1{2^{d-1} \pi^{d/2-1} \Gamma(d/2) } \|(-\Delta)^{(d+1)/2} f\|_{TV},
\]
where $\|\cdot\|_{TV}$ denotes the total variation norm of $\Delta f$ \cite[Proposition 3]{ongie2019function}.

\item If $d$ is odd, the power of the Laplacian is integer. In particular, if $f$ belongs to the Sobolev space $W^{d+1,1}(\R^d) \subseteq C^{d+1}(\R^d)$ of functions whose first $d+1$ (weak) partial derivatives are $L^1$-integrable, then $f\in \B(\R^d)$ and 
\[
[f]_\B \leq c_d \|f\|_{W^{d+1,1}}.
\]
for some constant $c_d>0$, which depends on the exact choice of the norm on $W^{d+1,1}$. In particular $C_c^\infty(\R^d)\subseteq \B(\R^d)$ \cite[Corollary 1]{ongie2019function}.

\item If $d\geq 3$ is an odd integer and $f_{d,k}:\R^d\to \R$ is the radial bump function given by 
\[
f_{d,k}(x) = \begin{cases} \big(1- |x|^2\big)^k &|x|\leq 1\\ 0 &\text{else}\end{cases},
\]
then $f_{d,k} \in\B_0(\R^d)$ if $k\geq \frac{d+1}2$. For $k_d = \frac{d+1}2 + 2$, the norm bound $[f_{d,k_d}]_{\B(\R^d)}\leq 2d(d+5)$ holds according to \cite[Example 3]{ongie2019function}. 

In \cite{ongie2019function}, also a stronger version of the statement is claimed, including an if and only if condition for $k$ and a comparable lower bound for $[f_{d,k_d}]_{\B(\R^d)}$. Those claims are based on an error in the proof of \cite[Proposition 15]{ongie2019function}, where the erroneous claim is made that if $\int_{\R^d}|\phi|\dx =1$, then the integral of $\phi$ over any hyperplane $\int_H|\phi|\,\d\H^{d-1}$ is bounded from above by $1$. 
\end{enumerate}
\end{example}

Based on the same intuition, we point out two observations. The first demonstrates that the singular set $\Sigma$ of a Barron function (i.e.\ the set where the functions is not differentiable) is `straight' and lower dimensional. This is a stronger version of Rademacher's theorem, which states that the singular set of a Lipschitz function is Lebesgue null, in the context of Barron spaces. The following statement has the stronger implication that $\Sigma$ is contained in a countable union of affine subspaces of $\R^d$ and therefore has Hausdorff dimension $\leq d-1$.

\begin{proposition}\cite{barron_new}\label{proposition structure theorem}
Any function $f\in\B(\R^d)$ can be written as a countable sum $f = \sum_{i=0}^\infty f_i$ where 
\begin{enumerate}
\item $f_0\in\B(\R^d)$ is $C^1$-smooth,
\item $f_i(x) = g_i(P_ix+b_i)$ where
\begin{itemize}
\item $P_i:\R^d\to \R^{k_i}$ is an orthogonal projection for $1\leq k_i\leq d$ (i.e.\ $P_iP_i^T = I_{k\times k})$
\item $g_i\in \B(\R^{k_i})$ is $C^1$-smooth except at $0\in \R^{k_i}$.
\end{itemize}
\end{enumerate}
\end{proposition}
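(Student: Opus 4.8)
The plan is to read the decomposition off a normalized integral representation of $f$, locating the singular set through the way the defining measure concentrates on pencils of hyperplanes.

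\emph{Normalization and the singular set.} First I would use the positive one-homogeneity of $\sigma$ to rescale every neuron so that $|w|\equiv 1$; since $f\in\B(\R^d)$ also controls the biases, the resulting representation is governed by a finite signed measure $\mu$ on the sphere bundle $S^{d-1}\times\R$ with $\int(1+|t|)\,\d|\mu| < \infty$, and splitting off an affine (hence $C^1$) term via $2\sigma(z) = z+|z|$ one may take
\[
f(x) = (\text{affine}) + \int_{S^{d-1}\times\R}\bigl|w^Tx - t\bigr|\,\d\mu(w,t).
\]
That this integral converges for every $x$ as soon as it does at $x=0$, and the formula for its a.e.\ gradient, are the content of Proposition \ref{proposition properties of barron functions}. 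A direct estimate then gives, since $|w|\equiv 1$,
\[
\bigl|\nabla f(x) - \nabla f(x_0)\bigr| \;\le\; 2\,|\mu|\Bigl(\bigl\{(w,t) : |t - w^Tx_0|\le |x-x_0|\bigr\}\Bigr),
\]
whose right-hand side decreases, as $x\to x_0$, to $2\,|\mu|(A_{x_0})$, where $A_{x_0} := \{(w,w^Tx_0) : w\in S^{d-1}\}$ is the set of hyperplanes through $x_0$. Since $f$ is already Lipschitz, it follows that $f$ is $C^1$ at every point $x_0$ with $|\mu|(A_{x_0})=0$, so the singular set is contained in $B := \{x_0 : |\mu|(A_{x_0})>0\}$.

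\emph{Peeling off coaxial pencils.} For $x_0\in B$, let $\bar\mu_{x_0}$ be the pushforward to $S^{d-1}$ of the restriction of $\mu$ to $A_{x_0}$, put $V_{x_0} := \operatorname{span}(\operatorname{supp}\bar\mu_{x_0})$ with $k_{x_0} := \dim V_{x_0}\in\{1,\dots,d\}$, let $P_{x_0}$ be the orthogonal projection onto $V_{x_0}$, and set $L_{x_0} := x_0 + V_{x_0}^\perp$. The associated piece of $f$,
\[
g_{x_0}(x) := \int_{A_{x_0}}\sigma(w^Tx - t)\,\d\mu \;=\; \int_{S^{d-1}}\sigma\bigl(w^T(x-x_0)\bigr)\,\d\bar\mu_{x_0}(w),
\]
depends on $x$ only through $P_{x_0}(x-x_0)$, because every $w$ occurring lies in $V_{x_0}$; hence $g_{x_0}(x) = G_{x_0}(P_{x_0}x + b_{x_0})$ for a Barron function $G_{x_0}$ on $V_{x_0}\cong\R^{k_{x_0}}$ with $[G_{x_0}]_\B\lesssim\|\bar\mu_{x_0}\|_{TV}$, and $G_{x_0}$ is $C^1$ away from $0$ once the same peeling has been carried out inside $V_{x_0}$. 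The structural content to be proved is then: $B$ meets only countably many distinct axes $L_i$; running over the finitely many codimension levels $k=d,d-1,\dots,1$, one subtracts at each level the countably many maximal coaxial pencils carried by the current measure and passes to the lower-dimensional remainder; and the pieces can be organized to consume essentially disjoint chunks of $|\mu|$, so that $\sum_i[f_i]_\B<\infty$ with $f_i(x)=g_i(P_ix+b_i)$ and $g_i$ of the claimed form, while the leftover measure $\mu_0$ satisfies $|\mu_0|(A_x)=0$ for every $x$, whence $f_0 := f-\sum_i f_i$ lies in $\B(\R^d)$ and is $C^1$.

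\emph{Main obstacle.} The hard part is the structural claim just made: that $B$ is a countable union of affine subspaces and that the sub-representations obtained have summable Barron norm. The difficulty is that two coaxial pencils with distinct axes of the same codimension may still share $|\mu|$-mass supported on a lower-dimensional great subsphere, so a one-shot decomposition over-counts; this forces a downward induction on codimension combined with a maximality/exhaustion argument — at each level, repeatedly split off an axis carrying the largest available coaxial chunk — and it is exactly here that finiteness of $\|\mu\|_{TV}$ and of $\int(1+|t|)\,\d|\mu|$ is genuinely used, to ensure that only countably many splits occur at each level and that the resulting series converges in $\B(\R^d)$. The remaining verifications — the factorization of each $g_{x_0}$ through $P_{x_0}$, the $C^1$-regularity of $f_0$, and the norm bookkeeping — are then routine.
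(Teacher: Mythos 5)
The paper itself does not prove this proposition --- it is quoted from \cite{barron_new} (Theorem 5.18 there) and Appendix \ref{appendix known results} explicitly defers to that reference --- so I am judging your argument on its own terms. Your framework is the right one and matches the standard proof up to the point where the real work begins: the normalization $|w|\equiv 1$, the gradient-continuity estimate showing that $f$ is $C^1$ at every $x_0$ with $|\mu|(A_{x_0})=0$, and the observation that the mass of $\mu$ carried by the pencil $C_L:=\{(w,t): H_{w,t}\supseteq L\}$ of hyperplanes through an affine subspace $L$ contributes a summand factoring through the orthogonal projection onto $(L-L)^{\perp}$. You have also correctly identified the crux (countably many axes, essentially disjoint chunks of $|\mu|$).

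The crux, however, is exactly what is missing, and the induction you sketch for it runs in the wrong direction. Take $d=2$ and $\mu=(\text{uniform on }S^1\times\{0\})+a\,\delta_{(e_1,0)}$, so $f(x)=c|x|+a\,\sigma(x_1)$. The point $\{0\}$ is \emph{not} a maximal axis, because the line $\ell=\{x_1=0\}$ satisfies $|\mu|(C_\ell)=a>0$. Processing codimension $d$ first therefore either removes nothing at that level (if you insist on maximal axes), so the diffuse mass on $A_0$ is never assigned and $f_0=c|x|$ fails to be $C^1$ at $0$; or it removes all of $\mu|_{A_0}$ as a point pencil, whose $G_0(y)=c|y|+a\,\sigma(y_1)$ is singular along a whole line rather than only at $0$, while every other point of $\ell$ also sees the atom, so the point pencils at that stage are neither countable nor essentially disjoint. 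The induction must go the other way: first split off the atoms of $\mu$ (single hyperplanes of positive mass, $k=1$); from the remainder split off, for each codimension-$2$ affine subspace $L$, the remaining mass on $C_L$; and so on up to $k=d$. Once the levels below $k$ have been cleaned out, two distinct codimension-$k$ pencils intersect only in a pencil of strictly lower codimension, which carries no remaining mass; hence the level-$k$ pencils of positive remaining mass are essentially disjoint, there are countably many of them because $\|\mu\|_{TV}<\infty$, their Barron norms sum to at most $\|\mu\|_{TV}$, and each $g_L$ is \emph{automatically} $C^1$ away from $0$ because a singularity of $g_L$ at $y_0\neq 0$ would require positive remaining mass on a lower-codimension pencil. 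In particular no greedy ``largest chunk'' exhaustion is needed, and the first moment $\int(1+|t|)\,\d|\mu|$ plays no role in the countability --- only in the convergence of the representation. As written, your proposal names the right objects but concedes the structural claim rather than proving it, and the order in which it proposes to peel off pencils would not yield the asserted decomposition.
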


The fact that the singular set is straight has two immediate implications.

\begin{corollary}\label{corollary structure barron}
\begin{enumerate}
\item If $f\in \B(\R^d)$ is radially symmetric, then $f\in C^1(\R^d\setminus \{0\})$. 
\item If $\phi:\R^d\to\R^d$ is a diffeomorphism such that $f\in \B(\R^d) \Ra f\circ \phi\in \B(\R^d)$, then $\phi$ is an affine linear map \cite[Theorem 5.18]{barron_new}.
\end{enumerate}
\end{corollary}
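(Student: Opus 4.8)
The two claims in Corollary \ref{corollary structure barron} both follow by specializing the structure theorem in Proposition \ref{proposition structure theorem}, so the plan is essentially a case analysis on the pieces $f_i = g_i \circ (P_i \cdot + b_i)$.

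\textbf{Part (1): Radial symmetry forces $C^1$ away from the origin.}
Suppose $f \in \B(\R^d)$ is radially symmetric and, for contradiction, that $f$ fails to be $C^1$ at some point $x_0 \neq 0$. Write $f = f_0 + \sum_{i \geq 1} f_i$ as in the structure theorem; since $f_0$ is $C^1$-smooth, the singular set $\Sigma$ of $f$ is contained in $\bigcup_{i \geq 1} \Sigma_i$, where $\Sigma_i$ is the singular set of $f_i$. Each $\Sigma_i$ is contained in the affine subspace $P_i^{-1}(\{\text{singular set of } g_i\}) \subseteq P_i^{-1}(0) = \{x : P_i x + b_i = 0\}$, which is an affine subspace of dimension $d - k_i \leq d-1$. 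Hence $\Sigma$ is contained in a countable union of proper affine subspaces, so it is $\H^{d-1}$-$\sigma$-finite and in particular does not contain any sphere $\{|x| = r\}$ with $r > 0$ (such a sphere has positive, indeed infinite, $\H^{d-1}$-measure and is not contained in any countable union of hyperplanes — a single hyperplane meets a sphere in a set of dimension $\leq d-2$). But by radial symmetry, if $x_0 \in \Sigma$ then the entire orbit $\{|x| = |x_0|\}$ lies in $\Sigma$ (composing $f$ with a rotation, which is linear hence preserves Barron membership and smoothness, maps singular points to singular points). This is the desired contradiction, so $\Sigma \subseteq \{0\}$ and $f \in C^1(\R^d \setminus \{0\})$. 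The one point to handle carefully is that $\Sigma$ should be defined as the complement of the maximal open set on which $f$ is $C^1$, so that membership is rotation-equivariant and the "orbit" argument is clean; I would also note that a countable union of proper affine subspaces cannot cover a sphere, for which the cleanest argument is a Baire category / measure argument on the sphere.

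\textbf{Part (2): Diffeomorphisms preserving $\B$ are affine.}
This is cited as \cite[Theorem 5.18]{barron_new}, so strictly I may invoke it, but I would sketch why it is plausible from the structure theorem. The idea is that for any $C^1$-smooth $g \in \B(\R^d)$ which is singular exactly at $0$ (e.g.\ take $g$ a suitable radial profile, so that $\{0\}$ is genuinely its singular set), the composition $g \circ \phi$ must again lie in $\B$, hence its singular set $\phi^{-1}(0)$ must be a single point lying in a countable union of affine subspaces — which is automatic — but more is true: applying the structure theorem to $g \circ \phi$ and tracking how the "straightness" of singular sets is preserved forces the level-set geometry of $g \circ \phi$ near $\phi^{-1}(0)$ to match that of an affine image. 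Iterating over a rich enough family of test functions $g$ (with singular sets affine subspaces of every dimension and every position) pins down $D\phi$ to be constant, i.e.\ $\phi$ affine. I would defer the details entirely to \cite{barron_new}, since the statement is explicitly attributed there.

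\textbf{Main obstacle.}
The genuine content is all in Part (1), and within it the one nonroutine step is showing that a sphere $S^{d-1}_r = \{|x| = r\}$, $r > 0$, cannot be contained in a countable union of proper affine subspaces of $\R^d$. I would argue: each affine subspace $V$ of dimension $\leq d-1$ meets $S^{d-1}_r$ in a set which is either empty, a point, or a lower-dimensional sphere $S^{k}$ with $k \leq d-2$; in all cases $\H^{d-1}(V \cap S^{d-1}_r) = 0$, while $\H^{d-1}(S^{d-1}_r) > 0$, so countable additivity of $\H^{d-1}$ restricted to $S^{d-1}_r$ gives the contradiction. Everything else — the rotation-equivariance of $\Sigma$, the reduction via Proposition \ref{proposition structure theorem}, the smoothness of $f_0$ — is bookkeeping.
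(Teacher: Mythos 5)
Your proof is correct and follows exactly the route the paper intends: the paper itself only asserts that Corollary \ref{corollary structure barron} ``follows directly'' from Proposition \ref{proposition structure theorem} and cites \cite[Theorem 5.18]{barron_new} for part (2), and your part (1) fills in precisely the intended argument (singular set contained in a countable union of proper affine subspaces, rotation-equivariance of the singular set, and the $\H^{d-1}$-measure argument showing a sphere of positive radius cannot be so covered). Nothing is missing.
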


A brief inspection of the proof of \cite[Theorem 5.18]{barron_new} reveals that Proposition \ref{proposition structure theorem} and \ref{corollary structure barron} reveals that both statements remain valid for $\B_0(\R^d)$. A stronger result on radial Barron functions is proved below in Lemma \ref{lemma radial barron functions}. Secondly, we recall a characterization of one-dimensional Barron spaces, which is essentially the simpler one-dimensional case of the Radon transform construction. A similar statement can also be found e.g.\ in \cite[Example 4.1]{barron_new} and \cite{li2020complexity}.

\begin{proposition}\label{proposition one-dimensional}
$\phi \in\B_0(\R)$ if and only if there exists a finite signed measure $\mu$ such that $\phi'' = \mu$, i.e.\ $\phi'(s) = \mu\big((-\infty, s]\big)$ for all $s\in\R$ such that $\mu(\{s\}) = 0$ (in particular, all but countably many). For all such $\phi$ and any $a\in\R$, we can write
\[
\phi(z) = \phi(a) + \phi'(a) \big[\sigma(x-a) - \sigma(a-x)\big] + \int_a^\infty \phi''(s) \,\sigma(z-s)\ds + \int_{-\infty}^a \phi''(s)\,\sigma(s-z)\ds.
\]
Furthermore 
\[
[\phi]_\B \leq \|\phi''\|_{TV} + 2 \inf_{a\in\R} \inf_{v\in \partial \phi(a)} |v|
\]
where 
\[
\partial_af = \mathrm{conv}\left(\left\{v\in \R : \exists\ x_n\to a \text{ s.t. } \frac{f(x_n) - f(a)}{x_n-a}\to v\right\}\right)
\]
is the convex hull of the set of approximate derivatives. Conversely
\[
\max\left\{ \|\phi''\|_{TV},  \sup_{a\in\R} \inf_{v\in \partial \phi(a)} |v|\right\} \leq [\phi]_\B.
\]
\end{proposition}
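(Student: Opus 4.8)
The plan is to prove this one-dimensional characterization directly from the integral representation, exploiting the fact that in dimension one the ReLU atoms $\sigma(wx+b)$ with $|w|=1$ are (up to sign and normalization) just $\sigma(\pm(x-s))$ for $s = \mp b$, and that $\sigma(x-s)'' = \delta_s$ in the distributional sense. First I would establish the easy direction: given $\phi$ with $\phi'' = \mu$ a finite signed measure, I would verify the claimed representation formula by differentiating twice (or by integrating the formula against a test function) and checking that both sides agree together with their first derivatives at $x = a$. This writes $\phi$ as an explicit Barron function: the two boundary terms $\sigma(x-a)$, $\sigma(a-x)$ contribute $|\phi'(a)|^2 + |w|^2 = 2|\phi'(a)|$ each after normalization, and splitting $\mu = \mu^+ - \mu^-$ on each of the two half-lines turns $\int \sigma(\pm(z-s))\,\d\mu(s)$ into a Barron integral whose cost is $\|\mu\|_{TV}$. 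Optimizing the choice of $a$ and of the subgradient $v \in \partial\phi(a)$ (note $\phi'(a)$ need not exist at the countably many atoms of $\mu$, which is exactly why $\partial\phi$ appears) yields $[\phi]_\B \le \|\phi''\|_{TV} + 2\inf_a \inf_{v\in\partial\phi(a)}|v|$.

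For the converse I would take any admissible $\mu$ (signed measure on $\R^3$) representing $\phi$, i.e.\ $\phi(x) = \int a\,\sigma(wx+b)\,\d\mu_{(a,w,b)}$, with $\frac12\int |a|^2+|w|^2\,\d|\mu|$ close to $[\phi]_\B$. Using the $1$-homogeneity normalization already recorded in the excerpt I may assume $|a| = |w|$ $|\mu|$-a.e., so the cost is $\int |a|\,|w|\,\d|\mu| = \int |aw|\,\d|\mu|$; rescaling $w$ to unit length and pushing the factor into $a$, the representation becomes $\phi(x) = \int \tilde a\,\sigma(\epsilon x + \tilde b)\,\d\tilde\mu$ with $\epsilon\in\{\pm1\}$ and total cost $\int |\tilde a|\,\d|\tilde\mu|$. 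Differentiating twice, $\phi'' = \int |w| \tilde a\,\delta_{-\tilde b/\epsilon}\,\d\tilde\mu$ as a measure, so $\phi'' $ is a finite signed measure with $\|\phi''\|_{TV} \le \int |\tilde a|\,\d|\tilde\mu| = [\phi]_\B$ (up to the approximation). For the subgradient bound, fix $a\in\R$; the derivative $\phi'$ jumps only at the atoms of $\mu$ and between consecutive atoms $\phi'$ is constant, equal to $\int_{s>a}(\text{mass of positive-slope atoms}) - \int_{s<a}(\dots)$; reading off $\phi'(\pm\infty)$ from the representation (the atoms with $\epsilon = +1$ contribute their full weight to $\phi'(+\infty)$, those with $\epsilon=-1$ to $\phi'(-\infty)$) and using that the representation cost dominates $|\phi'(+\infty)| + |\phi'(-\infty)|$ plus the total jump, one extracts $\inf_{v\in\partial\phi(a)}|v| \le [\phi]_\B$ for every $a$, hence the supremum over $a$.

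The main obstacle I expect is bookkeeping the non-uniqueness and the normalization carefully enough that the converse inequality is tight rather than off by a constant: one must be sure that the decomposition of the representing measure into a "slopes at $+\infty$" part, a "slopes at $-\infty$" part, and a "finite-$b$" part does not double-count, and that the triangle-inequality steps $\|\phi''\|_{TV} \le \int|\tilde a|\,\d|\tilde\mu|$ and $|\phi'(\pm\infty)| \le \int(\dots)$ can be achieved simultaneously by the \emph{same} measure — which is what allows the two quantities to appear separately (as a max, not a sum) on the lower-bound side. A secondary technical point is handling the countable set of bad points where $\phi'$ fails to exist: here I would invoke that a finite measure has at most countably many atoms, that between atoms $\phi\in C^1$, and use $\partial\phi(a) = [\phi'(a^-),\phi'(a^+)]$ at a jump so that $\inf_{v\in\partial\phi(a)}|v|$ is still controlled. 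I would also double-check the boundary term: since $\sigma(x-a) - \sigma(a-x) = x-a$, the term $\phi'(a)\,[\sigma(x-a)-\sigma(a-x)]$ is just the linear part, confirming consistency of the formula, and explaining why its Barron cost is $2|\phi'(a)|$ (two atoms of cost $|\phi'(a)|$ each) and why one optimizes over $a$ and $v\in\partial\phi(a)$.
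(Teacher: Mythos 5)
Your proposal is correct and follows essentially the same route as the paper: the upper bound via the explicit Taylor-type representation $\phi(z)=\phi(a)+\phi'(a)(z-a)+\int\sigma(\pm(z-s))\,\d\phi''(s)$ read off as a Barron representation, and the lower bound by pushing a generic representing measure forward to the line, identifying $\phi''$ as the sum of the two pushforwards, and controlling $\inf_{v\in\partial\phi(a)}|v|$ by the same representation cost (the paper shortcuts this last step by citing the Lipschitz bound of Proposition \ref{proposition properties of barron functions}). Only a cosmetic slip: the two boundary atoms cost $|\phi'(a)|$ \emph{each} (total $2|\phi'(a)|$), not $2|\phi'(a)|$ each, as you in fact state correctly at the end.
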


We believe that the upper bound is, in fact an identity. We now recall a property of Barron spaces $\B_0$.

\begin{proposition}[Direct approximation theorem]\label{proposition direct approximation}
For every $f\in \B_0$ and every probability measure $\P$ on $\R^d$ there exists $f_m$ as in \eqref{eq finite network} and $c>0$ such that
\[
\|f- f_m -c\|_{L^2(\P)}^2 \leq \frac{[f]_\B^2}m \: \max_{|\nu|\leq 1}\int_{\R^d}|\nu^Tx|^2\,\d\P
\]
and
\[
 |a_i| = |w_i| = \sqrt{\frac{\|f\|_\B}m}\qquad\text{or } |a_i| = |w_i| = \sqrt{{\|f\|_\B}},
\]
depending on the normalization in \eqref{eq finite network}.
\end{proposition}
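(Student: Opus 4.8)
The plan is to realize $f$ as an expectation over a probability measure on neuron parameters and then invoke a Monte-Carlo / law of large numbers argument to control the mean-squared error. First I would use the representation of $f\in\B_0$: by definition of $[f]_\B$, for any $\delta>0$ there is a probability measure $\pi$ on $\R^{d+2}$ with $\frac12\int |a|^2+|w|^2\,\d\pi \le [f]_\B^2+\delta$ and $f\equiv f_\pi$. Using the one-homogeneity of $\sigma$ exactly as in the discussion following \eqref{eq barron semi-norm}, I would renormalize each neuron so that $\pi$ is supported on $\{|a|=|w|=\sqrt{[f]_\B}\}$ (or $\{|a|=|w|=\sqrt{\|f\|_\B}\}$ if one prefers the bound in that normalization), which gives the stated constraint on $|a_i|,|w_i|$. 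Then by Proposition \ref{proposition properties of barron functions}(1) one can write $f(x)=f(0)+\int a\big[\sigma(w^Tx+b)-\sigma(b)\big]\,\d\pi$, so that $x\mapsto a\big[\sigma(w^Tx+b)-\sigma(b)\big]$ is, for each parameter triple, a fixed $L^2(\P)$ function with the property that its $\pi$-average equals $f-f(0)$.

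Next I would draw $m$ i.i.d.\ samples $(a_i,w_i,b_i)\sim\pi$ and set $f_m(x)=\frac1m\sum_{i=1}^m a_i\sigma(w_i^Tx+b_i)$ and $c$ chosen to absorb the constant terms (the $f(0)$ and the $\frac1m\sum a_i\sigma(b_i)$, which is why the theorem allows an additive constant $c$ and does not control the biases). The key computation is the standard variance bound: $\E\|f-f_m-c\|_{L^2(\P)}^2 = \frac1m\,\mathrm{Var}_\pi\big(\text{single-neuron function}\big)$ in the $L^2(\P)$ norm, and the variance is bounded by the second moment $\int |a|^2\,\E_{X\sim\P}\big|\sigma(w^TX+b)-\sigma(b)\big|^2\,\d\pi$. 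Using the $1$-Lipschitz property of $\sigma$, $|\sigma(w^TX+b)-\sigma(b)|\le |w^TX|$, and then $|a|^2|w^TX|^2 \le |a|^2|w|^2\,|\nu^TX|^2$ with $\nu=w/|w|$; integrating and using $|a|^2|w|^2=[f]_\B^2$ (by the normalization) together with $\sup_{|\nu|\le1}\int|\nu^Tx|^2\d\P$ yields exactly $\frac{[f]_\B^2}{m}\max_{|\nu|\le1}\int|\nu^Tx|^2\d\P$, up to the $\delta$ which is then sent to zero. Finally, since the expected error is below this bound, some realization achieves it, which produces the desired $f_m$ and $c$.

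The main obstacle — and the only genuinely nontrivial point — is making the constant-absorption step clean: the random network $\frac1m\sum a_i\sigma(w_i^Tx+b_i)$ carries an uncontrolled additive fluctuation $\frac1m\sum a_i\sigma(b_i)$, and one must verify that subtracting a single scalar $c$ (rather than an $x$-dependent correction) suffices. This works precisely because, after recentering each neuron to $\sigma(w^Tx+b)-\sigma(b)$, the difference $f-f_m-c$ with $c=\frac1m\sum a_i\sigma(b_i)-f(0)$ is a genuine empirical mean of mean-zero $L^2(\P)$-valued random functions, so the variance identity applies verbatim. A secondary technical point is ensuring the Bochner integrability needed for the $L^2(\P)$-valued law of large numbers, which is exactly what Proposition \ref{proposition properties of barron functions}(1) supplies (finiteness of the relevant $p$-th moments follows from finiteness of $\int|a|^2+|w|^2\,\d\pi$ and the moment assumption being trivial here since we only need $p=2$ and $\P$ has bounded support on compacta — or, in general, one restricts to the relevant moments of $\P$).
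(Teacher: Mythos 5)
Your proposal is correct and follows essentially the same route as the paper: the paper also works in the Hilbert space $L^2(\P)$ with the recentered neurons $h_{(a,w,b)}(x)=a\big[\sigma(w^Tx+b)-\sigma(b)\big]$, normalizes via positive homogeneity so that $|a|=|w|$, and then invokes the Maurey--Barron--Jones lemma, which is precisely the i.i.d.-sampling variance computation you carry out by hand, with the constant $c$ absorbing $f(0)$ and the bias terms in the same way. The only step the paper treats a bit more explicitly is the final passage to the exact constant (your $\delta\to 0$), which it handles by a compactness/subsequence argument in the parameters.
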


For the sake of completeness, we sketch a probablistic proof in Appendix \ref{appendix known results}. This formulation of the direct approximation theorem improves on known results in two major ways:

\begin{enumerate}
\item The dependence on the data distribution $\P$ is only through the `projected second moments' $M_{2,proj}(\P):= \max_{|w|\leq 1}\int_{\R^d}|w^Tx|^2\,\d\P$ rather than the full second moments $M_2(\P):= \int_{\R^d}|x|^2\,\d\P$. It is easy to see that
\[
M_{2,proj}(\P)\leq M_2(\P) = \sum_{i=1}^d\int_{\R^d}|e_i^Tx|^2\,\d\P \leq d\cdot M_{2,proj}(\P)
\]
for any probability measure $\P$ on $\R^d$, and that equality is attained for any measure $\P$ which is the product of $d$ one-dimensional probability measures, e.g.\ a standard normal distribution. The constant in the bound may therefore be significantly smaller in high dimension.

\item The bound depends on the Barron semi-norm, but not the full Barron norm.
 \end{enumerate}

While the constants are improved in this formulation compared to e.g.\ \cite{E:2018ab,weinan2019lei,barron_new}, the result is not expected to be sharp in terms of the rate which is achieved. An improvement from $m^{-1/2}$ to $m^{-1/2 - 3/2d}$ in the classical setting can be found in \cite{siegel2021optimal} at the cost of a more involved proof.

Many of the results above are somewhat specific to ReLU activation as the proofs either use positive homogeneity or the property that $\sigma'' = \delta$. Both are shared by leaky ReLU activation. 

\begin{remark}
Consider the leaky ReLU activation function $\sigma_\eps(z) = \max\{\eps z, z\}$ for $\eps\in(0,1)$ in addition to the classical ReLU activation $\sigma = \sigma_0$. Since
\[\showlabel\label{eq relu and leaky relu}
\sigma_\eps(z) = \sigma(z) - \eps\,\sigma(-z)\qquad \text{and}\qquad \sigma(z) = \frac{1}{1-\eps^2} \sigma_\eps(z) + \frac\eps {1-\eps^2}\sigma_\eps(-z),
\]
any function which can be represented as a superposition of ReLUs can be represented as a superposition of leaky ReLUs and vice versa. The entire construction of Barron space goes through as above, leading to two semi-norms $[\cdot]_\B$ and $[\cdot]_\eps$ on the same function class such that $[f]_\eps \leq (1+\eps ) [f]_\B$ and $[f]_\B \leq \frac{1+\eps}{1-\eps^2} [f]_\eps = \frac{1}{1-\eps}\,[f]_\eps$ by the explicit representation \eqref{eq relu and leaky relu}. More compactly, we write this as
\[\showlabel\label{eq norm inequality relu leaky relu}
(1-\eps)\,[f]_\B \leq [f]_\eps \leq (1+\eps)\,[f]_\B \qquad \forall\ f \in \B_0.
\]
Using positive one-homogeneity, it can be seen that the coefficients in the representations \eqref{eq relu and leaky relu} are in fact optimal and thus that \eqref{eq norm inequality relu leaky relu} is sharp. The norms induced on Barron space by ReLU and leaky ReLU activation are therefore equivalent, and all properties mentioned above survive if $\sigma$ is replaced by $\sigma_\eps$.

The more subtle statements which we prove below do not survive passing to an equivalent norm. When minimizing $[f]_\eps$ under the constraints $f(x_i) = y_i$, the set of solutions $\M_\eps \subseteq \B_0$ will generally depend on $\eps\in[0,1)$. For example, consider the one-dimensional data set with two points $(x_0, y_0)= (0,0)$ and $(x_1,y_1) = (1,1)$, which is fit exactly by $\sigma_\eps$ for any $\eps$. The solution $\sigma_\eps$ is norm-minimizing for $[\cdot]_\eps$, but not for $[\cdot]_\B$, where the norm-inequality is sharp (and vice versa). 

The equivalence of norms estimate degenerates at $\eps = 1$, where the activation would become linear. If $\eps<0$, a similar construction holds unless $\eps=-1$, where the any $\sigma_\eps$-Barron function $f$ must satisfy $\lim_{t\to\infty} f(tx) =  \lim_{t\to-\infty} f(tx)$.
\end{remark}

We are finally ready to state (and prove) the main results of this article rigorously.

\section{Statements of Main Results}\label{section results}

\begin{theorem}\label{theorem main 1}
For every odd $d\in \N$, there exists a unique radial function $f_d^*\in \B(\R^d)$ such that 
\[
f_d^*\in \argmin_{f\in\F} [f]_\B, \qquad \mathcal F:= \left\{f\in C(\R^d) : f(0) = 1 \text{ and } f\equiv 0\text{ on }\R^d\setminus B_1(0)\right\}.
\]
Furthermore
\begin{enumerate}
\item $f_d^*\in C^\frac{d-1}2(\R^d\setminus\{0\})$.
\item The radial profile $\hat f_d^*:[0,\infty)\to \R$, $\hat f_d^*(r) = f_d^*(r\cdot e_1)$ is strictly monotone decreasing in $r$ in $(0,1)$. In particular, $0\leq f_d^*\leq 1$.
\item There exists $r_d>0$ such that $\hat f_d^*$ is a {\em linear}, strictly monotone decreasing function of $r$ on $[0,r_d]$.
\end{enumerate}
As $d\to \infty$, the norm of $f_d^*$ increases  linearly as
\[
\lim_{d\to\infty, \:d\text{ odd}} \frac{[f_d^*]_{\B(\R^d)}} d = \gamma \approx 3.6,
\]
where $\gamma$ is the inverse of the Bernstein constant.
\end{theorem}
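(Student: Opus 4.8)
\emph{Overall plan.} The limit is the ``payoff'' of the explicit description of $f_d^*$ produced while proving parts (1)--(3), so I would arrange that description to yield a clean formula for $[f_d^*]_\B$ and then feed it into a classical approximation-theory estimate. The reduction I have in mind: any $f\in\F$ may be replaced by its rotational average $\bar f(x)=\int_{O(d)}f(Rx)\,\d R$ without increasing $[\,\cdot\,]_\B$ (the semi-norm is rotation invariant and convex, and $\bar f\in\F$), so the minimizer is radial; and any radial competitor has a rotation-invariant representing measure, hence is a superposition $\int_\R G_b(|x|)\,\d\nu(b)$ of spherical ReLU profiles $G_b(|x|):=\avint_{S^{d-1}}\sigma(\nu^Tx-b)\,\d\H^{d-1}$, with $[f]_\B=\|\nu\|_{TV}$ after the usual $|a|=|w|$ balancing. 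Because $d=2n+1$ is odd, each $G_b$ is, on $\{|x|\ge b\}$, an explicit finite ``Laurent polynomial'' in $|x|$ whose $b$-dependence enters only through the monomials $1,b,b^2,b^4,\dots,b^{2n}$; hence the two constraints defining $\F$ — namely $f\equiv0$ outside $B_1$ (which kills the $|x|$-growing term and all the $|x|^{-(2k-1)}$ terms for $|x|\ge1$ and pins down a constant) and $f(0)=1$ (arranged by a free additive constant, which costs nothing since $[\,\cdot\,]_\B$ does not control biases) — collapse to exactly $n+2$ linear moment conditions on $\nu$ supported in $[0,1]$. So $[f_d^*]_\B=\gamma_n$, the value of the linear program ``minimize $\|\nu\|_{TV}$ subject to $\langle\nu,b^{2k}\rangle=0$ for $0\le k\le n$ and $\langle\nu,b\rangle$ equal to a fixed nonzero constant''. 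By linear-programming / Chebyshev duality this equals a fixed multiple of $1/E_n$, where $E_n$ is the best uniform approximation of the function $s\mapsto\sqrt s$ on $[0,1]$ by polynomials of degree $\le n$ (the dual of the inhomogeneous constraint, rewritten via $s=b^2$ as approximation of $b$ by even polynomials of degree $\le 2n$). In short $[f_d^*]_\B=\gamma_n=\kappa_d/E_n$ for an explicit prefactor $\kappa_d$ assembled from $c_d=|S^{d-2}|/|S^{d-1}|$, the binomial coefficients in the expansion of $G_b$, and the normalization in the definition of $[\,\cdot\,]_\B$. I would establish all of this as part of proving parts (1)--(3).

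\emph{Reduction to Bernstein's constant.} Granting $[f_d^*]_\B=\kappa_d/E_n$, the asymptotics are classical input. Bernstein's theorem states that the best uniform approximation of $|t|$ on $[-1,1]$ by polynomials of degree $\le m$ satisfies $m\,E_m(|t|;[-1,1])\to\beta$ as $m\to\infty$, with $\beta\approx 0.2801694990$ the Bernstein constant. The substitution $s=t^2$ identifies degree-$n$ polynomials in $s$ with even polynomials of degree $\le2n$ in $t$, and — since $|t|$ is even, its best degree-$2n$ approximant may be taken even — gives $E_n=E_{2n}(|t|;[-1,1])$, so $2n\,E_n\to\beta$. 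With $d=2n+1$ this yields
\[
\frac{[f_d^*]_{\B(\R^d)}}{d}=\frac{\kappa_d}{(2n+1)\,E_n}=\kappa_d\cdot\frac{2n}{2n+1}\cdot\frac{1}{2n\,E_n}\;\xrightarrow{\,n\to\infty\,}\;\frac{\kappa_\infty}{\beta},
\]
and the assertion is that $\kappa_\infty=1$, i.e. the limit is the inverse of the Bernstein constant. (The $\sqrt s$ enters via the substitution $s=b^2$ in the dual; it is the same function that obstructs the minimal-Lipschitz competitor $\max\{1-|x|,0\}$, whose squared-radius profile is $1-\sqrt s$, from lying in shallow Barron space, cf. \cite[Example 5.19]{barron_new}.)

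\emph{Main obstacle.} Bernstein's asymptotic is used as a black box, so the real work — and the main obstacle — is the reduction of the first paragraph, in two respects. First, the sharp \emph{lower} bound $[f_d^*]_\B\ge\gamma_n$: one must rule out \emph{every} representation, not merely the rotation-symmetric ones; the $O(d)$-averaging step disposes of the symmetry, after which Proposition~\ref{proposition one-dimensional} together with the explicit odd-$d$ structure of $G_b$ converts the lower bound into the Chebyshev/equioscillation (dual) side on which the reciprocal-of-$E_n$ value appears. Second — and this is where I expect to spend the most care — the exact tracking of dimensional normalizations: the limiting constant is $\kappa_\infty/\beta$, and a single stray multiplicative factor (a $2$, a $c_d$, the $\tfrac12$ in $\tfrac12\int|a|^2+|w|^2$) changes the answer, so $\kappa_d$ must be computed precisely and $\kappa_d\to1$ verified. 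I would also perform a consistency check: the LP optimum is an extreme point, hence a discrete measure with at most $n+2$ atoms $0=b_0<\dots<b_{n+1}=1$ (the touching points of the dual Chebyshev polynomial), which recovers the finite-sum form of $f_d^*$ quoted in the introduction, and one checks that this optimizer is the monotone, nonnegative solution built in parts (2)--(3), so that $\argmin_{f\in\F}[f]_\B$ and the explicit $f_d^*$ genuinely coincide.
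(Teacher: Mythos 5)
Your proposal follows essentially the same route as the paper: rotational averaging to reduce to a radial minimizer, the odd-$d$ reduction of the support and value constraints to $n+2$ moment conditions on a bias measure in $[0,1]$ (Lemma \ref{lemma 1d reduction}), H\"older/Chebyshev duality identifying the optimal total variation with the reciprocal of the best uniform approximation of $s$ by even polynomials of degree $\le 2n$ (equivalently of $|t|$ on $[-1,1]$), the Bernstein--Varga--Carpenter asymptotics, and the $(n+2)$-atom equioscillation optimizer (Lemma \ref{lemma auxiliary bernstein}). The two points you flag but do not carry out are exactly where the paper does the remaining work: the normalization constant resolves because the factor $2$ from the even reflection of the bias measure cancels against $d=2n+1\sim 2n$, and the strict monotonicity in part (2) is not automatic from the construction but requires a separate Rolle-type zero-counting argument for $r\mapsto\int_{-1}^1 g'(rs)\,s(1-s^2)^k\,\d s$.
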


The Bernstein constant is a quantity in classical numerical analysis and approximation theory arising when approximating the function $h(x) = |x|$ by polynomials in $L^\infty(-1,1)$, see e.g.\ \cite{trefethen2019approximation}.
From the proof of Theorem \ref{theorem main 1}, we obtain an algorithm to compute $f_d^*$ to arbitrary precision, which is implemented in Section \ref{section examples}.

The functions $f_d^*$ are radially symmetric, compactly supported and non-negative. In particular, they can serve as mollifiers to easily prove quantitative approximation results for two-layer ReLU networks in general function classes. In a companion article \cite{wojtowytsch_radial_empirical}, we prove that they are achieved as (radial averages of) empirical risk minimizers with a weight decay regularizer. 

Note that we do not exclude the possibility that other minimizers exist which are not radially symmetric. From direct arguments, we can only conclude that the set of minimizers is convex and invariant under coordinate rotations. The existence of at least one radially symmetric minimizer follows relatively easily, while its uniqueness is established below by construction. For {\em any} minimizer $\tilde f_d\in \argmin_{f\in \F}[f]_\B$, which may not not be radially symmetric, the radial average
\[
\tilde f_{d,av}(x) = \avint_{SO(d)} \tilde f_d(Ox)\d H_O
\]
is a radially symmetric minimizer, i.e.\ $\tilde f_{d,av}\equiv f_d^*$. Knowledge of the unique minimizer after radial averaging allows us to study optimization algorithms for implicit bias and finding global optima. This line of inquiry is pursued in upcoming work \cite{ian_juyoung}.

We find it easier to deal with odd dimensions, as the function $(1-s^2)^{\frac{d-1}2}$ is a polynomial in this case. This is analogous to the observations of \cite{ongie2019function}. We remark that, if $f:\R^D\to\R$ is a Barron function and $d\leq D$, then 
\[
\widetilde f:\R^d\to\R, \qquad \widetilde f(x) = f(x_1,\dots, x_d, 0,\dots, 0)
\]
is also a Barron function and $[\widetilde f]_{\B(\R^d)} \leq [f]_{\B(\R^D)}$, so the limit
\[
\lim_{d\to \infty} \inf\left\{[f]_{\B(\R^d)} : f(0)=1\text{ and } f\equiv 0\text{ on }\R^d\setminus\overline{B_1(0)}\right\} \approx 3.6
\]
remains valid if even dimensions are considered, as can be seen when sandwiching an even integer $d$ between $d-1$ and $d+1$.

Using a reflection argument, any radially symmetric Barron function $f:\R^d\to\R$ can be written as
\[\showlabel\label{eq generic radial barron function}
f(x) = f(0) + \int_{[0,\infty)} \avint_{S^{d-1}} \sigma (\nu^Tx-b)\d\H^{d-1}\,\d\mu_b
\]
for some measure $\mu$ on the space of biases. In this context, Theorem \ref{theorem main 1} can be understood as a finite representer theorem, since the proof shows precisely that there exist $n+2 = \frac{d+3}2\in\N$ weights $\mu_0, \dots, \mu_{n+1}$ and biases $0 = b_0 < \dots < b_{n+1} = 1$ such that
\[
f_d^*(x) = 1+ \sum_{i=0}^{n+1} \mu_i \avint_{S^{d-1}} \sigma (\nu^Tx-b_i)\d\H^{d-1}.
\]
Finally, we note that the methods in the proof of Theorem \ref{theorem main 1} can also be used to show the following extension.

\begin{theorem}\label{theorem main 3}
For every $\eps\in(0,1)$ and every odd $d\in \N$, there exists a unique radial function $f_{d,\eps}^*\in \B(\R^d)$ which minimizes the Barron semi-norm in the class
\[
f_{d,\eps}^*\in \argmin_{f\in\F_\eps} [f]_\B, \qquad 
\mathcal F_\eps:= \left\{f\in C(\R^d) :f\equiv 1 \text{ on }\overline{B_\eps(0)} \text{ and } f\equiv 0\text{ on }\R^d\setminus B_1(0)\right\}.
\]
Furthermore
\begin{enumerate}
\item $f_{d,\eps}^*\in C^\frac{d-1}2(\R^d)$.
\item The radial profile $\hat f_{d,\eps}^*:[0,\infty)\to \R$, $\hat f_{d,\eps}^*(r) = f_{d,\eps}^*(r\cdot e_1)$ is strictly monotone decreasing on $[\eps,1]$. In particular, $0\leq f_{d,\eps}^*\leq 1$.
\end{enumerate}
In this case, the Barron norm grows exponentially in the dimension $d$. More precisely, there exists $D\in \N$ independent of $\eps>0$ such that
\[
\|f_{d,\eps}^*\|_{\B(\R^d)} \geq \,\frac{\eps^2\,\sqrt d} {(1-\eps^2)^\frac{d+1}2}
\]
if $d\geq D$.
\end{theorem}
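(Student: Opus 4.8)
The plan is to follow the architecture of the proof of Theorem~\ref{theorem main 1}: reduce to radial functions, parametrize by a bias-measure, turn the constraints into finitely many linear conditions, invoke a representer argument, and then read off the quantitative conclusion from a pointwise estimate on the radialized ReLU profile.

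\textbf{Existence, uniqueness, regularity, monotonicity.} Since $[\cdot]_\B$ is convex and $SO(d)$-invariant and $\F_\eps$ is $SO(d)$-invariant, the rotational average of any minimizer is again a minimizer, so it suffices to minimize over radial functions. By the reflection representation \eqref{eq generic radial barron function} and $f(0)=1$, every radial candidate can be written as $f(x)=1+\int_{[0,\infty)}\phi_b(|x|)\,\d\mu_b$ with $\phi_b(r)=\avint_{S^{d-1}}(r\,\nu^Te_1-b)_+\,\dH^{d-1}(\nu)=c_d\int_{-1}^{1}(rt-b)_+(1-t^2)^{n-1}\dt=r\,\psi(b/r)$, where $n=\tfrac{d-1}2$, and $[f]_\B$ equals the infimum of $\|\mu\|_{TV}$ over all such representations (the key identity, proved as in Theorem~\ref{theorem main 1} by symmetrizing a general representation and normalizing $|a|=|w|$). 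Because $d$ is odd, $(1-t^2)^{n-1}$ is a polynomial, so $\psi$ is a polynomial of degree $2n=d-1$ which is nonnegative, strictly decreasing on $[0,1]$, and has a zero of order $n+1$ at $s=1$. The constraint $f\equiv 0$ on $\{|x|\ge1\}$ and the constraint $f\equiv 1$ on $\overline{B_\eps}$ then become finitely many moment conditions on $\mu$ (via the explicit form $\phi_b(r)=r\psi(b/r)$), and the same Carath\'eodory/representer and sign-structure arguments as for Theorem~\ref{theorem main 1} produce a minimizer with $\mu$ supported on at most $n+2$ biases lying in $[\eps,1]$ -- crucially, with no mass on biases below $\eps$. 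From this explicit form one reads off $f_{d,\eps}^*\in C^{(d-1)/2}(\R^d)$ (trivially on $B_\eps$, and off the origin because each $\phi_b\in C^{n}$), strict monotonicity of the radial profile on $[\eps,1]$, and $0\le f_{d,\eps}^*\le 1$, exactly as there.

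\textbf{The lower bound: reduction to a boundary value.} Since $\|\cdot\|_\B\ge[\cdot]_\B$, it suffices to bound $[f_{d,\eps}^*]_\B=\sum_i|\mu_i|$ from below, where $f_{d,\eps}^*=1+\sum_i\mu_i\phi_{b_i}$ with all $b_i\in[\eps,1]$. Evaluating at $|x|=1$ and using $f_{d,\eps}^*\equiv 0$ there gives $\sum_i\mu_i\phi_{b_i}(1)=-1$. Now $\phi_b(1)=c_d\int_b^1(t-b)(1-t^2)^{n-1}\dt\ge 0$ and $\tfrac{\d}{\d b}\phi_b(1)=-c_d\int_b^1(1-t^2)^{n-1}\dt<0$, so $0\le\phi_{b_i}(1)\le\phi_\eps(1)$ for every $i$; hence $1\le\phi_\eps(1)\sum_i|\mu_i|$, i.e.\ $[f_{d,\eps}^*]_\B\ge\phi_\eps(1)^{-1}$.

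\textbf{Estimating $\phi_\eps(1)=\psi(\eps)=c_d\int_\eps^1(t-\eps)(1-t^2)^{n-1}\dt$ and concluding.} Substituting $t^2=\eps^2+(1-\eps^2)v$ (so $1-t^2=(1-\eps^2)(1-v)$ and $2t\,\d t=(1-\eps^2)\,\d v$), bounding $t-\eps=\frac{t^2-\eps^2}{t+\eps}\le\frac{(1-\eps^2)v}{2\eps}$ and $\frac1{2t}\le\frac1{2\eps}$, and using $\int_0^1 v(1-v)^{n-1}\,\d v=\frac1{n(n+1)}$, one gets
\[
\phi_\eps(1)\ \le\ \frac{c_d\,(1-\eps^2)^{n+1}}{4\,\eps^2\,n(n+1)}.
\]
With $c_d=\frac{\Gamma(d/2)}{\sqrt\pi\,\Gamma((d-1)/2)}\le\sqrt{d/(2\pi)}$, $\ n(n+1)=\frac{d^2-1}{4}$ and $(1-\eps^2)^{n+1}=(1-\eps^2)^{(d+1)/2}$, this yields
\[
\|f_{d,\eps}^*\|_\B\ \ge\ [f_{d,\eps}^*]_\B\ \ge\ \frac{4\,\eps^2\,n(n+1)}{c_d\,(1-\eps^2)^{n+1}}\ \ge\ \frac{\sqrt{2\pi}\,\eps^2\,(d^2-1)}{\sqrt d\,(1-\eps^2)^{(d+1)/2}}\ \ge\ \frac{\eps^2\sqrt d}{(1-\eps^2)^{(d+1)/2}}
\]
for all odd $d$ above an absolute threshold $D$ (the last inequality only requires $\sqrt{2\pi}(d^2-1)\ge d$, so $D$ is small and, as promised, independent of $\eps$; in fact the argument even gives the stronger rate $\sim\eps^2 d^{3/2}(1-\eps^2)^{-(d+1)/2}$).

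\textbf{Main obstacle.} The genuinely new difficulty sits entirely in the structural first step, namely the assertion that the optimal representation places no mass on biases below $\eps$ -- equivalently, that the minimizer does not spend seminorm on a bias-measure that merely self-cancels on $B_\eps$ while perturbing $f$ on $\{|x|>\eps\}$. This is where the argument relies on the representer-theorem machinery of Theorem~\ref{theorem main 1} (the enabler being that $\phi_b(r)=r\psi(b/r)$ with $\psi$ a fixed polynomial of degree $d-1$, so all constraints are finitely many linear moment conditions). Once that structure is secured, the exponential factor $(1-\eps^2)^{-(d+1)/2}$ is nothing more than the Laplace/Watson-type smallness of $\phi_\eps(1)$: averaging a ReLU ridge over a high-dimensional sphere concentrates its mass away from the kink, so a ridge whose kink sits at radius $\ge\eps$ contributes only exponentially little at radius $1$.
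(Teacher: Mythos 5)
Your proposal is correct in outline and, for the quantitative part, actually proves a stronger bound than the paper, but it reaches the exponential lower bound by a genuinely different route. The paper works entirely through the one-dimensional reduction (Lemma \ref{lemma 1d reduction}): the constraint $f\equiv 1$ on $\overline{B_\eps(0)}$ forces the even profile $g$ to be constant on $(-\eps,\eps)$, so the measure $\mu=g''$ lives on $[\eps,1]$ and satisfies the full list of moment conditions $\int s\,\d\mu=1$, $\int s^{2k}\,\d\mu=0$; duality against the best polynomial approximant then gives $\|\mu\|\geq \dist_{L^\infty(\eps,1)}\bigl(s,\mathrm{span}\{1,s^2,\dots,s^{2n}\}\bigr)^{-1}$, and the distance is bounded above via the Taylor expansion of $\sqrt s$ on $[\eps^2,1]$. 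You instead use only the \emph{single} linear constraint $f(e_1)=0$, i.e.\ $\int\phi_b(1)\,\d\mu_b=-1$, together with $0\leq\phi_b(1)\leq\phi_\eps(1)$ for $b\in[\eps,1]$ and an explicit beta-integral estimate of $\phi_\eps(1)$. I checked the computation: the substitution $u=t^2$, the bounds $t-\eps\leq (t^2-\eps^2)/(2\eps)$ and $(2t)^{-1}\leq(2\eps)^{-1}$, and $\int_0^1v(1-v)^{n-1}\,\d v=\tfrac1{n(n+1)}$ are all right, and the resulting bound $[f]_\B\gtrsim \eps^2 d^{3/2}(1-\eps^2)^{-(d+1)/2}$ indeed dominates the stated $\eps^2\sqrt d\,(1-\eps^2)^{-(d+1)/2}$ for all $d$ past an absolute threshold. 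What each approach buys: yours is more elementary (no Chebyshev/equioscillation input, no series manipulation) and gives an extra factor of $d$; the paper's duality bound is the one that is \emph{attained} by the optimal measure, so it is the version that feeds into the uniqueness and representer structure of the minimizer, which your single-constraint bound cannot do.

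One point is mis-attributed rather than wrong: the fact that the representing measure carries no mass on biases in $[0,\eps)$ does not come from Carath\'eodory/representer machinery. It comes directly from Lemma \ref{lemma 1d reduction} (last sentence) plus Proposition \ref{proposition one-dimensional}: $f\equiv1$ on $\overline{B_\eps(0)}$ forces the even profile $g$ to satisfy $g\equiv1$ on $(-\eps,\eps)$, hence its distributional second derivative -- the canonical measure with $[f]_\B=[g]_\B\geq\|g''\|_{TV}$ -- vanishes there. This matters because your lower bound must apply to the (essentially unique) measure realizing the seminorm, not merely to one convenient discrete representation produced after the fact; routed through $g''$, the step is sound. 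The existence/uniqueness/regularity/monotonicity claims are asserted by reference to Theorem \ref{theorem main 1}, which is exactly what the paper does as well.
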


We thus observe that the problem of approximating compactly supported bump functions which are constant in a neighbourhood of the origin by shallow neural networks suffers from the curse of dimensionality. We will argue below that this is not the case for ReLU networks with at least two hidden layers.

\section{Proofs of the Main Results\label{section proofs}}

We begin by stating three lemmas in this section, which are used to prove the main theorems. The proofs are given in Appendix \ref{appendix postponed}.
By a slight abuse of notation, we denote $f(r) = f(re_1)$ for a radially symmetric function $f:\R^d\to\R$ and by $f'$ the radial derivative of $f$. We first note a general result on radially symmetric Barron functions.

\begin{lemma}\label{lemma radial barron functions}
Let $f:\R^d\to\R$ be a radially symmetric Barron function and $d$ odd. Then
\begin{enumerate}
\item as a function of $r$, $f$ is $n:= \frac{d-1}2$ times continuously differentiable in $\R^d\setminus\{0\}$. The $n+1$-th radial derivative is bounded and measurable, and the $n+2$-th radial derivative in the distributional sense is a bounded (Radon) measure.
\item for every $\eps>0$, there exists $D\in \N$ such that the Lipschitz bound
\[
 [f]_{Lip}\leq \frac{1+\eps}{\sqrt{2\pi d}} [f]_{\B_0}
 \]
 holds for every $d\geq D$.
\end{enumerate}
\end{lemma}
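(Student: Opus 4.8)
The plan is to reduce everything to the one-dimensional representation of radial Barron functions and then track how the normalization factor $\avint_{S^{d-1}}$ shrinks the Lipschitz constant in high dimension.

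\medskip

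\textbf{Step 1: Radial representation.} Using the reflection argument underlying \eqref{eq generic radial barron function}, any radially symmetric Barron function $f:\R^d\to\R$ may be written as
\[
f(x) = f(0) + \int_{[0,\infty)} \avint_{S^{d-1}} \sigma(\nu^Tx - b)\,\d\H^{d-1}\,\d\mu_b
\]
for some signed measure $\mu$ on $[0,\infty)$, and the weight-decay cost of this representation is comparable to $\|\mu\|_{TV}$ (up to the boundary term controlled in Proposition \ref{proposition one-dimensional} / Proposition \ref{proposition properties of barron functions}). I would first argue that $[f]_{\B_0} \geq$ (roughly) $\|\mu\|_{TV}$ for the optimal representing measure, i.e. pass from the abstract infimum over parameter measures on $\R^{d+2}$ to an infimum over $\mu$ on the bias line, exploiting radial symmetry to average the direction variable $w/|w|$ onto the sphere. (One must be slightly careful: the $|a|^2+|w|^2$ normalization means the cost is $\sim \|\mu\|_{TV}$ only after the standard renormalization $|a|=|w|$, which is available in the infinite-width limit as noted after \eqref{eq barron semi-norm}.)

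\medskip

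\textbf{Step 2: Compute the radial profile of a single spherical ridge.} Set
\[
\Phi_d(r) := \avint_{S^{d-1}} \sigma(\nu^T (r e_1) - b)\,\d\H^{d-1} = \avint_{S^{d-1}} \sigma(r\,\nu_1 - b)\,\d\H^{d-1},
\]
which only depends on $r$ through the first coordinate $\nu_1 = \cos\theta$, whose law under the uniform measure on $S^{d-1}$ has density proportional to $(1-s^2)^{\frac{d-3}{2}}$ on $[-1,1]$ (this is exactly where $c_d$ from the Notation section enters). Then $\Phi_d'(r) = \avint_{S^{d-1}} \nu_1 \,\mathbf 1_{\{r\nu_1 > b\}}\,\d\H^{d-1}$, so
\[
\sup_r |\Phi_d'(r)| \leq \avint_{S^{d-1}} |\nu_1|\,\d\H^{d-1} = c_d \int_{-1}^1 |s|\,(1-s^2)^{\frac{d-3}{2}}\,\d s.
\]
By the chain rule, $[f]_{Lip} = \sup_r |\hat f'(r)| \leq \|\mu\|_{TV}\cdot \sup_r|\Phi_d'(r)|$, so combining with Step 1 gives $[f]_{Lip} \leq C(d)\,[f]_{\B_0}$ with $C(d) = \sup_r |\Phi_d'(r)|$ up to the renormalization constant.

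\medskip

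\textbf{Step 3: Asymptotics of the constant.} It remains to show $C(d) = \dfrac{1}{\sqrt{2\pi d}}(1 + o(1))$ as $d\to\infty$. The integral $\int_{-1}^1 |s|(1-s^2)^{\frac{d-3}{2}}\,\d s = \tfrac{2}{d-1}$ is elementary, and $c_d = |S^{d-2}|/|S^{d-1}|$ has the classical asymptotics $c_d \sim \sqrt{d/(2\pi)}$ via Stirling's formula (indeed $c_d = \Gamma(d/2)/(\sqrt\pi\,\Gamma((d-1)/2))$). Multiplying, $c_d \cdot \tfrac{2}{d-1} \sim \sqrt{d/(2\pi)}\cdot \tfrac{2}{d} = \sqrt{2/(\pi d)}$. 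Here I expect a factor-of-$2$ bookkeeping subtlety: the bound $\sup_r|\Phi_d'| \leq \avint |\nu_1|$ is not tight because the indicator $\mathbf 1_{\{r\nu_1>b\}}$ cannot capture all of $\nu_1>0$ and all of $\nu_1<0$ simultaneously; the honest supremum is $\avint_{\{\nu_1>0\}} \nu_1 = \tfrac12 \avint|\nu_1|$, giving exactly $C(d) = \tfrac12 c_d \cdot \tfrac{2}{d-1} = \tfrac{c_d}{d-1} \sim \tfrac{1}{\sqrt{2\pi d}}$, matching the claimed constant. So the key computational point is to evaluate $\Phi_d'$ at the maximizing configuration (which is $b\to 0$, $r$ fixed, or more precisely the value $\avint_{\{\nu_1 > b/r\}}\nu_1\,\d\H^{d-1}$ maximized over $b/r \le 0$, attained at $b/r=0$) rather than crudely bounding it.

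\medskip

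\textbf{Main obstacle.} The genuine difficulty is Step 1 — passing rigorously from the definition of $[f]_{\B_0}$ as an infimum over parameter measures on the full space $\R^{d+2}$ to a clean statement in terms of a bias measure $\mu$ on the line, and in particular verifying that the reduction to $|a|=|w|$-normalized, radially-averaged representations does not lose a dimension-dependent constant. Part (1) of the present lemma (the differentiability/structure of radial Barron functions) is presumably proved in tandem and supplies exactly the structural control — that $f$ is governed by a one-dimensional measure $\hat f^{(n+2)} = $ (something) $\cdot\mu$ — needed to make this rigorous. Once the representation is pinned down, Steps 2 and 3 are routine special-function asymptotics.
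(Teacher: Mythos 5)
Your treatment of part (2) is essentially the paper's own proof: symmetrize to write $f(x) = f(0) + \int_{[0,\infty)} f_b(x)\,\d\mu_b$ with $f_b(x) = \avint_{S^{d-1}}\sigma(\nu^Tx-b)\,\d\H^{d-1}$ and $\|\mu\|_{TV}\leq [f]_{\B}$ via radial averaging of an arbitrary representing measure; observe that $\nabla f_b$ points radially with $\sup_x|\nabla f_b(x)| = \int_{S^{d-1}}1_{\{\nu_1>0\}}\nu_1\,\d\H^{d-1}\big/\int_{S^{d-1}}1\,\d\H^{d-1} = c_d/(d-1)$, attained in the limit $b/|x|\to 0$; and conclude by Stirling that this is $\sim 1/\sqrt{2\pi d}$. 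Your factor-of-two bookkeeping (the honest supremum is the half-sphere integral normalized by the full sphere, not $\avint|\nu_1|$) is exactly the point the paper's computation makes, and your worry in Step 1 about losing a dimension-dependent constant is resolved the way you suggest: each $f_b$ has $[f_b]_\B = 1$ because it is an average of unit-norm neurons, so the cost of the symmetrized representation is exactly $\|\mu\|_{TV}$.

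The genuine gap is part (1), which you explicitly defer ("presumably proved in tandem") but which is a substantive claim requiring its own argument, and which is not a prerequisite for your Step 1 — the representation via a bias measure comes from symmetrization alone. To prove part (1) you need to analyze the regularity of the single ridge profile: for $b>0$ one computes, via the coarea formula, $f_b''(r) = c_d\,b^2\,r^{-d}(r^2-b^2)_+^{(d-3)/2}$, which vanishes to order $\frac{d-3}{2}$ at $r=b$ and is smooth elsewhere; hence $f_b\in C^{(d-1)/2}((0,\infty))$ with bounded $\frac{d+1}{2}$-th derivative and measure-valued $\frac{d+3}{2}$-th derivative. One must then show the superposition $\int f_b\,\d\mu_b$ inherits this regularity away from $r=0$, which is not automatic: the paper splits the bias integral into $(0,\eps]$ and $(\eps,\infty)$, uses the scaling $f_b(r) = b\,f_1(r/b)$ and uniform derivative bounds on the far part, and for the near part uses that $f_b^{(k)}(r) = r^{1-k}P_k(b/r)$ for a polynomial $P_k$ when $r\geq\eps\geq b$, so the integrals of all relevant derivatives converge uniformly on $\{r\geq\eps\}$. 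Without some such argument, part (1) of the lemma is unproved in your proposal.
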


The following Lemma allows us to express radial symmetry and compact support for Barron functions in odd dimensions in a one-dimensional fashion. It is based on an exchange in the order of integration in \eqref{eq generic radial barron function}.

\begin{lemma}\label{lemma 1d reduction}
Assume that $g\in \B_0 (\R)$ is a one-dimensional Barron function such that $g(0) = 1$, $g\equiv 0$ outside of $(-1,1)$ and
\[\showlabel\label{eq orthogonality conditions}
\int_{-1}^1g(s) \,s^{2k}\ds =0\qquad \text{for } k = 1,\dots, \frac{d-3}2.
\]
Then the function $f:\R^d\to\R$ given by
\[\showlabel\label{eq f from g}
f(x) = \frac{\int_{-1}^1(1-s^2)^\frac{d-3}2\,g(|x|s)\,\ds}{\int_{-1}^1(1-s^2)^\frac{d-3}2\,\ds}
\]
satisfies the following properties:
\begin{enumerate}
\item $f(0) = 1$,
\item $f(x) = 0$ if $|x|\geq 1$,
\item $f$ is radially symmetric, and
\item $[f]_{\B(\R^d)} \leq [g]_{\B(\R)}$.
\end{enumerate}
Conversely, if $f:\R^d\to\R$ is a radially symmetric Barron function which satisfies $f(0) = 1$ and $f\equiv 0$ on $\R^d\setminus B_1(0)$, then there exists $g$ as above such that \eqref{eq f from g} holds, which is additionally an even function and satisfies $[f]_{\B(\R^d)} = [g]_{\B(\R)}$.

Furthermore $f\equiv 1$ in $B_\eps(0)$ if and only if $g\equiv 1$ in $(-\eps,\eps)$ for the {\em even} representative of the function class $g$. 
\end{lemma}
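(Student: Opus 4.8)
The plan I would follow is to recognise \eqref{eq f from g} as the \emph{radialization} of $g$ and then read off each assertion from that identification; the only genuinely delicate point is the norm identity in the converse. The key observation is that when $\nu$ is uniformly distributed on $S^{d-1}$ its first coordinate $\nu^Te_1$ has density proportional to $(1-s^2)^{\frac{d-3}2}$ on $(-1,1)$, and since $\nu^Tx$ and $|x|\,\nu^Te_1$ are equidistributed for each fixed $x$, \eqref{eq f from g} reads
\[
f(x)=\avint_{S^{d-1}}g(\nu^Tx)\,\d\H^{d-1}(\nu).
\]
Three of the four conclusions now follow at once: $x=0$ gives $f(0)=g(0)=1$; the right-hand side depends on $x$ only through $|x|$ because $\H^{d-1}$ is rotation invariant, so $f$ is radial; and if $\mu$ is a signed measure on $\R^3$ with $g(z)=\int a\,\sigma(wz+b)\,\d\mu_{(a,w,b)}$ and $\frac12\int|a|^2+|w|^2\,\d|\mu|$ arbitrarily close to $[g]_{\B(\R)}$, then interchanging integrals---licit by the absolute convergence in Proposition~\ref{proposition properties of barron functions}---yields
\[
f(x)=\int_{\R^{3}}\avint_{S^{d-1}}a\,\sigma\big((w\nu)^Tx+b\big)\,\d\H^{d-1}(\nu)\,\d\mu_{(a,w,b)},
\]
so $f$ is represented by the measure that spreads each element $(a,w,b)$ uniformly over the sphere $\{(a,w\nu,b):\nu\in S^{d-1}\}$; as $|w\nu|=|w|$ the Barron cost is unchanged, and the infimum over $\mu$ gives conclusion (4), $[f]_{\B(\R^d)}\le[g]_{\B(\R)}$.

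Conclusion (2) is where the moment conditions are used. For $|x|\ge 1$, substituting $u=|x|s$ in the numerator of \eqref{eq f from g} and using that $g$ vanishes off $(-1,1)$ gives
\[
\int_{-1}^1(1-s^2)^{\frac{d-3}2}g(|x|s)\,\d s=\frac1{|x|}\int_{-1}^1\Big(1-\frac{u^2}{|x|^2}\Big)^{\frac{d-3}2}g(u)\,\d u.
\]
Here oddness of $d$ is essential: $\frac{d-3}2$ is a non-negative integer, so $(1-u^2/|x|^2)^{\frac{d-3}2}$ is a polynomial in $u^2$ whose monomials are exactly $u^{2k}$, $k=0,1,\dots,\frac{d-3}2$, and the orthogonality of $g$ against each of these powers annihilates the integral, i.e.\ $f(x)=0$. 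Letting $|x|>1$ vary and invoking the linear independence of $|x|\mapsto|x|^{-2k}$ shows, conversely, that these orthogonality relations are \emph{forced} by $f\equiv0$ on $\R^d\setminus B_1(0)$; this is how they re-enter in the converse direction.

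For the converse I would start from a radial Barron function $f$ with $f(0)=1$ vanishing on $\R^d\setminus B_1(0)$. Writing $r^{2n-1}f(r)$ (with $n=\frac{d-1}2$) as a multiple of $\int_{-r}^r(r^2-u^2)^{n-1}g(u)\,\d u$ and substituting $v=u^2$, $R=r^2$ recasts \eqref{eq f from g} as a Riemann--Liouville fractional integral of \emph{integer} order $n$; inverting it by $n$-fold differentiation defines an even function $g$ on $\R$, with $f(0)=g(0)=1$, with $g\equiv0$ off $(-1,1)$ (because $f$ vanishes for $r>1$), and satisfying the moment conditions by the previous paragraph. Lemma~\ref{lemma radial barron functions} supplies the regularity making this inversion legitimate: the radial profile of $f$ is $C^n$ with bounded $(n+1)$-st derivative and measure-valued $(n+2)$-nd derivative, which after $n$ differentiations says precisely that $g''$ is a finite signed measure, hence $g\in\B_0(\R)$ by Proposition~\ref{proposition one-dimensional}. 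For $[f]_{\B(\R^d)}=[g]_{\B(\R)}$ I would combine the ``$\le$'' above with the reverse bound, obtained by first averaging a near-optimal representing measure of $f$ over $SO(d)$ (which does not increase the convex, rotation-invariant cost) and then disintegrating the resulting rotation-invariant measure over the spheres $\{|w|=t\}$: reading its profile along $e_1$ produces a one-dimensional representing measure of $g$ of exactly the same cost, since $|w|\equiv t$ on each sphere. Finally, the $\eps$-statement: if $g\equiv1$ on $(-\eps,\eps)$ then $|\nu^Tx|\le|x|<\eps$ forces $f(x)=\avint_{S^{d-1}}g(\nu^Tx)\,\d\H^{d-1}=1$ for $|x|<\eps$; conversely, if $f\equiv1$ on $B_\eps(0)$, the even profile $g-1$ has vanishing radialization there, and the same $n$-fold differentiation argument localized to $(-\eps,\eps)$ gives $g\equiv1$ on $(-\eps,\eps)$.

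The main obstacle is the reverse inequality $[g]_{\B(\R)}\le[f]_{\B(\R^d)}$ underlying the norm identity in the converse. One cannot simply restrict a representation of $f$ to the line $\R e_1$, as that would replace $|w|$ by $|w^Te_1|$ and strictly lower the cost; the correct route is to symmetrize the representing measure over $SO(d)$ and exploit that $|w|$ is constant on the resulting spheres of weights, so that the disintegration is cost-preserving. Making this measure-theoretic bookkeeping rigorous---together with verifying that the one-dimensional measure so produced represents (the even part of) the Abel-inverted $g$, and that this $g$ genuinely lies in $\B_0(\R)$ via the integer-order inversion and Lemma~\ref{lemma radial barron functions}---is the technical heart of the argument; the remaining claims are elementary consequences of the radialization identity.
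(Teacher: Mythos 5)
Your proposal is correct and follows essentially the same route as the paper: the co-area/radialization identity $f(x)=\avint_{S^{d-1}}g(\nu^Tx)\,\d\H^{d-1}_\nu$ with the spread-over-the-sphere measure for the upper bound, the polynomial expansion of $(r^2-z^2)^{\frac{d-3}2}$ for the support and moment conditions, and $SO(d)$-averaging followed by the push-forward $\hat\mu=\Phi_\sharp(|w|\cdot\mu)$, $\Phi(w,b)=b/|w|$, for the reverse norm inequality. The only structural difference is that you construct $g$ by integer-order Abel inversion and must then reconcile it with the measure obtained by disintegration, whereas the paper simply \emph{defines} $g(z)=\int\sigma(z-b)\,\d\hat\mu_b$ from the disintegrated measure and recovers the support statement by the same repeated-differentiation argument (your inversion in disguise), which removes the consistency check you flag as the technical heart.
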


We will show that such a Barron function $g$ indeed exists for every odd $d\geq 3$ and compute the precise asymptotic growth of $[g]_{\B}$ as $d\to\infty$. 
The following Lemma is the main technical tool in our proof.

\begin{lemma}\label{lemma auxiliary bernstein}
For $n\in \N$, set 
\[
\gamma_n:= \min \left\{ \|\mu\|_{TV} :  \int_{0}^1 s\,\d\mu_s =1, \quad \int_{0}^1 s^{2k}\,\d\mu_s=0\text{ for }0\leq k\leq n\right\}.
\]
Then $\lim_{n\to \infty} \frac{\gamma_n}n= \gamma \approx 3.57$ is the inverse of the Bernstein constant. 
The minimum is attained by a unique measure $\mu = \sum_{i=0}^{n+1} \mu_i \delta_{s_i}$ where
\begin{enumerate}
\item $0 = s_0 < s_1< \dots < s_{n+1} = 1$ are the $n+2$ distinct points in $[0,1]$ at which $P(s) -s$ is extremal in $[0,1]$, where $P$ is the optimal even polynomial approximator of degree $\leq 2n$ for $g(s) =s$ in $L^\infty(0,1)$.
\item $\mu_0,\dots, \mu_{n+1}\in\R$ are parameters satisfying the alternation criterion $\mu_{i+1}\mu_i<0$ for $i=0, \dots, n$ and the generalized Vandermonde system
\[
\begin{pmatrix} s_0 & s_1& \dots &s_{n+1}\\1&1& \dots &1\\  s_0^2 & s_1^2 & \dots &s_{n+1}^2\\ s_0^4 & s_1^4 & \dots &s_{n+1}^4\\ \vdots &\vdots &\ddots &\vdots\\ s_0^{2n} & s_1^{2n} & \dots &s_{n+1}^{2n}\end{pmatrix} 
\begin{pmatrix} \mu_0\\ \vdots\\ \mu_{n+1}\end{pmatrix} = \begin{pmatrix} 1\\ 0 \\ \vdots\\ 0 \end{pmatrix}.
\]
\end{enumerate}
\end{lemma}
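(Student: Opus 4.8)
The plan is to connect the extremal problem defining $\gamma_n$ to a classical polynomial approximation problem by duality, and then invoke Bernstein's theorem on the asymptotics of the best polynomial approximation to $|x|$. First I would rewrite the constraints: writing $t = s^2$, the constraint $\int_0^1 s^{2k}\,\d\mu_s = 0$ for $0 \le k \le n$ together with $\int_0^1 s\,\d\mu_s = 1$ says that $\mu$ annihilates the even polynomials of degree $\le 2n$ (in the variable $s$) while pairing to $1$ against the odd monomial $s$. The quantity $\gamma_n = \min\|\mu\|_{TV}$ over such $\mu$ is therefore, by standard linear-programming/Hahn--Banach duality on $C([0,1])$ (the $TV$ norm on measures being dual to the sup norm on continuous functions), equal to
\[
\gamma_n = \Big(\,\min\big\{\|s - P(s)\|_{L^\infty(0,1)} : P \text{ even polynomial of degree} \le 2n\big\}\Big)^{-1}.
\]
Indeed, a measure $\mu$ with $\int s\,\d\mu = 1$ and $\int s^{2k}\,\d\mu = 0$ pairs to $1$ against $s - P(s)$ for every such $P$, so $1 = \int (s - P)\,\d\mu \le \|s-P\|_\infty\,\|\mu\|_{TV}$, giving $\gamma_n \ge 1/E_n$ where $E_n$ is the best-approximation error; the reverse inequality, and the fact that the minimum is attained, comes from the optimal dual measure supported on the equioscillation set.

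Next I would identify $E_n$ asymptotically. Since $s = (\sqrt t)^2$ is not the right substitution — rather, one notes that best even-polynomial approximation of $s$ on $[0,1]$ is, after substituting $s \mapsto |x|$ with $x \in [-1,1]$, exactly the best polynomial approximation of $|x|$ on $[-1,1]$ by polynomials of degree $\le 2n$ (even polynomials suffice by symmetry of $|x|$, and the best approximant is automatically even). Bernstein's 1913 theorem states that $2n \cdot E_{2n}(|x|;[-1,1]) \to \beta \approx 0.2801$, the Bernstein constant; hence $n\,E_n \to \beta/2$... wait — I must be careful with the normalization, since here $s$ ranges over $[0,1]$ and the monomial is $s^1$ not $|x|$. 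Rescaling $[-1,1]$ to $[0,1]$ via $x \mapsto (x+1)/2$ or using $s = x^2$... the cleanest route is $s\in[0,1] \leftrightarrow x \in[-1,1]$ by $s = x^2$ is wrong since $s$ is not even in... Let me instead just substitute $s = |x|$, $x\in[-1,1]$: an even polynomial $P(s)$ of degree $\le 2n$ becomes an even polynomial $\tilde P(x)$ of degree $\le 2n$ with $\tilde P(x) = P(|x|)$, and $|s - P(s)|$ over $s\in[0,1]$ equals $||x| - \tilde P(x)|$ over $x\in[-1,1]$. So $E_n = E_{2n}(|x|;[-1,1])$ and $n E_n = \tfrac12 (2n) E_{2n}(|x|;[-1,1]) \to \tfrac12 \beta$. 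Therefore $\gamma_n/n = 1/(n E_n) \to 2/\beta =: \gamma \approx 3.57$.

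Finally I would pin down the structure of the minimizer. By the Chebyshev equioscillation theorem applied to best approximation of $s$ by even polynomials of degree $\le 2n$ on $[0,1]$ — a Haar system argument on the relevant finite-dimensional space shows the error function $P(s) - s$ equioscillates at exactly $n+2$ points $0 = s_0 < s_1 < \dots < s_{n+1} = 1$ (the endpoints must be included by a standard argument, since otherwise one could lower the error) — the optimal dual measure is forced to be supported on this set, $\mu = \sum_{i=0}^{n+1}\mu_i \delta_{s_i}$, with signs of the $\mu_i$ alternating so that $\int(s-P)\,\d\mu$ collects the equioscillation values constructively; imposing the moment constraints yields exactly the stated generalized Vandermonde system, which has a unique solution because the matrix is nonsingular (it is a generalized Vandermonde matrix built from the exponents $1,0,2,4,\dots,2n$ evaluated at $n+2$ distinct nonnegative nodes, and such matrices have nonvanishing determinant by the theory of Chebyshev systems). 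Uniqueness of $\mu$ then follows from uniqueness of the best approximant $P$ (Haar system) plus the fact that any optimal $\mu$ must be supported where $|s-P(s)|$ is maximal and must reproduce the forced signs and moments.

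The main obstacle I anticipate is not the duality or the equioscillation bookkeeping, which is routine, but getting the asymptotic constant exactly right: one must correctly track how Bernstein's constant $\beta$ (defined for $|x|$ on $[-1,1]$ with the $2n$ normalization) transforms under the reduction to even-polynomial approximation of the monomial $s$ on $[0,1]$, and verify that restricting to even polynomials does not change the best-approximation error. A secondary technical point is justifying attainment and the precise node count $n+2$ (rather than $n+1$) in the equioscillation theorem for this particular non-standard Haar system $\{1, s^2, s^4, \dots, s^{2n}, s\}$ on $[0,1]$, including why both endpoints $0$ and $1$ are active.
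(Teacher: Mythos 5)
Your overall strategy is the same as the paper's: a H\"older/duality lower bound $1=\int(s-P)\,\d\mu\le\|s-P\|_{L^\infty(0,1)}\|\mu\|_{TV}$ reducing $\gamma_n$ to the reciprocal of the best even-polynomial approximation error of $s$ on $[0,1]$ (equivalently of $|x|$ on $[-1,1]$), Bernstein/Varga--Carpenter asymptotics for that error, and then a matching measure supported on the $n+2$ equioscillation nodes obtained from the generalized Vandermonde system. Two points, however, need attention. First, and this is the genuine gap: you write that the optimal measure has ``signs of the $\mu_i$ alternating so that $\int(s-P)\,\d\mu$ collects the equioscillation values constructively,'' as if the signs could be imposed. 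They cannot: once the support is fixed at $s_0,\dots,s_{n+1}$, the coefficients $\mu_i$ are \emph{uniquely determined} by the $n+2$ moment constraints (the Vandermonde system), so the alternation $\mu_i\mu_{i+1}<0$ must be \emph{derived}, not assumed. Without it, equality in H\"older fails and your candidate measure gives no upper bound at all, so the identity $\gamma_n=1/E_n$ and the attainment/uniqueness claims are not established. The paper's fix is short but not automatic: for each consecutive pair $(s_i,s_{i+1})$, test $\mu$ against the even polynomial of degree $\le 2n$ vanishing at all other nodes $\pm s_j$; the moment conditions give $0=\mu_iP(s_i)+\mu_{i+1}P(s_{i+1})$ with $P$ of one sign on $[s_i,s_{i+1}]$, forcing opposite signs. (Once consecutive signs alternate, the correct global orientation follows from $\int(s-P)\,\d\mu=1>0$.) You should add this step, or an equivalent one.

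Second, your asymptotic constant does not close as written: from $2nE_{2n}(|x|)\to\beta\approx0.2801$ you correctly get $nE_n\to\beta/2$, but then $\gamma_n/n=1/(nE_n)\to2/\beta\approx7.14$, which you nonetheless label ``$\approx3.57$.'' You must commit to one normalization of the Bernstein constant and carry it through; the paper's convention is $\beta=\lim_n n\,\beta_n$ with $\beta_n=\dist_{L^\infty(-1,1)}(|s|,\mathrm{span}\{1,s,\dots,s^{2n}\})$, which yields $\gamma=1/\beta$ directly. Your remaining concerns (that the best approximant of $|x|$ is automatically even, the exact node count $n+2$ with both endpoints active, and the nonsingularity of the bordered Vandermonde matrix) are handled in the paper by elementary Rolle-type zero-counting for the error function $e(s)=|s|-\sum a_ks^{2k}$, and your Chebyshev-system citations would serve equally well there.
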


We are finally prepared to prove our main results.

\begin{proof}[Proof of Theorem \ref{theorem main 1}]
{\bf Step 1.} In this step, we construct $g$ and $f$ using Lemmas \ref{lemma 1d reduction} and \ref{lemma auxiliary bernstein}. 
Let $d\geq 3$ be an odd integer and $n= \frac{d-1}2$. Let $g:[0,\infty)\to\R$ be the unique function such that 
\[
g(0) = 0, \qquad \lim_{z\nearrow -1}g'(z) = 0, \qquad g'' = \mu,
\]
in the distributional sense, where $\mu$ is the even reflection of the measure $\mu_{n+1}$ described in Lemma \ref{lemma auxiliary bernstein}, i.e.\ $\mu(U) = \mu_{n+1}(U\cap [0,\infty)) + \mu_{n+1}(-U\cap [0,\infty))$. Note that the origin is counted twice. By construction, $g$ is piecewise linear, $g\equiv 0$ on $(-1,1)$ and
\[
g(s) = \int_{-1}^s (z-s)\,\d\mu_z \qquad \forall\ s\geq -1.
\]
In particular, $g(0) = 1$ and $g(s) = 0$ for all $s\geq 1$ due to the moment conditions
\[
\int_0^1 1\,\d\mu_s = 0, \qquad \int_0^1 s\,\d\mu_s = 1.
\]
Since $g''=\mu$ is even and $g(-1) = g(1)$, we find that $g$ is even. Due to Proposition \ref{proposition one-dimensional}, we observe that $[g]_\B = \|\mu\|_{TV} = 2\gamma_{n+1}$.
Integrating by parts twice, we realize that
\[\showlabel\label{eq by parts for mu}
\int_{-1}^1 g(s)\,s^{2k} \ds = \int_{-r}^r g(s)\,s^{2k}\ds = \frac1{(2k+2)(2k+1)} \int_{-r}^r g''(s)\,s^{2k+2}\ds =0
\]
for $r>1$ and $k= 0,\dots, n-1= \frac{d-3}2$, since $g\equiv g'\equiv 0$ in a neighbourhood of $r$, so the boundary terms vanish. The integration by parts is well-established for smooth functions and can be justified in the piecewise case by mollification. 

In particular, $g$ satisfies the conditions of Lemma \ref{lemma 1d reduction} and induces an admissible radially symmetric function $f\in \B_0(\R^d)$.

{\bf Step 2.} Assume for now that there exists a minimizer of the Barron semi-norm in $\F$. Since the Barron semi-norm is a a convex function on the convex function class $\F$, and since furthermore both $\F$ and $[\cdot]_\B$ are invariant under coordinate rotations, we note that the set of minimal semi-norm elements in $\F$ is both convex and rotation invariant. In particular
\[
\hat f \in \argmin_{f\in \F} [f]_\B \qquad \Ra \quad \hat f_O \in \argmin_{f\in \F} [f]_\B \quad\text{where } \hat f_O(x) = \int_{SO(d)} \hat f(Ox)\,\d H_O 
\]
is the average of $\hat f$ with respect to all rotations. The measure $H$ is the Haar measure on the group $SO(d)$, i.e.\ the $d(d-1)/2$-dimensional Hausdorff measure induced by the Frobenius norm on the space of $d\times d$-matrices.

Thus, if a minimizer exists, then there is also a radially symmetric minimizer. In this step, we illustrate that the function $f = f_d^*$ associated to $g$ as in Step 1 is in fact optimal. In particular, we can conclude from the proof below that a minimizer does exist..

It is easy to see that \eqref{eq by parts for mu} is both necessary and sufficient to imply the moment conditions for $g$ in Lemma \ref{lemma 1d reduction}. In particular
\[
\inf_{f\in \F}[f]_\B = \inf\left\{[g]_\B : g(0) = 1, \:g \equiv 0\text{ on }[1,\infty), \:g\text{ even and }\int_0^1 g''(s)\,s^{2k}\ds = 0\text{ for }0\leq k \leq \frac{d-1}2\right\},
\]
with corresponding minimizers. As $g''=\mu$ is the unique solution to the minimization problem on the right, $f$ is the unique radial minimizer on the left.

In particular
\[
\lim_{d\to\infty, d\text{ odd}} \frac{[f_d^*]_\B}d = \lim_{d\to\infty, d\text{ odd}} \frac{2 \gamma_{(d-1)/2}}d = \gamma \leq 3.6.
\]
{\bf Step 3.} We note that $g$ is linear on the interval $[0,s_1]$, where $s_1$ is as in Lemma \ref{lemma auxiliary bernstein}. Thus 
\[
f(x) = c_d \int_{-1}^1 g(|x|s)\big(1-s^2\big)^\frac{d-3}2\ds =  c_d \int_{-1}^1 \big(1-\mu_0|x|\,|s|\big)\big(1-s^2\big)^\frac{d-3}2\ds = 1 - |x| \frac{\mu_0\int_{0}^1s\big(1-s^2\big)^\frac{d-3}2\ds}{\int_0^1\big(1-s^2\big)^\frac{d-3}2\ds}
\]
is linear by the origin. 

{\bf Step 4.} In this step, we show that $f$ is strictly decreasing in radial direction inside the unit ball. As noted in Corollary \ref{corollary structure barron}, the function $f$ is $C^1$-smooth away at the origin. Since $f(0) = 1$ and $f(e_1)= 0$, it suffices to show that $\partial_rf (re_1)\neq 0$ for $r\in (0,1)$. We compute
\[
f(re_1) = c_d\int_{-1}^1 g(rs) \,\big(1-s^2\big)^\frac{d-3}2\ds, \qquad \partial_rf (re_1) =c_d \int_{-1}^1 g'(rs) \,s\big(1-s^2\big)^\frac{d-3}2\ds.
\]
We make the following claim: {\em If $g$ is an even piecewise linear function on $[-1,1]$ with at most $n+1$ segments in $[0,1]$ and $k\leq n-1$, then the function
\[
r\mapsto \int_{-1}^1 g'(rs) \,s\big(1-s^2\big)^k\ds
\]
has at most $n-2-k$ zeros in $(0,1)$.}

To prove the claim, start with $k=0$. Then 
\[
\int_{-1}^1 g'(rs)\,s \ds = \frac1{r^2} \int_{-1}^1g'(rs)\,rs \,r\ds = \frac{2}{r^2}\int_0^r g'(z)\,z\dz.
\]
As $g'$ is constant in the interval $[0,s_1]$ by the origin, $\partial_rf$ is constant (and non-zero) in $[0,s_1]$, meaning that $\partial_rf$ cannot have a zero in $[0,s_1]$.
In any interval $(s_i, s_{i+1})$ where $g'$ is constant, the function
\[
r\mapsto \int_0^r g'(z)\,z\dz = \int_0^{s_i} g'(z)\,z\dz + \int_{s_i}^r g'(z)\,z\dz
\]
is monotone, since $g'(z)\cdot z$ does not change sign. In particular:
\begin{enumerate}
\item There is no zero in the first interval $[s_0, s_1] = [0, s_1]$.
\item There is at most one zero in $[s_i, s_{i+1}]$. 
\item The zero in the final interval $[s_n, s_{n+1}] = [s_n, 1]$ is attained at $s=1$.
\end{enumerate} 
Thus there are at most $n-2$ zeros in $(0,1)$, which proves the claim for $k=0$. Now consider $k\geq 1$. Note that
\[
\int_0^1 g'(rs)\,s(1-s^2)^k\ds = r^{-(2+k)} \int_0^1 g'(rs)\,rs\big(r^2-(rs)^2\big)^k\,r\ds = r^{-(2+k)} \int_0^r g'(z)\big(r^2-z^2\big)^k\dz 
\]
In particular, the term on the left is zero if and only if the integral on the right is zero. If there are two points $r_1, r_2$ on the right where the integral vanishes (and $k\geq 1$), then by Rolle's theorem in between there exists a point $r\in (r_1,r_2)$ at which
\[
0 = 2r\int_0^r g'(z)\,z \big(r^2 - z^2\big)^{k-1}\dz = r^{k-2} \int_{-1}^1 g'(rs)\,s(1-s^2)^{k-1}\ds
\]
The integral also vanishes at zero, where we are integrating over the empty set. We note that for any $k\geq 0$ the integral $\int_{-1}^1 g'(rs)\,s (1-s^2)\ds$ vanishes at $r=0$ and $r=1$. If, for $k\geq 1$ it vanishes at $N$ interior points, then for $k-1$ it must vanish at $N+1$ points: 0, 1, and at least once in each interval. In particular, for $k\geq 1$, there are at most $n-2-k$ interior vanishing points.

{\bf Step 5.} We finally note that the Lipschitz bound follows directly from the Barron norm bound on $f_d^*$ and Lemma \ref{lemma radial barron functions}.
\end{proof}

\begin{example}\label{example d=3}
Let us consider the case $d=3$, i.e.\ $n= \frac{d-1}2= 1$. The $n+2 =3$ points $s_0, s_1, s_2$ are given by the equi-oscillating points of the best approximation of the function $f(s) = s$ on $[0,1]$ by elements of the space spanned by $\{s^0, s^2, \dots, s^{2n}\}= \{1,s^2\}$.

The best approximation of $s$ by even quadratic polynomials in $L^\infty(0,1)$ is $P(s) = s^2 + \frac18$, which attains maximal distance at $s= 0, 1/2, 1$. This can easily be verified as $P(s) - s$ is a polynomial of degree $2$ inside $(0,1)$, so if $P(0) = P(1)$, then $P(s) = \alpha + \beta(s-1/2)^2$, so the most distant points are in $\{0,1/2,1\}$. By Kolmogorov's equi-oscillation theorem, all three are points of largest error. It is now easy to solve for the coefficients of $P$.

We can find the measure $\mu = \mu_0\delta_0 + \mu_1 \delta_{1/2} + \mu_2\delta_1$ for the second derivative $g'' = \mu$ by solving the linear system of moment conditions
\[\showlabel \label{eq modified vandermonde 3d}
\begin{pmatrix} 1&1&1\\ 0&1/2&1\\ 0&1/4 &1\end{pmatrix} \begin{pmatrix} \mu_0\\ \mu_1 \\ \mu_2\end{pmatrix} = \begin{pmatrix} 0\\ 1\\ 0\end{pmatrix} \quad\LRa \quad \begin{pmatrix} \mu_0\\ \mu_1\\\mu_2 \end{pmatrix} = \begin{pmatrix} -3\\ 4\\ -1\end{pmatrix}.
\]
So $g$ is the even continuous piecewise linear function satisfying $g(0) =1$ and
\[
g'(s) = \begin{cases} \mu_0 &s\in(0,1/2)\\ \mu_0+ \mu_1 & s\in(1/2,1)\\ \mu_0 + \mu_1+\mu_2 &s>1\end{cases}\qquad \Ra\qquad g(s) = \begin{cases} 1-3s & s\in[0,1/2]\\ -1 + s &s\in[1/2, 1]\\ 0 &s\geq 1.\end{cases}
\]
Finally, since $\frac{d-3}2 =0$, we find that $(1-s^2)^\frac{d-3}2 \equiv 1$ and thus
\begin{align*}
f(re_1) &= \frac{\int_{-1}^1 g(rs) \ds} {\int_{-1}^1 1\ds} = \int_0^1 g(rs)\ds = \frac1r\int_0^r g(s)\ds = \frac1r \begin{cases} r-\frac32r^2 &0\leq r\leq 1/2\\ \frac{r^2}2 -r + \frac12 &1/2\leq r\leq 1\\ 0 &r\geq 1
\end{cases}\\
	& = \begin{cases} 1-\frac32r &0\leq r\leq 1/2\\ \frac{r}2 -1+ \frac1{2r} &1/2\leq r\leq 1\\ 0 &r\geq 1\end{cases}.
\end{align*}
In particular, we observe that $f\geq 0$ and that $f \in C^1(0,\infty)$. It is easy to see that the first derivative of $f$
\[
f'(r) = \frac12\left(-3\cdot\chi_{(0,1/2]}(r) + \left[1 - \frac1{r^2}\right]\cdot \chi_{(1/2,1]}(r)\right)
\]
is a continuous function, the second 
\[
f''(r) = \frac1{r^3}\cdot \chi_{(1/2, 1)}
\]
is a bounded and measurable function, and the third (distributional) derivative
\[
f'''(r) = 8\cdot \delta_{1/2} - \frac{3}{r^4}\cdot \mathcal L|_{(1/2,1)} - \delta_1
\]
is a finite measure, where $\delta_x$ denotes a Dirac delta located at the point $x$ and $\L_U$ denotes the one-dimensional Lebesgue measure of the open set $U$.

\end{example}

\begin{proof}[Proof of Theorem \ref{theorem main 3}]
The existence of a radial minimizer $f_{d,\eps}^*$ is proved as in Theorem \ref{theorem main 1}. By Lemma \ref{lemma radial barron functions}, we find that $f_{d,\eps}^* \in C^\frac{d-1}2(\R^d\setminus \{0\})$, and since $f_{d,\eps}^*$ is constant in a neighbourhood of the origin, we find that $f_{d,\eps}^*\in C^\frac{d-1}2(\R^d)$. The uniqueness follows as in Theorem \ref{theorem main 1} by considering the optimal measure $\mu$ on $[\eps,1]$ satisfying the moment conditions, using again Lemma \ref{lemma 1d reduction}. The main difference lies in the greater ability to uniformly approximate the function $f(s) = s$ by even polynomials on $[\eps,1]$ compared to $[0,1]$.

We claim the following: {\em Let $\eps>0$, $n\in\N$ and $\mu_n$ a measure on $[\eps,1]$ such that 
\[
\int_\eps^1 s\,\d\mu_n = 1, \qquad \int_\eps^1 s^{2k} \,\d\mu_n= 0\qquad \forall\ k= 0,\dots,n.
\]
Then for every $c<1$ there exists $N\in\N$ independent of $\eps$ such that 
\[
\|\mu_n\| \geq  c \,\frac{\eps^2\sqrt{\pi n}} {(1-\eps^2)^{n+1}}\]
if $n\geq N$.}

The claim is proved in Appendix \ref{appendix postponed}. Inserting the lower bound in \eqref{eq moments and norm 2}, the statement is proved. 
\end{proof}

\section{Applications}\label{section applications}

\subsection{Fitting values on a finite data set}

Let $(x_i, y_i)_{i=1}^N$ be a finite data set in $\R^d\times \R$. For each $i$, define $r_i = \min_{j\neq i} |x_j-x_i|$ to be the minimal distance between the point $x_i$ and the closest data point to it. Then
\[
f(x) = \sum_{i=1}^N y_i\, f_d^*\left(\frac{x-x_i}{r_i}\right)
\]
is a Barron function such that
\begin{enumerate}
\item $f(x_i) = y_i$ for all $i$ and
\item $\|f\|_\B \leq 2\gamma_{(d+1)/2} \sum_{i=1}^N \frac{|y_i|}{r_i}$.
\end{enumerate}

In most practical data sets, the minimum $\ell^2$-distance between data points is lower bounded as $\Omega(1)$ or even $\Omega(\sqrt d)$, meaning that the Barron norm only grows as $\sim dN$ or even $\sqrt{d}N$. Using the direct approximation theorem for Barron functions (Proposition \ref{proposition direct approximation}) in $L^2(\P_n)$ for $\P_n = \frac1n\sum_{i=1}^n \delta_{x_i}$, for every $m\in \N$ there exists a shallow neural network $f_m$ with $m$ neurons (and one constant shift) such that
\[
\frac1{n} \sum_{i=1}^n \big|f(x_i)-y_i\big|^2 \leq \frac{\|f\|_\B^2 \, \max_{|\nu|=1}\left\langle\sum_{i=1}^n (x_i-\bar x), \nu\right\rangle^2}m
\]
where $\bar x= \frac1n\sum_{i=1}^nx_i$. Often, the labels $y$ lie in a bounded set, at least with high probability. The projected and centered second moments may well be independent of the ambient dimension $d$, leading to a realistic, even somewhat pessimistic, expectation that
\[
L_\lambda(a,W,b) = \frac1n \sum_{i=1}^n \big|f_{(a,W,b)}(x_i) - y_i\big|^2 + \lambda\sum_{i=1}^m\big(a_i^2 + |w_i|_{\ell^2}^2\big) \lesssim \frac{d^2n^2}m + \lambda dn
\]
for data sets which do not heavily concentrate at a single point or exhibit heavy tail behavior. While the data can generally be fit exactly if $m> n$ (see e.g.\ \cite{llanas2006constructive} and the references therein) this estimate also controls the size of the weights of the neural network needed.

Similarly, this estimate can be used to bound the additional size of the Barron norm which is required to fit values $y_i' = y_i+\eps$, assuming that the Barron norm required to fit $y_i$ is already known.

\subsection{Mollification and density}

Since $f_d^*$ is a compactly supported, non-negative function, it can serve as a mollifier. Namely, for $\eps>0$ and $u\in L^1_{loc}(\R^d)$, denote
\[
\eta_\eps(z) = \frac{f_d^*(z/\eps)}{\|f_d^*\|_{L^1(\R^d)} \eps^d}, \qquad u_\eps(x) = (u*\eta_\eps)(x) = \int_{\R^d} u(z)\,\eta_\eps(x-z)\dz.
\]
It is well-known that $u_\eps \to u$ in $L^1(K)$ for any compact set $K\subseteq \R^d$ \cite[Lemma 4.22]{dobrowolski2010angewandte}. In many situations, rates can be obtained, either in the $L^1$-topology or a stronger topology, under the assumption that $u$ lies in a space of more regular functions (e.g.\ a H\"older or Sobolev space). We consider the following scenario:

Assume that $u\in X$, where $X\subseteq L^1(\R^d)$ is a space of functions $u:\R^d\to\R$ for which it is known that $\|u_\eps - u\|_{L^2(U)} \leq C\|u\|_X \eps^\alpha$ for a given domain $U\subseteq\R^d$ and some universal constants $C, \alpha>0$ (which may depend on $U$). If $u$ is naturally defined only on $U$ and not the entire space, extension theorems can often be used to extend $u$ in the same regularity class, see e.g.\ \cite[Chapter 6]{dobrowolski2010angewandte}. 

Note furthermore that $u_\eps$ is a continuous superposition of Barron functions in $x$. Since $\B_0$ is a Banach space, $u_\eps$ is a Barron function with norm at most
\[
[u_\eps]_\B \leq \|u\|_{L^1(\R^d)} \|\eta_\eps\|_\B = \frac{\|u\|_{L^1(\R^d)} \,[f_d^*]_\B}{\|f_d^*\|_{L^1(\R^d)}\,\eps^{d+1}}.
\]
In particular, due to the direct approximation theorem for Barron functions (Proposition \ref{proposition direct approximation}), there exists a neural network $f_m$ with one hidden layer, ReLU activation, $m$ neurons (and an affine shift) such that 
\[
\|f_m - u_\eps\|_{L^2(U)} \leq [u_\eps]_\B \,\meas(U)\,\diam(U)\,m^{-1/2}
\]
and thus
\[
\|f_m - u\|_{L^2(U)} \leq \|f_m - u_\eps\|_{L^2(U)} + \|u_\eps - u\|_{L^2(U)} \leq \frac{\|u\|_{L^1(\R^d)} \,[f_d^*]_\B\,\diam(U)\,\meas(U)}{\|f_d^*\|_{L^1(\R^d)}\,\eps^{d+1}m^{1/2}} + C\|u\|_X \eps^\alpha
\]
Balancing the scaling of terms $\eps^{-(d+1)}m^{-1/2} = \eps^\alpha$, we find that it is optimal to choose $\eps \sim m^{-\frac{1}{2(d+1+\alpha)}}$, which leads to an approximation order of $\eps^\alpha\sim m^{-\frac{\alpha}{2(d+1+\alpha)}}$. We note that not only the rate, but also the constants exhibit the curse of dimensionality. Observe that it is generally impossible to approximate functions in classical function spaces by functions of low norm from any function class in which the unit ball has low Rademacher complexity, so the curse of dimensionality cannot be avoided here \cite{approximationarticle}.

Since $f_d^*\leq 1$ and $f_d^*\equiv 0$ outside the unit ball, we find that $\|f_d^*\|_{L^1} \leq \omega_d 
\sim \frac1{\sqrt{\pi d}} \left(\frac{2\pi e}{d}\right)^{d/2}$. The true $L^1$-norm is likely even much smaller, as $f_d^*$ appears to decay rapidly close to the unit sphere. Nevertheless, we find this an easy way to obtain an explicit rate with little effort.

\begin{example}
If $X$ is the space of Lipschitz-continuous functions on $\R^d$, then the approximation property holds as
\begin{align*}
\big|u_\eps(x) - u(x)\big| &= \left|\int_{\R^d} \big[u(z) - u(x)\big] \,\eta(x-z)\dz \right| \leq \int_{B_\eps(0)} \eta_\eps(z) |u(x+z) - u(x)|\dz\\
	&\leq [u]_{Lip} \frac{\int_0^\eps \eta_\eps(r)\,r^d\dr}{\int_0^\eps \eta_\eps(r)\,r^{d-1}\dr}\leq [u]_{Lip} \eps.
\end{align*}
The conditions above are therefore met with $\alpha=1$.
\end{example}

\subsection{Depth separation}

We have seen that any function which satisfies $f\equiv 1$ in $B_\eps(0)$ and $f\equiv 0$ outside of $B_1(0)$ has Barron semi-norm which is exponentially large in the dimension $d$ of the data space (for fixed $\eps \in (0,1)$).

By comparison, the function
\[
f(x) = \begin{cases} 1 &|x| \leq \eps \\ 1- \frac{|x|-\eps}{1-\eps} &\eps \leq |x|\leq 1\\ 0 &|x|\geq 1\end{cases}
\]
can be represented as the composition $f= f_1\circ f_2$ of two Barron functions $f_2:\R^d\to\R$ and $f_1:\R\to\R$
\[
 f_2(x) = \frac{|x|-\eps}{1-\eps}, \qquad f_1(z) = \max\big\{\{ 0, \min\{1-z, 1\}\big\}  = \sigma(1-z) - \sigma(-z)
\]
with norm
\[
[f_2]_\B = \frac1{1-\eps} \frac{\int_{S^{d-1}}1\,\d\H^{d-1}}{\int_{S^{d-1}}\sigma(\nu_1)\,\d\H^{d-1}_\nu} \sim \frac{2 \sqrt d}{1-\eps}, \qquad [f_1]_\B = 2.
\]
The second norm estimate be easily obtained by Proposition \ref{proposition one-dimensional}, whereas the second can be obtained as in the second step in the proof of \ref{lemma radial barron functions} in Appendix \ref{appendix postponed} -- see also \cite[Section 4]{barron_new}.

In particular, by the direct approximation theorem for Barron functions (Proposition \ref{proposition direct approximation}), it is possible to approximate $f_2$ with parameters whose magnitude does not exceed $C(1-\eps)d^{-1/2}$. When written as a neural network with two hidden layers, the initial linear layer of $f_1$ and terminal linear layer of $f_2$ are concatenated into a single linear map. Balancing the magnitude of coefficients equally over all layers, we find that the parameters scale only like $d^{1/6}(1-\eps)^{-1/3}$, so the weight decay regularizer grows as $d^{1/3}(1-\eps)^{-2/3}$. Observe that weight decay does not induce a norm for deeper ReLU networks due to the mismatch in homogeneities.

Theorem \ref{theorem main 3} thus serves to illustrate the following depth separation phenomenon: {\em A function $f:\R^d\to\R$ which takes values $1$ on $B_\eps(0)$ and $0$ on $\R^d\setminus B_1(0)$ is much easier to approximate by ReLU networks with two hidden layers than with one.} While depth separation phenomena are well established \cite{eldan2016power,telgarsky2016benefits}, this is a particularly easy criterion. The fact that compositions of Barron functions correspond to certain neural networks with two hidden layers has been observed e.g.\ in \cite{barron_new,parhi2022kinds}.

On the other hand, the result is a weaker version of a depth separations statement than others. We do not claim that the number of neurons required to approximate such a function $f$ to a certain accuracy grows exponentially in dimension, but rather that either the number of neurons or the magnitude of the parameters does. From a practical point of view, both are prohibitive.

\section{Finding optimal bump functions}\label{section examples}

In this section, we compute numerical approximations of the optimal bump functions which were constructed in Theorem \ref{theorem main 1} for different odd dimensions $d\in \N$ beyond the case $d=3$ considered in Example \ref{example d=3}. As previously, denote $n = \frac{d-1}2$, i.e.\ $d=2n+1$. For simplicity, we exploit that three tasks are equivalent: Approximating $|s|$ in $L^\infty(-1,1)$ by polynomials of degree at most $2n$ (or $2n+1$), approximating $\sqrt s$ in $L^\infty(0,1)$ by polynomials of degree at most $n$, and approximating $s$ in $L^\infty(0,1)$ by {\em even} polynomials of degree $2n$. We proceed in three steps:

\begin{enumerate}
\item Find the optimal approximation of $s\mapsto \sqrt{s}$ by polynomials of degree $n$ in $L^\infty(0,1)$, and find the $n+2$ points $t_0, \dots, t_{n+1}$ at which the error is maximal. Take the optimal points $s_i = t_i^2$ for the approximation of $f(s) = s$ by {\em even} polynomials. 

\item Solve the linear system \eqref{eq linear moment system} to obtain the measure $\mu= \sum_{i=0}^{n+1}\mu_i\delta_{s_i}$. Compute the piecewise linear function $g$ by $g'' = \mu$ in $(0,1]$, $g(0) = 1$ and $g'(0) = \mu_0$.

\item Obtain $f$ from $g$ by numerically integrating \eqref{eq f from g}.
\end{enumerate}

In our implementation, the first step is solved by the Remez algorithm \cite{trefethen2019approximation}:

\begin{enumerate}
\item[(i)] Initialize $s_0, \dots, s_{n+1}\subseteq[0,1]$, e.g.\ as equi-distant points such that $s_0=0$ and $s_1=1$.
\item[(ii)] Solve the system $\sum_{j=0}^n \alpha_j s^j_i = \sqrt{s} + (-1)^i e$, $0\leq i\leq n+1$ for the coefficients $\alpha_j$ and the equi-oscillation parameter $e$.
\item[(iii)] Update $s_0, \dots, s_{n+1}$ such that $s_0 = 0$, $s_{n+1}=1$ and for $i=1,\dots, n$, $s_i$ is a point at which the unsigned error function $\sqrt{s} - \sum_{j=0}^n \alpha_j s^j$ has a local extremum.
\item[(iv)] Iterate (ii) and (iii) until after the final update we have approximately reached equi-oscillating points of largest error:
\[
\frac {\max_{0\leq i\leq n+1}\left| \sqrt{s_i} - \sum_{j=0}^n \alpha_j s_i^j\right|}{\min_{0\leq i\leq n+1} \left|\sqrt{s_i} - \sum_{j=0}^n \alpha_j s_i^j\right|} < 1.001.
\]
\end{enumerate}

We solve the linear system in step 2 in the iteration scheme by LU factorization. The non-linear system is solved by two nested interval  constructions:
\begin{itemize}
\item Given $0=s_0<\dots < s_{n+1}=1$ such that $\sqrt{s_i} - \sum_{j=0}^n \alpha_j s_i^j = (-1)^ie$, we conclude that for all $i=0,\dots, n$, there exists $t_i \in (s_j, s_{j+1})$ such that 
\[
\sqrt{t_i} - \sum_{j=0}^n \alpha_j t_i^j = 0, \quad i=0,\dots, n
\]
by the Intermediate Value Theorem. In particular, the $n+1$ points $0 < t_0 < \dots < t_n<1$ are distinct and ordered. We approximate $t_i$ by the bisection method to accuracy $<10^{-12}$. Note that $e\neq 0$, since the approximating polynomial cannot match the objective function at $n+2$ points by the same argument as in the proof of Theorem \ref{theorem main 1}.

\item Given $t_0, \dots, t_n$, we find that there for $i = 1,\dots, n$ there exists $\xi_i \in (t_{i-1}, t_i)$ such that 
\[
\frac{d}{ds}\bigg|_{s=\xi_i} \left(\sqrt s - \sum_{j=0}^n\alpha_js^j\right) = 0, \qquad i=1, \dots, n
\]
by Rolle's Theorem. Again, we approximate $\xi_i$ by the bisection method to accuracy $<10^{-12}$. By construction, all $\xi_i$ are distinct. We update
\[
\{s_0,s_1,\dots, s_n, s_{n+1}\} \mapsto \{s_0, \xi_1, \dots, \xi_n, s_{n+1}\}.
\]
\end{itemize}

The nested interval construction is more numerically stable than Newton-Raphson iteration, as a Newton solver tends to  find the same $\xi_i$ multiple times starting at different roots $s_i, s_{i'}$ from the previous iteration. 

The linear step (2) is solved by LU factorization. The integral in (3) is evaluated using a composite Simpson rule and 1,001 integration points. A sample implementation of the algorithm can be found in a \href{https://colab.research.google.com/drive/1oFrzlAFDq73Ev1-MgMMnuua0LFt0f5gG?usp=sharing}{google colab notebook} at \cite{colab}. 

\begin{figure}
\begin{center}
\includegraphics[width= 0.32\textwidth]{./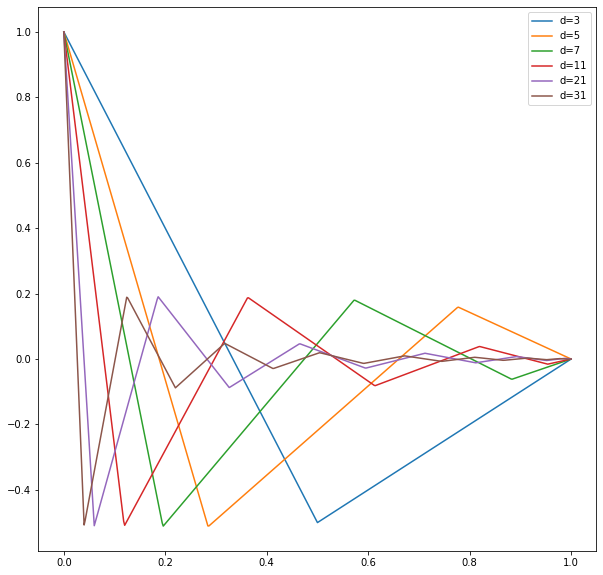}
\includegraphics[width= 0.32\textwidth]{./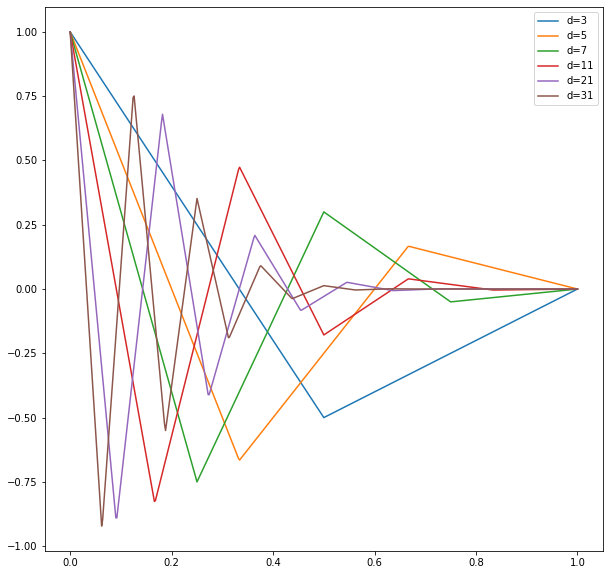}
\includegraphics[width= 0.32\textwidth]{./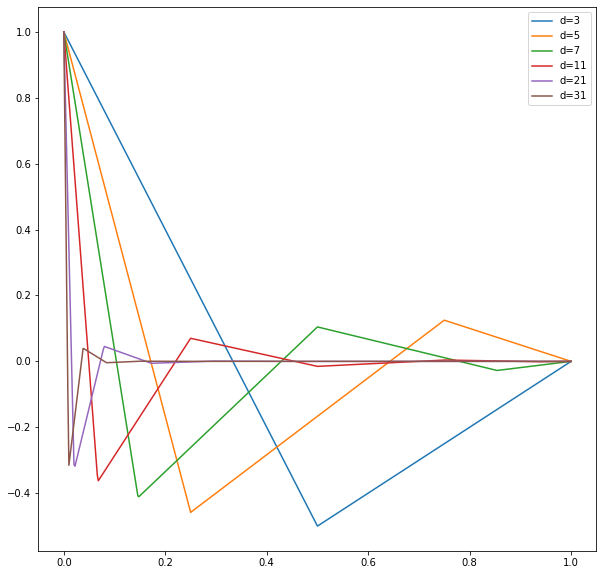}
\end{center}
\caption{\label{figure comparing g} We compute the piecewise linear function $g$ satisfying the moment conditions \eqref{eq linear moment system} for three different choices of $n+2 = \frac{d+3}2$ break points $s_i$: Optimal points found by the Remez algorithm (left), equi-distant points $s_i = i/(n+1)$ (middle) and the roots of Chebyshev polynomials of the second kind $s_i =1/2 + \cos\left(i\pi/(n+1)\right)/2$ (right).\\
For the optimal choice of $s_i$, we empirically observe that the collection of points $\{(s_i, g_d(s_i)) : d\in 2\N+1\}$ concentrates on a line $\ell_i$ parallel to the horizontal axis. For equi-distant nodes, the oscillations become larger as $d$ increases, whereas they become smaller for Chebyshev nodes.}
\end{figure}

We note that the linear system \eqref{eq linear moment system} can be solved for {\em any} choice of distinct points $0\leq s_0< \dots<s_{n+1} \leq 1$. To explore the importance of using the optimal points found using the Remez algorithm, we compare $g$ and $f$ for the optimal choice of sample points and other, more classic and explicit choices $s_i$ in Figures \ref{figure comparing g} and \ref{figure comparing f} respectively. The Barron norm grows slowly and linearly for optimal break points and faster than linearly for other explicit choices of break points, as can be seen in the rightmost image in Figure \ref{figure various plots}. In the left and middle plot of Figure \ref{figure various plots}, we also display the known profiles of Barron functions due to \cite{ongie2019function} as well as profiles of Barron functions which are constant in a neighbourhood of the origin.

For any choice of break points which include zero, the function $f$ is a `wizard's hat' function: Monotone decreasing, flat away from the origin, monotone decreasing and convex, non-smooth at the origin. It is thus qualitatively different from previously known radial profiles due to \cite{ongie2019function}.

\begin{figure}
\begin{center}
\includegraphics[width= 0.32\textwidth]{./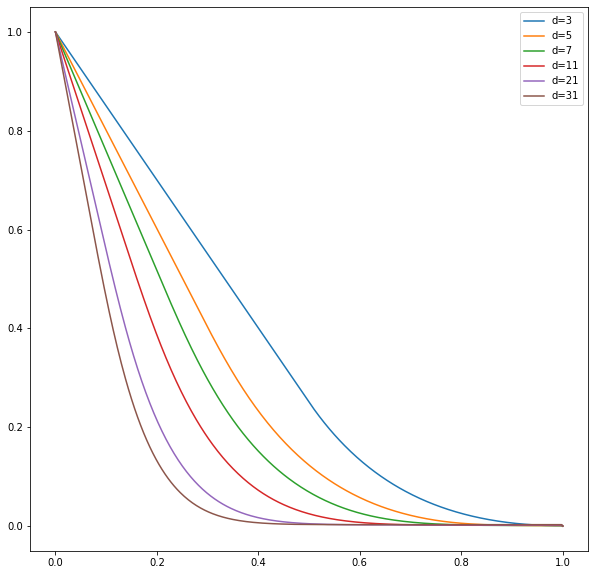}
\includegraphics[width= 0.32\textwidth]{./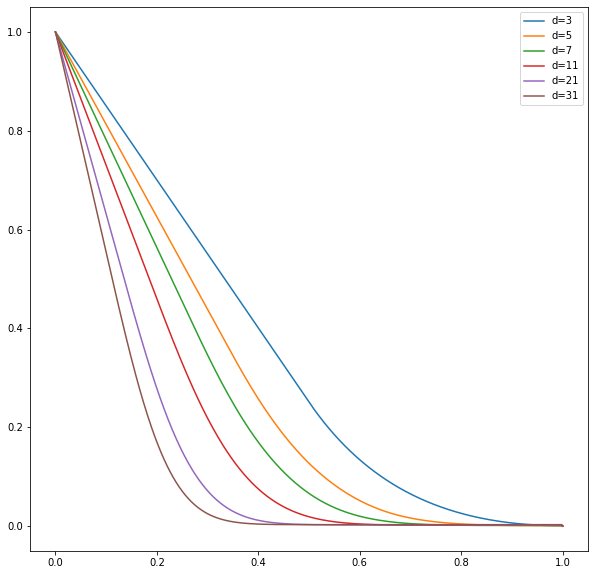}
\includegraphics[width= 0.32\textwidth]{./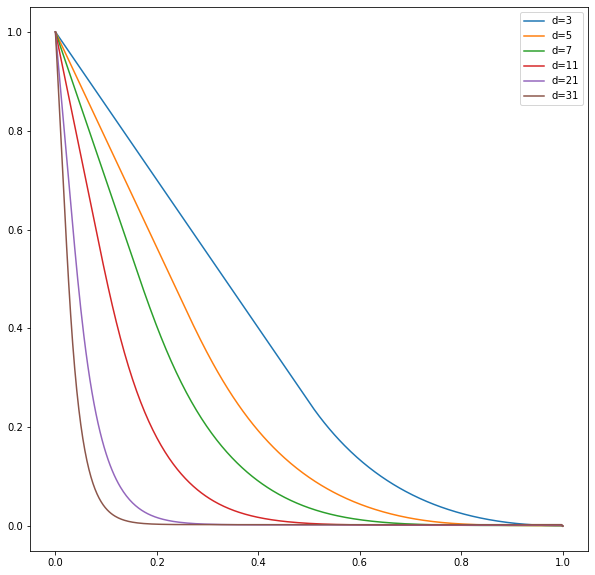}
\end{center}
\caption{\label{figure comparing f} We compare the functions $f$ computed by \eqref{eq f from g} for the piecewise linear functions $g$ in Figure \ref{figure comparing g} associated to three different choices of break points $s_i$: Optimal (left), equi-distant (middle), Chebyshev (right). The break points and curves agree for $d=3$ (blue curve) and are qualitatively similar for all $d\geq 3$, in particular non-negative, monotone-decreasing and convex. The curves are steeper at the origin in higher dimensions, most noticeably for Chebyshev nodes. The curve with optimal break points appears to make the slowest transition from $f=1$ to $f\approx 0$.}
\end{figure}

\begin{figure}
\begin{center}
\includegraphics[width= 0.32\textwidth]{./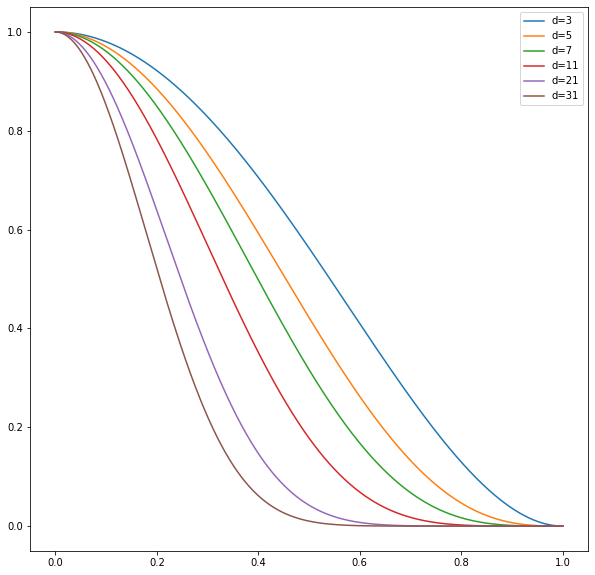}
\includegraphics[width= 0.32\textwidth]{./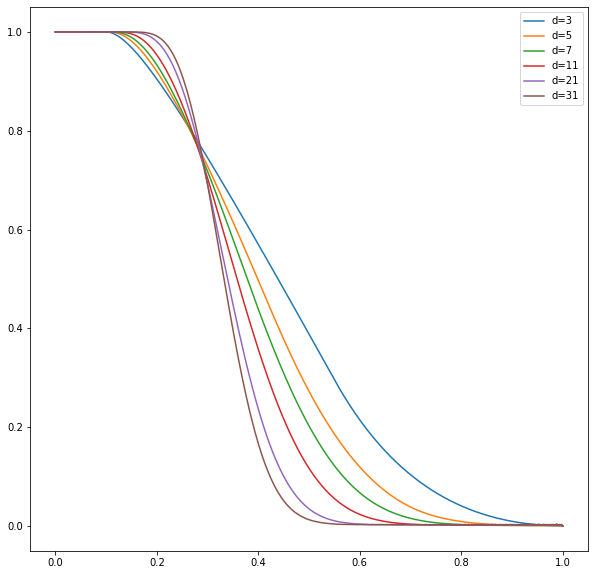}
\includegraphics[width= 0.32\textwidth]{./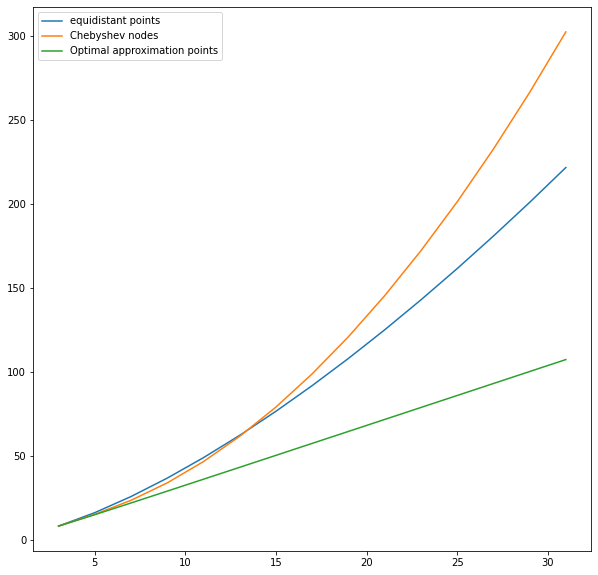}
\end{center}
\caption{\label{figure various plots} {\bf Left:} The radial profiles of the known Barron bump functions $f(|x|) = (1-|x|^2)^\frac{d-1}2$ of \cite{ongie2019function} are smooth at the origin and non-convex in the radial direction. They are thus geometrically distinct from the profiles associated to piecewise linear functions $g$ as depicted in Figure \ref{figure comparing f}. 
{\bf Middle:} The functions $f$ associated to piecewise linear functions $g$ with break points at equidistant points $s_i = 0.1 + 0.9\cdot i/(n+1)$ in $[0,1]$ are $C^1$-smooth, monotone decreasing and non-negative, but not convex in the radial direction.
{\bf Right:} The Barron semi-norm $\sum_{i=0}^{n+1}|\mu_i|$ of functions $f$ associated to $g$ with different break points $s_i$ grows slowest (and linearly) for optimal the optimal choice of points and fastest for break points at Chebyshev nodes. All growth rates are ostensibly polynomial of empirical degree $1.1$ (optimal points), 1.4 (equi-distant points) and 1.6 (Chebyshev nodes) as determined by least squares fitting. By comparison, the norm growth for the choice of equi-distant nodes in $[0.1,1]$ in the middle figure is exponentially large in $d$ and is not pictured for better readability of the plots.
}
\end{figure}

Additional empirical results relating to the optimal construction can be found in Appendix \ref{appendix plots}.

\section{Conclusion and Open Problems}\label{section conclusion}

We have provided an explicit construction for how neural networks optimally interpolate certain radially symmetric data with respect to a weight decay regularizer in the infinite parameter limit. While we do not prove that the optimal interpolant is radially symmetric, the radial average of all interpolants coincides with the solution constructed in this article. We show that its weight decay regularizer grows as $d$ and its Lipschitz constant grows at most as $\sqrt d$. In contrast, we identify a slight modification which necessitates exponential growth. A number of important questions remain open, even for shallow neural networks and the simple case of rotational symmetry. Deeper networks appear to be out of reach for our methodology.

{\bf Is the radially symmetric minimizer the only one?} A uniqueness statement would allow us to establish that regularized risk minimization does in fact lead to symmetry learning, at least in a toy example, and would allow us to prove stronger convergence results in the companion article \cite{wojtowytsch_radial_empirical}. We give further heuristic consideration to this question in Appendix \ref{appendix extensions}.

{\bf What happens if we modify the constraints?} For example, it is not clear from the proof of Theorem \ref{theorem main 3} whether the constraint $f\geq 1$ on $B_\eps(0)$ induces the curse of dimensionality as the constraint $f\equiv1$ on $B_\eps(0)$ does. Similarly, it may be interesting to study the case where the boundary condition $f\equiv 0$ is imposed on a shell $\{1\leq |x|\leq R\}$ rather than the entire exterior domain. We recover the problem studied in this article in the limit $R\to \infty$, whereas the optimal solution in the case $R=1$ would be $f(x) = 1-|x|$. Furthermore, a modified minimization problem is required to find optimal mollifiers:
\[
\text{Find } \tilde f_d^* \in \argmin_{f\in \widetilde\F} \frac{[f]_{\B}}{\|f\|_{L^1}}, \qquad \widetilde \F = \left\{f \in \B_0(\R^d) \cap C_c(\overline{B_1(0)}) : f\geq 0\right\}.
\]
It appears that subtle differences may make the difference between a solvable data fitting problem and one where we encounter the curse of dimensionality.

{\bf What more can we say about the optimal function $f_d^*$?} For example, we do not provide a lower bound on the Lipschitz constant of $f_d^*$, nor do we study the decay of $f_d^*(r)$ for fixed $r\in(0,1)$ or $\|f\|_{L^1(\R^d)}$ rigorously. We conjecture that both decay at least exponentially in $d$, and that both sequences are monotone in $d$. Limited evidence is provided in Appendix \ref{appendix extensions}.

Finally, the fact that the extrema of $g = g_n$ lie on straight lines parallel to the horizontal axis as we vary $n$ appears too specific to be random. It is not clear to us how to interpret this observation.

\section*{Acknowledgements}

The author would like to thank Jonathan Siegel and Rahul Parhi for inspiring conversations.

\bibliographystyle{../../alphaabbr}
\bibliography{../../NN_bibliography}

\newpage
\appendix

\section{Further plots}\label{appendix plots}

We note the following: If $g$ is a piecewise linear function with $n$ break points which satisfies the $n+2$ linear moment conditions \eqref{eq moment conditions}, then it also satisfies the same linear moment conditions for any $m \leq n$. In particular, the function
\[\showlabel\label{eq reduced dimension break points}
f_{m,n}:\R^{2m+1}\to \R, \qquad f_{m,n}(x) = \frac{\int_{-1}^1 g\big(|x|s\big)\,(1-s^2)^\frac{m-3}2 \d s}{\int_{-1}^1 (1-s^2)^\frac{m-3}2 \d s}
\]
is a Barron function such that $f_{m,n}(0) = 1$ and $f_{m,n}\equiv 0$ on $\R^k\setminus B_1(0)$ for $k\leq n$. We plot $f_{m,n}$ for $m=1$ (i.e.\ $2m+1=3$) and various choices of $n$ and various choices of break points in Figure \ref{figure more points, 3d construction}. In Figure \ref{figure more points, varying dimension} we fix $n= 10$ instead and consider the influence of varying $m$.

The larger the discrepancy between $m$ and $n$, the more oscillatory the function $f_{m,n}$ is. This is reminiscent of observations from Step 4 in the proof of Theorem \ref{theorem main 1}.

In Figure \ref{figure decay of f_d}, we numerically investigate the decay of $f_d^*$ as $d$ varies, both pointwise in $r$ and integrated. Since $f_d^*$ is $\sqrt d$-Lipschitz and $f_d^*(0)=1$, we see that the one-dimensional integral $\int_0^1 f_d^*(r)\dr$ is bounded from below by $\Omega(d^{-1/2})$. This indeed appears to be the dominant term, and for fixed $r>0$, we observe empirically that $f_d^*(r)$ decays to zero exponentially in $d$.

\begin{figure}
\begin{center}
\includegraphics[width= 0.32\textwidth]{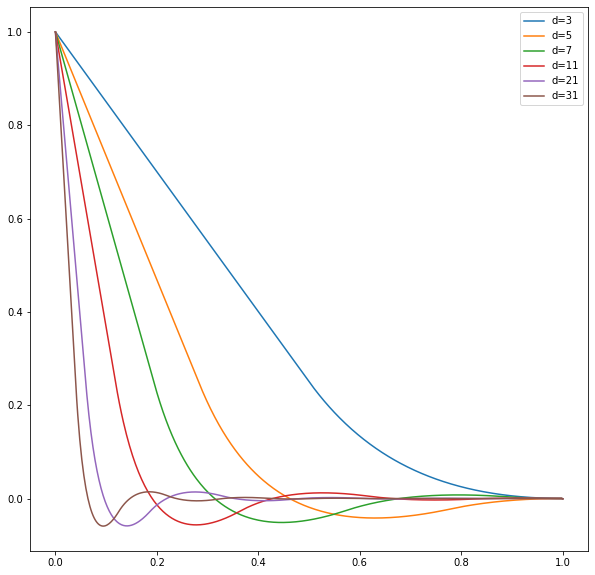}
\includegraphics[width= 0.32\textwidth]{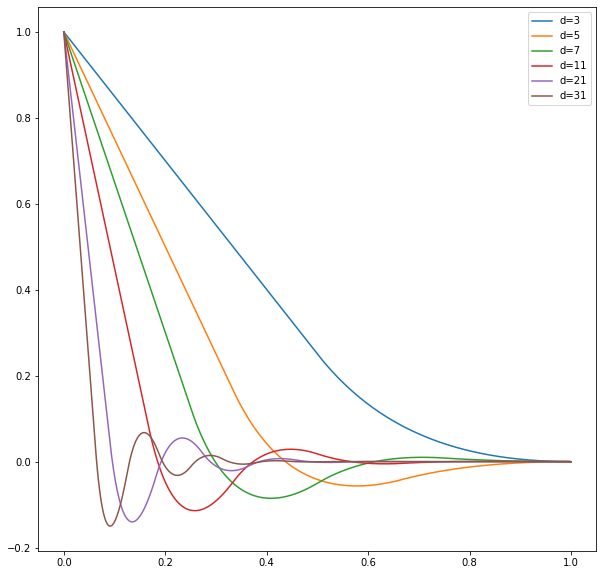}
\includegraphics[width= 0.32\textwidth]{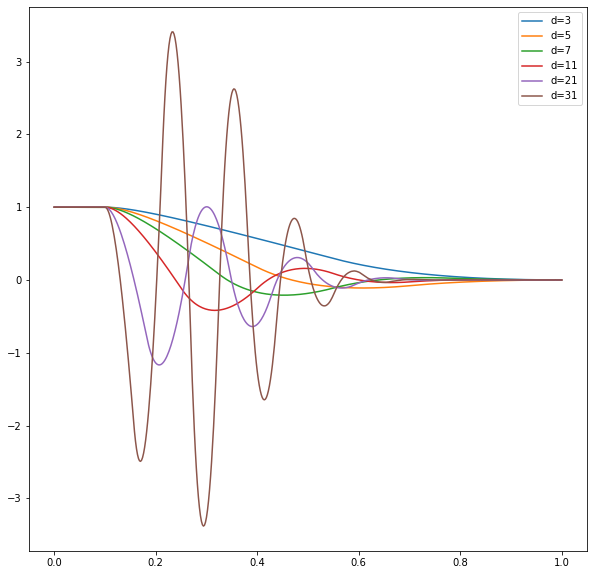}
\end{center}
\caption{\label{figure more points, 3d construction} We plot the radial profile of $f_{1,n}:\R^3\to\R$ as in \eqref{eq reduced dimension break points} for various choices of $n= \frac{d-1}2$ break points. The points are chosen optimally (for dimension $d=2n+1$) on the left, equi-distant in $[0,1]$ in the middle plot and equidistant in $[0.1, 1]$ on the right. Notably, the functions are neither monotone nor non-negative if $d>3$, and the number of local extrema increases as $n$ grows.}
\end{figure}

\begin{figure}
\begin{center}
\includegraphics[width= 0.32\textwidth]{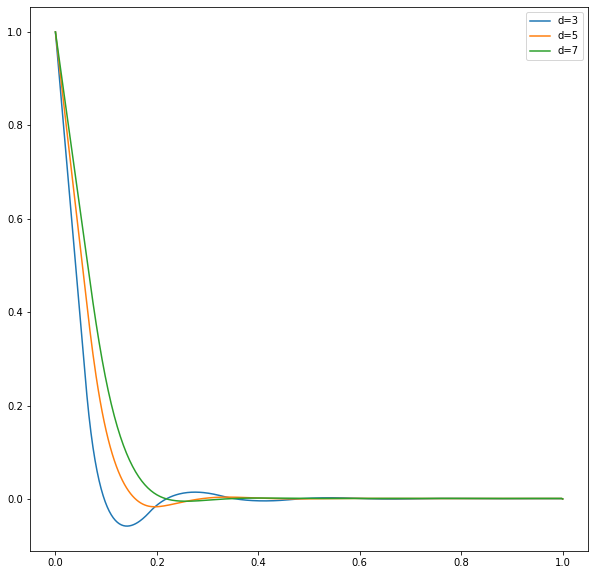}
\includegraphics[width= 0.32\textwidth]{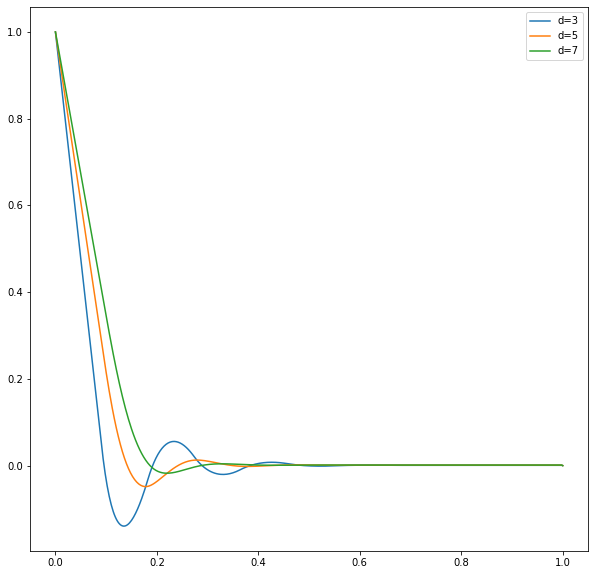}
\includegraphics[width= 0.32\textwidth]{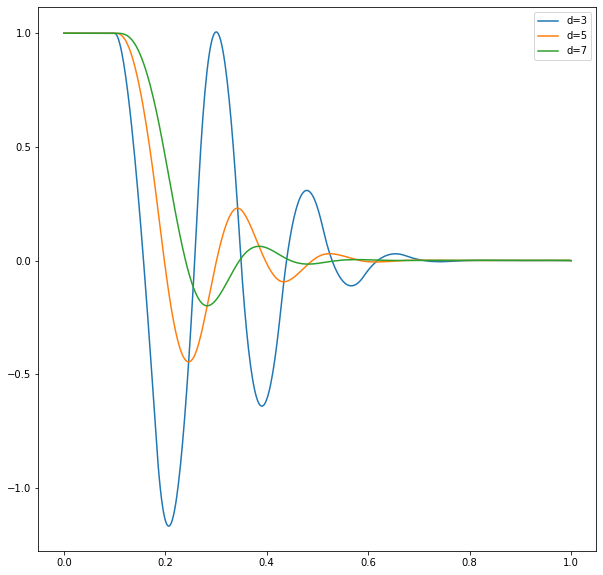}
\end{center}
\caption{\label{figure more points, varying dimension} We plot $f_{m,n}$ as in \eqref{eq reduced dimension break points} corresponding to low dimension $d\in \{3,5,7\}$, $m= \frac{d-1}2$ and $n= 10$. and various choices of break points: Optimal for $n=10$ (left), equidistant in $[0,1]$ (middle) and equi-distant in $[0.1, 1]$ (right). The radial profiles of the Barron functions are neither monotone nor non-negative. The number of local extrema of the profiles is larger if the dimension $d$ is small compared to the number of break points. The oscillations are smallest for the optimal choice of break points and largest for break points which are bounded away from the origin.}
\end{figure}

\begin{figure}
\begin{center}
\includegraphics[width= 0.32\textwidth]{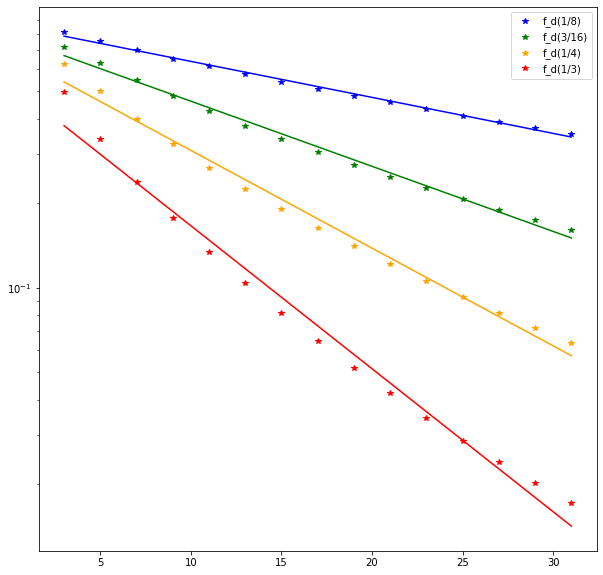}
\includegraphics[width= 0.32\textwidth]{./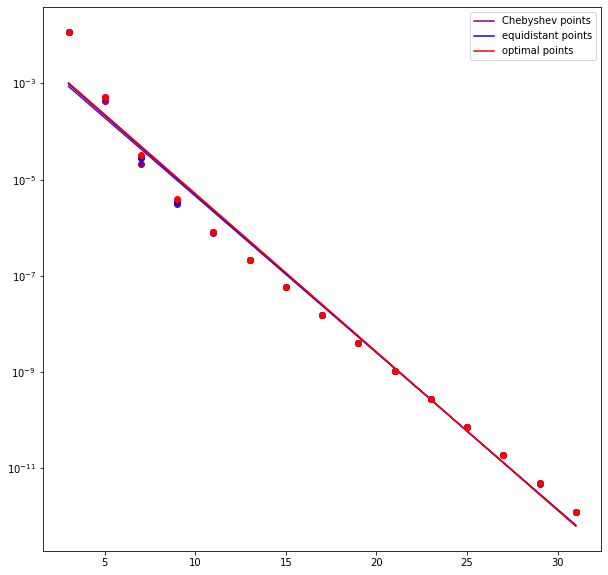}
\includegraphics[width= 0.32\textwidth]{./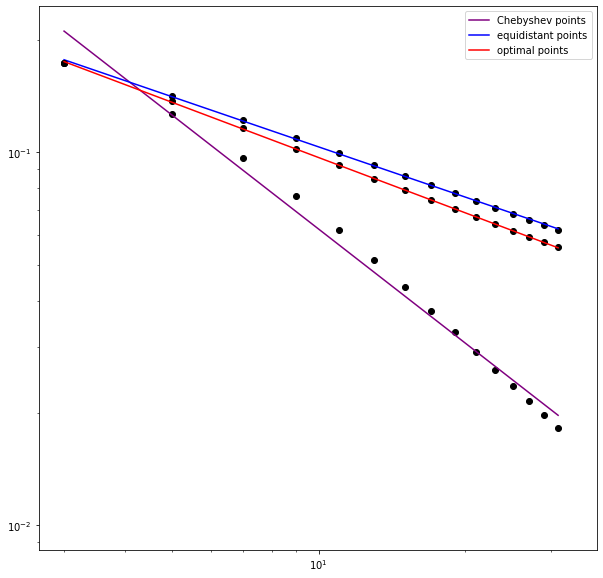}
\end{center}
\caption{\label{figure decay of f_d} {\bf Left:} We plot $f_d^*(x)$ as a function of $d$ for fixed $x$ in a logarithmic scale together with $\exp(\alpha(x) d+ \beta(x))$, where $\alpha, \beta$ are chosen depending on $x$ as the least squares fit for the function $\log(f_d^*(x))$. The graphs suggest that the decay is exponential in $d$ and thus comparable to explicit solutions $\tilde f_d(x) = (1-|x|^2)^{\frac{d+3}2}$ of \cite{ongie2019function}.
{\bf Middle:} We graphically compare the decay of the normalized $d$-dimensional integral of $\avint_{B_1(0)} f_d(x)\dx$ for different choices of break points. Despite graphical differences around the origin, the values of the integrals are very similar and decay roughly as $\exp(-4.7 - 0.75* d)$. In dimension three, all three functions $f_d$ coincide, while their difference close to the origin becomes negligible in high dimension, where almost all measure concentrates by the boundary of the unit ball. 
{\bf Right:} We graphically compare the decay of the $1$-dimensional integral $\int_0^1 f_d(r\cdot e_1)\dr$ of the function $f_d$ associated to piecewise linear $g$ with $n = \frac{d-1}2$ break points in $(0,1)$ for different choices of break points. The integral  empirically decays as $0.3\cdot d^{-0.49}$ for optimal points, like $0.29\cdot d^{-0.45}$ for equidistant points and like $0.64\cdot d^{-1.02}$ for Chebyshev points. The order of decay was  established by a least squares regression.
}
\end{figure}

\section{Postponed proofs}\label{appendix postponed}

Recall the co-area formula, which allows us to integrate over a Riemannian manifold $M$ by `slicing' the domain into the level sets of a function $\phi:M\to N$, where $N$ is another Riemannian manifold \cite[Theorem 13.4.2]{burago2013geometric}. In the case of slicing the sphere into level sets of a coordinate projection $\phi(x) = x_1$, the formula reads as
\[
\int_{S^{d-1}} f(x)\,\d\H^{d-1}_x = \int_{-1}^1\left( \int_{S^{d-1}\cap \{x_1 = s\}}  f(x) \,\d\H^{d-2}_x\right) (1-s^2)^{-1/2}\ds
\]
since $1- x_1^2 = |\nabla^\parallel \phi|^2$ is the modulus of the tangential gradient of $\phi$, which measures volume distortions. This can be considered a curvinlinear version of Fubini's theorem. If $f$ only depends on $x_1$, the formula further simplifies to 
\begin{align*}\showlabel\label{eq co-area formula}
\int_{S^{d-1}} f(x)\,\d\H^{d-1}_x 
	&= (d-1)\omega_{d-1} \int_{-1}^1  f(s,0,\dots,0)\, (1-s^2)^{(d-3)/2}\ds
\end{align*}
since $S^{d-1}\cap \{x_1=s\}$ is a $d-2$-dimensional Euclidean sphere of radius $\sqrt{1-s^2}$. Here $\omega_{d-1}$ denotes the volume of the $d-1$-dimensional unit ball and $(d-1)\omega_{d-1}$ the volume of the $d-2$-dimensional Euclidean unit sphere.

\begin{proof}[Proof of Lemma \ref{lemma radial barron functions}]
{\bf Step 1. Symmetrization.} Let $f:\R^d\to\R$ be a radially symmetric Barron function. Then in particular $f(x)$ is the same as the average over $f(Ox)$ for $O$ in $SO(d)$ and the average is taken with respect to the Haar measure $H$ (which coincides with the $\frac{d(d-1)}2$-dimensional Hausdorff measure on $SO(d)$ with respect to the Frobenius norm), i.e.
\begin{align*}
f(x) &= \avint_{SO(d)} f(Ox)\,\d H_O\\
	&= \avint_{SO(d)} \int_{\R^{d+2}} \sigma(w^TOx+b)\,\d\mu_{(w,b)}\,\d H_O\\
	&= \int_{\R^{d+2}} \avint_{SO(d)} \sigma\big((O^Tw)^Tx+b)\,\d H_O\,\d\mu_{(w,b)}\\
	&= \int_{\R^{d+2}}|w| \avint_{S^{d-1}} \sigma\left(\nu^Tx+\frac {b}{|w|}\right)\,\d \H^{d-1}_\nu\,\d\mu_{(w,b)}
\end{align*}
since for any $w\in S^{d-1}$, the map $SO(d) \to S^{d-1}$, $O\mapsto Ow$ pushes the Haar measure forward to the uniform distribution on $S^{d-1}$. Thus $f$ can be written as a continuous linear combination of the elementary radially symmetric Barron functions
\[
f_b(x) = \avint_{S^{d-1}} \sigma\big(\nu^Tx-b)\,\d \H^{d-1}_\nu \quad\text{and }f_\infty(x) \equiv 1.
\]
On the other hand, every function of this type is a radially symmetric Barron function. Finally, we note that 
\begin{align*}
f_b(x) - f_{-b}(x) &= \avint_{S^{d-1}} \sigma\big(\nu^Tx-b) -\sigma\big(\nu^Tx+b) \,\d \H^{d-1}_{\nu}\\
	&= \avint_{S^{d-1}} \sigma\big(-\nu^Tx-b) -\sigma\big(\nu^Tx+b) \,\d\H^{d-1}_\nu\\
	&= \avint_{S^{d-1}} \nu^Tx+b \,\d\H^{d-1}_\nu\\
	&= b,
\end{align*}
since the uniform distribution is invariant under the substitution $\nu\mapsto-\nu$ in the first term.
In particular, every radially symmetric Barron function can be written as
\[\showlabel\label{eq radially symmetric barron function}
f(x) = f(0) + \int_{[0,\infty)} f_{b}(x) \,\d\mu_b
\]
for some measure $\mu$ on $[0,\infty)$ and $[f]_\B = \|\mu\|_{TV}$, since $f_{b}(0) = 0$ for any $b>0$.

{\bf Step 2. Gradient bound.} We note that $[f_b]_\B=1$ for any $b$ by definition and
\begin{align*}
\nabla f_{b}(x) &= \avint_{S^{d-1}} \sigma'\big(\nu^Tx-b) \,\nu\,\d\H^{d-1}_\nu
	= \avint_{S^{d-1}} 1_{\{\nu^Tx>b\}} \nu\,\d\H^{d-1}_\nu.
\end{align*}
Due to radial symmetry, the gradient points in direction $x$, i.e.\
\begin{align*}
\nabla f_{b}(x) &= \frac{\int_{S^{d-1}} 1_{\{\nu_1> b/|x|\}} \nu_1\,\d\H^{d-1}_\nu} {\int_{S^{d-1}} 1\,\d\H^{d-1}_\nu}\,\frac{x}{|x|}
\end{align*}
The gradient is largest as $|x|\to \infty$, and 
\[
\sup_{x\in\R^d} |\nabla f_{b}(x)| = \frac{\int_{S^{d-1}} 1_{\{\nu_1>0\}} \nu_1\,\d\H^{d-1}_\nu} {\int_{S^{d-1}} 1\,\d\H^{d-1}_\nu} = \frac{\int_{S^{d-1}} 1_{\{\nu_1>0\}} \nu_1\,\d\H^{d-1}_\nu} {2\int_{S^{d-1}}  1_{\{\nu_1>0\}}\,\d\H^{d-1}_\nu} = \frac{\int_0^1 s(1-s^2)^\frac{d-3}2\ds}{2\int_0^1(1-s^2)^\frac{d-3}2\ds}
\]
independently of $b$. In the last step, we used the coarea formula \eqref{eq co-area formula}. It is now possible to evaluate the gradient
\[
\frac{\int_0^1 s(1-s^2)^\frac{d-3}2\ds}{2\int_0^1(1-s^2)^\frac{d-3}2\ds} = \frac{\frac1{d-1}} { \sqrt\pi \frac{\Gamma\big((d-1)/2\big)}{\Gamma(d/2)}} = \frac{\Gamma(d/2)} {\sqrt\pi\,(d-1)\,\Gamma\big((d-1)/2\big)} \sim \frac{1}{\sqrt{2\pi d}}
\]
in the sense that 
\[
\lim_{d\to\infty} \sqrt d\frac{\int_0^1 s(1-s^2)^\frac{d-3}2\ds}{2\int_0^1(1-s^2)^\frac{d-3}2\ds} = \frac1{\sqrt{2\pi}}.
\]
Consequently, for a general radially symmetric Barron function as in \eqref{eq radially symmetric barron function} and sufficiently large $d\in\N$, we find that 
\[
[f]_{Lip} = \sup_{x\in\R^d} \big|\nabla f(x)\big| \leq \int_{[0,\infty)} \|\nabla f_b\|_{L^\infty}\, \d|\mu|_b \leq \frac{1+\eps}{\sqrt{2\pi d}} \|\mu\|_{TV} = \frac{1+\eps}{\sqrt{2\pi d}} [f]_\B.
\]

{\bf Step 3. Higher regularity.} By Corollary \ref{corollary structure barron}, any radially symmetric Barron function is $C^1$-smooth except at the origin. This establishes the claim in the case $d=3$.

Note that if $f:(0,\infty)\to \R$ is $C^k$-smooth, then the same is true for $F:\R^d\to\R$ given by $F(x) = f(|x|)$ by the chain rule and product rule. It thus suffices to analyze the radial profile $f$ of $F$. In the following, we will denote both functions as $f$ by a slight abuse of notation. Consider the radial profile of the function
\[
f_{b}(r) = c_d {\int_0^1 \sigma(sr -b)\,(1-s^2)^\frac{d-3}2\ds}, \qquad c_d = \frac1{\int_{-1}^1(1-s^2)^\frac{d-3}2\ds}
\]
for $b>0$. We can compute the first two derivatives of $f_{b}$ by exchanging differentiation and integration
\begin{align*}
f_{b}'(r) &= c_d{\int_0^1 \sigma'(sr -b)\,s(1-s^2)^\frac{d-3}2\ds}\\
f_{b}''(r) &= c_d{\int_0^1 \sigma''(sr -b)\,s^2(1-s^2)^\frac{d-3}2\ds}{} \quad = \frac{c_d}r\big(b/r\big)^2 \max\left\{1- \big(b/r\big)^2,0\right\}^\frac{d-3}2,
\end{align*}
where the second formula must be justified by approximation, as the derivative $\frac{d^2}{dr^2}\sigma(sr-b) = \frac1r\cdot \delta_{b/r}$ (considered as a `function' of $s$) is not regular. For $b>0$, it is easy to see that  $f_{b}\equiv 0$ is $C^\infty$-smooth in $[0,b)$, and as a polynomial in $1/r$ also $C^\infty$-smooth on $(b,\infty)$. Clearly $f_b''$ and all its derivatives vanish at infinity. If $d=3$, $f_b''$ is continuous except at $r=b$, where it has a jump discontinuity. If $d\geq 5$, the function
\[
f_b''(r) = c_d\,r^{-d} b^2(r^2-b^2)^\frac{d-3}2 = c_d b^2\,r^{-d}(r-b)^\frac{d-3}2(r+b)^\frac{d-3}2
\]
vanishes as $(r-b)^{\frac{d-3}2}$ at $r=b$ and thus has $\frac{d-5}2$ additional derivatives which vanish at $r=b$. We find $f_b \in C^\frac{d-1}2$ for any odd dimension $d$. The $\frac{d+1}2$-th derivative of $f_b$ is bounded and continuous except at $r=b$, and the $\frac{d+3}2$-th derivative of $f_b''$ is a finite measure associated to the regular part of the derivative in $(b,\infty)$ and the jump at $r=b$.

It remains to show that a general radial Barron function
\[
f(r)= f(0) + \int_{[0,\infty)} f_b(r)\,\d\mu_b
\]
has the same regularity as its components $f_b$, at least away from the origin. To simplify the presentation, we focus on the case $d\geq 5$. Let $\eps>0$ and observe that
\[
f(r)= f(0) + \mu(\{0\})\, f_0(r) + \int_{(0,\eps]} f_b(r)\,\d\mu_b + \int_{(\eps, \infty)} f_b(r)\,\d\mu_b.
\]
Clearly, the affine linear component $ f(0) + \mu(\{0\})\, f_0(r)$ is $C^\infty$-smooth except at the origin. Secondly, we note that for any $b>0$, the identity $f_b(r) = b\,f_1\big(r/b\big)$ holds. In particular,
\[
\frac{d^k}{dr^k} \int_{(\eps, \infty)} f_b(r)\,\d\mu_b = \int_{(\eps, \infty)} b^{1-k} \,f_1^{(k)}\left(\frac rb\right)\,\d\mu_b  
\]
where the integrals converge uniformly for $k\leq \frac{d-1}2$ due to the $L^\infty$-bound on the $k+1$-th derivative of $f_b$. Similarly, the $\frac{d+1}2$-th derivative converges in $L^p$ for all $p<\infty$ due to the bound on the measure-valued $\frac{d+3}2$-th derivative. Finally, for $k= \frac{d+3}2$, the integral converges weakly in the sense of Radon measures, i.e.\ in the weak-* sense, when we consider the space of (Radon) measures as dual to the space of continuous functions.

For the first integral, we prove convergence assuming that $r\geq \eps$. Note that $f_b''(r) = r^{-1} \,P(b/r)$ for some polynomial $P$ and $r\geq \eps\geq b$. By induction we see that $f_b^{(k)}(r) = r^{1-k}P_k(b/r)$ for all $k\geq 2$, where $P_k$ is another polynomial. This is easily seen since
\[
\frac{d}{dr} \big[ r^{1-k} P_k(b/r)\big] = (1-k)\,r^{-k} \,P_k(b/r) + r^{1-k}\,P_k'(b/r)\,\left(-\frac{b}{r^2}\right) = r^{-k} \left((1-k)\,P_k(b/r) - \frac br\,P_k'(b/r)\right).
\]
Hence, as before,
\[
\frac{d^k}{dr^k} \int_{(0,\eps]} f_b(r)\,\d\mu_b = \int_{(0,\eps]} r^{1-k} P_k(b/r)\,\d\mu_ b = r^{1-k}\int_{(0,\eps]} P_k(b/r)\,\d\mu_b.
\]
The integral converges since $b/r \in [0,1]$ and $P_k$ is a continuous function.
\end{proof}

\begin{proof}[Proof of Lemma \ref{lemma 1d reduction}]
{\bf First claim.} 
Assume that $g:\R\to\R$ is a function with the properties outlined above and $f$ is defined by \eqref{eq f from g}. Then $f$ is radially symmetric by definition and
\[
f(0) = \frac{\int_{-1}^1(1-s^2)^\frac{d-3}2\,g(0)\ds}{\int_{-1}^1(1-s^2)^\frac{d-3}2\ds} = 1.
\]
Furthermore, if $r= |x|\geq 1$ and $\tilde c_d:= \int_{-1}^1(1-s^2)^\frac{d-3}2\,\ds$, then
\begin{align*}
\tilde c_d\,f(x) &= \int_{-1}^1 (1-s^2)^\frac{d-3}2\,g(rs)\ds\\
	&= r^{2-d} \int_{-1}^1 (r^2-(rs)^2)^\frac{d-3}2\,g(rs)\,r\ds\\
	&= r^{2-d} \int_{-r}^r  (r^2-z^2)^\frac{d-3}2\,g(z)\dz\\
	&=  r^{2-d} \int_{-1}^1  (r^2-z^2)^\frac{d-3}2\,g(z)\dz
\end{align*}
since $g\equiv 0$ outside of $(-1,1)$. The integral vanishes by \eqref{eq orthogonality conditions} if $d\geq 3$ is odd, since $(r^2-z^2)^\frac{d-3}2$ is an even polynomial of degree at most $d-3$ for all $r$. 

It remains to show that $f$ is a Barron function.
 Take any measure $\mu$ such that 
\[
g(x) = \int_{-\infty}^\infty \sigma(x+b)\,\d\mu_b
\]
as in Proposition \ref{proposition one-dimensional}
and compute that
\begin{align*}
\int_{-\infty}^\infty \avint_{S^{d-1}} \sigma(\nu^Tx+b)\,\d\H^{d-1}\,\d\mu_b \showlabel\label{eq f is barron}
	&=  \avint_{S^{d-1}} \int_{-\infty}^\infty \sigma(\nu^Tx+b)\,\d\mu_b\,\d\H^{d-1}_\nu\\
	&= \avint_{S^{d-1}} g(\nu^Tx)\,\d\H^{d-1}_\nu\\
	&= \frac{|S^{d-2}|}{|S^{d-1}|} \int_{-1}^1 g(s)\,(1-s^2)^\frac{d-3}2\ds \quad= f(x)
\end{align*}
by the co-area formula \eqref{eq co-area formula}.
The fact that the normalizing constant is exactly
\[
\tilde c_d = \frac{|S^{d-2}|}{|S^{d-1}|} = \int_{-1}^1 (1-s^2)^\frac{d-3}2\ds
\]
can be justified by the same co-area integration. Finally, we note that the left hand side of \eqref{eq f is barron} is clearly a radially symmetric Barron function satisfying $[f]_\B \leq \|\mu\|_{TV}$. Taking the infimum over all $\mu$ representing $g$, we find that $[f]_{\B(\R^d)} \leq [g]_{\B(\R)}$.

{\bf Second claim.} Assume on the other hand that $f$ is a radially symmetric Barron function. If we denote by $\overline\mu$ the Haar measure on the special orthogonal group $SO(d)$, then due to radial symmetry
\begin{align*}
f(x) &= \int_{SO(d)} f(Ox) \,\d\overline\mu_O\\
	&= f(0)+  \int_{SO(d)} \int_{\R^{d+1}} \sigma(w^TOx+b)\,\d\mu_{(w,b)} \,\d\overline\mu_O\\
	&= f(0)+   \int_{\R^{d+1}} \avint_{SO(d)} \sigma\big((O^Tw)^Tx+b\big) \,\d\overline\mu_O\,\d\mu_{(w,b)}\\
	&= f(0)+  \int_{\R^{d+1}} |w| \avint_{S^{d-1}} \sigma\left(\nu^Tx+\frac{b}{|w|}\right)\d\H^{d-1}_\nu\d\mu_{(w,b)}\\
	&= f(0)+   \int_{-\infty}^\infty \avint_{S^{d-1}} \sigma\left(\nu^Tx+b'\right)\,\d\H^{d-1}_\nu\,\d\hat\mu_{b'}
\end{align*}
where $\hat \mu= \Phi_\sharp(|w|\cdot \mu)$ for the map
\[
\Phi :\R^{d+1}\to \R, \qquad \Phi(w,b) = \frac{b}{|w|}.
\]
It is now possible to reverse the calculations from Step 1 by setting 
\[
g:\R\to\R, \qquad g(x) = \int_\R \sigma(x - b)\,\d\hat\mu_b.
\]
Taking the infimum over $\mu$ representing $f$, we find that $[g]_{\B(\R)}\leq [f]_{\B(\R^d)}$. Clearly, both $s\mapsto g(s)$ and $s\mapsto g(-s)$ induce the same function $f$ by \eqref{eq f from g} due to symmetry, and so does the even representative $s\mapsto \big(g(s) + g(-s)\big)/2$.

It remains to show that $g\equiv 0$ outside of $(-1,1)$ and that the moment conditions \eqref{eq moment conditions} hold. Assuming that $g\equiv 0$ outside $(-1,1)$, we find that 
\[
0 =\int_{-1}^1 (1-s^2)^\frac{d-3}2 g(rs)\ds = r^{2-d}\int_{-1}^1 \big(r^2- (rs)^2\big)^\frac{d-3}2 g(rs)\,r\ds = r^{2-d}\int_{-1}^1 (r^2-z^2)^\frac{d-3}2\,g(z)\dz
\]
for all $r\geq 1$, as the integral over $(-r,-1) \cup(1,r)$ vanishes. The moment conditions follow easily as
\[
\frac{d^k}{dr^k} \int_{-1}^1 (r^2-z^2)^\frac{d-3}2 \,g(z)\dz = \frac{d^k}{dr^k}\sum_{j=0}^{(d-3)/2}\binom{(d-3)/2}{j} r^{d-3-2j}\int_{-1}^1 g(z) \,z^{2j} \dz\equiv 0
\]
for $r\in [1,\infty)$ and $k\geq 1$. Taking $k=\frac{d-3}2$ derivatives, we find that $g$ is $L^2$-orthogonal to $z^0$. Lowering the order of the derivative inductively, we find that $g$ is $L^2$-orthogonal to all even polynomials of degree at most $d-3$.

 Thus we only need to show that $g\equiv 0$ outside $(-1,1)$. First consider the case $d=3$, i.e. $n=0$ and thus
\[
f(r) = \int_{-1}^1 g\big(r s\big)\ds.
\]
Then for $r\geq 1$, we have
\[
0 = f(r) 
 = \frac1{r} \int_{-1}^1 g\big(r s\big) \,r\ds = \frac1{r} \int_{-1}^1 g(z)\dz + \frac1{r} \int_1^{r} g(z) + g(-z)\dz = \frac1{r} \int_1^{r} g(z) + g(-z)\dz
\]
since $\int_{-1}^1 g(s)\ds = f(1) = 0$. As $g$ is an even function, we conclude that $\int_1^r g(s)\ds =0$ for all $r\geq 1$ and thus $g(s) =0$ for all $s>1$. We now proceed inductively: Assume that $n\geq 1$ is such that
\[
r^{-(n+1)}\int_{-r}^r g(z) (r^2-z^2)^n\dz =  r^{-(n+1)}\int_{-1}^1 g(rs) (r^2-(rs)^2)^n r\ds= \int_{-1}^1 g(rs) (1-s^2)^n\ds = 0
\]
for all $r\geq 1$. Then also
\[
0 \equiv \int_{-r}^r g(z) (r^2-z^2)^n\dz\quad \Ra\quad 0 \equiv \frac{d}{dr} \int_{-r}^r g(z) (r^2-z^2)^n\dz = 2r \int_{-r}^r g(z) (r^2-z^2)^{n-1}\dz
\]
since the boundary term vanishes for $n\geq 1$. In particular, we conclude that 
\[
\int_{-1}^1 g(rs) (1-s^2)^{n}\ds \equiv 0 \quad \Ra\quad \int_{-1}^1 g(rs) (1-s^2)^{n-1}\ds \equiv 0
\]
for $r\geq 1$ and $n\geq 1$. Since $d$ is odd, we can reduce the integer exponent $n= \frac{d-3}2$ inductively until $n=0$. Then, by the same consideration as in the case $d=3$, the result is proved.
\end{proof}

We now prove the abstract statement about measures on the unit interval. 

\begin{proof}[Proof of Lemma \ref{lemma auxiliary bernstein}]
{\bf Lower bound.} Let $\mu$ be a finite signed measure satisfying the moment conditions
\[\showlabel\label{eq moment conditions}
\int_{0}^1 s\,\d\mu_s =1, \qquad \int_{0}^1 s^{2k}\,\d\mu_s = 0\quad\forall\ 0\leq k\leq n-1
\]
 Then
\[\showlabel\label{eq moments and norm}
\int_{0}^1 s\,\d\mu_s = \int_{0}^1 \left(s- \sum_{k=0}^n a_k s^{2k}\right)\,\d\mu_s \leq \left\|s- \sum_{k=0}^n a_k s^{2k}\right\|_{L^\infty(0,1)}\,|\mu|([0,1])
\]
by definition. Taking the infimum over the parameters $a_0,\dots, a_n$ on the right, we find that 
\[
1 = \int_{-1}^1 s\,\d\mu_s \leq \dist_{L^\infty(0,1)}\big(s\mapsto s, \:\mathrm{span}\{1,s^2, \dots, s^{2n}\}\big)\cdot \|\mu\|
\]
i.e.\
\[
\|\mu\| \geq \frac{1}{\dist_{L^\infty(0,1)}\big(s\mapsto s,\: \mathrm{span}\{1,s^2, \dots, s^{2m}\}\big)} = \frac1{\dist_{L^\infty(-1,1)}\big(s\mapsto |s|,\: \mathrm{span}\{1,s, \dots, s^{2m}\}\big)}.
\]
The asymptotics of 
\[
\beta_n:= \dist_{L^\infty(-1,1)}\big(s\mapsto |s|, \mathrm{span}\{1,s, \dots, s^{2m}\}\big)
\]
are known due to Bernstein \cite{bernstein1912ordre} and Varga and Carpenter \cite{varga1985bernstein} who proved that $\lim_{n\to \infty} n\,\beta_n =: \beta \approx 0.28$, so 
\[
\liminf_{n\to\infty}\frac{\gamma_n}n \geq \liminf_{n\to\infty} \frac1{n\beta_n} = \frac1\beta \approx 3.57.
\]

{\bf Upper bound: Step 0.} Note that due to compactness, there exist parameters $a_0, \dots, a_n$ such that
\[
\left\|s - \sum_{k=0}^na_ks^{2k}\right\|_{L^\infty(0,1)} = \dist_{L^\infty(0,1)}\big(s\mapsto s, \:\mathrm{span}\{1,s^2, \dots, s^{2m}\}\big).
\]
We fix $a_0, \dots, a_n$ accordingly. Further note that equality is attained in \eqref{eq moments and norm} if the measure $\mu$ is supported on the set of points
\[
\Theta := \left\{ s\in [0,1] : \left|s - \sum_{k=0}^n a_k s^{2k}\right| = \max_{r\in[0,1]}\left|r - \sum_{k=0}^n a_k r^{2k}\right| \right\}
\]
and the measure 
\[\showlabel\label{eq no cancellations}
\widetilde \mu = \left(s - \sum_{k=0}^n a_k s^{2k}\right)\cdot \mu
\]
which has density $s - \sum_{k=0}^n a_k s^{2k}$ with respect to $\mu$ is non-negative, i.e.\ $\mu$ has ``the right sign'' at all points. If such a $\mu$ exists, it therefore serves as a matching upper bound and the Lemma is proved. It is, however, not immediately clear whether there exists a signed measure $\mu$ supported on $\Theta$ which satisfies the moment conditions \eqref{eq moment conditions} and positivity condition \eqref{eq no cancellations}. In the following, we will prove that $\mu$ indeed does exist.

{\bf Step 1.} Due to compactness, $\Theta$ is a non-empty subset of $[0,1]$. Additionally
\[
\Theta \subseteq \{0,1\} \cup \left\{s \in \R : 2 \sum_{k=1}^nka_k\,s^{2k-1} = 1\right\}
\]
since the function $s\mapsto s - \sum_{k=0}^n a_k s^{2k}$ is either maximal or minimal at $s\in \Theta$. By the fundamental theorem of algebra, $\Theta = \{s_1, \dots, s_N\}$ is thus a finite subset of $[0,1]$. In this step, we prove that $0,1\in \Theta$ and $\Theta \cap (0,1) = n$.

Note that $\sum_{k=1}^n a_ks^{2k}$ is also an optimal polynomial approximation of the function $h(s) = |s|$ in $C^0[-1,1]$ in the space $\mathcal P_{2n+1}$ of polynomials of degree at most $2n+1$, since the optimal approximation is an even polynomial. By Chebyshev's equi-oscillation Theorem \cite[Section 6.9]{kincaid2009numerical}, there exist $N\geq 2n+3$ distinct points $t_1 < \dots < t_{N}$ such that the error 
\[
e(s) = |s|- \sum_{k=0}^n a_k s^{2k}
\]
satisfies 
\[
|e(t_i)| = \max_{s\in [-1,1]} |e(s)| \quad \forall\ i=1, \dots, N\qquad \text{and }e(t_i)e(t_{i+1}) <0 \quad\forall\ i=1, \dots, N-1,
\]
i.e. there are $N\geq 2n+3$ distinct points where the deviation from the target function is largest, and the oscillation around the target function at consecutive points $t_i, t_{i+1}$ goes in opposite directions.

Clearly, if $e$ is maximal at $s\in [-1,0]$ if and only if it is maximal at $(-s)\in [0,1]$. Therefore, there exist at least $\lceil N/2\rceil = \lceil(2n+3)/2\rceil = n+2$ points in $\Theta= [0,1]\cap \argmax e$. Rounding up is required since $2n+3$ is odd, and the point $0$ counts fully towards $\Theta\subset [0,1]$. Thus $|\Theta| \geq n+2$. 

It remains to show that $|\Theta|\leq n+2$. We prove this only if $n\geq1$, as the case $n=0$ of approximation by constant functions can be solved explicitly by direct inspection by the constant polynomial $a_0 = 1/2$.

Assume for a contradiction that $|\Theta| \geq n+3$. Then there exist at least $n+1$ distinct points in $\Theta\cap (0,1)$. Since $e$ is either maximal at $s\in \Theta\cap (0,1)$, we conclude that $e'(s) = 0$ for every $s\in \Theta \cap (0,1)$. By Rolle's Theorem, between any two points $s, s'$ such that $e'(s) = e'(s')$, there exists $s^*\in (s,s')$ such that $e''(s^*) = 0$. In particular, $e''$ has at least $n$ distinct zeros in $(0,1)$. Since $e''$ is even, it follows that $e''$ has at least $2n$ distinct zeros. But, since $e''$ is a polynomial of degree $2n-2$, it follows that $e''\equiv 0$ and thus that $e$ is a quadratic polynomial on $(0,1)$. On the other hand, we have seen that there exist at least $n+1$ points in $\Theta\cap (0,1)$, meaning that there are $n+1>1$ points in $(0,1)$ at which $e'$ vanishes. We conclude that $e'\equiv 0$, i.e.\ $e$ is a linear polynomial on $(0,1)$. It is easy to see that this is not optimal in terms of approximation.

{\bf Step 2.} We claim that the $(n+2)\times (n+2)$-Vandermonde type matrix
\[
V = \begin{pmatrix} s_0 & s_1& \dots &s_{n+1}\\1&1& \dots &1\\  s_0^2 & s_1^2 & \dots &s_{n+1}^2\\ s_0^4 & s_1^4 & \dots &s_{n+1}^4\\ \vdots &\vdots &\ddots &\vdots\\ s_0^{2n} & s_1^{2n} & \dots &s_{n+1}^{2n}\end{pmatrix}
\]
is invertible for any distinct $n+2$ points $0\leq s_0<\dots< s_{n+1}\leq 1$. This is true by classical results \cite{lundengaard2017generalized} for the $(n+1)\times(n+1)$ Vandermonde submatrix
\[
V = \begin{pmatrix} 1& \dots &1\\ s_0^2 & \dots &s_{n}^2\\ \vdots  &\ddots &\vdots\\ s_0^{2n} & \dots &s_{n}^{2n}\end{pmatrix}
\]
since the points $s_0^2, \dots, s_n^2$ are distinct.  It remains to show that the first row is linearly independent from the others, i.e.\ there exist no coefficients $a_0, \dots, a_k$ such that $s = \sum_{k=0}^n a_k s^{2k}$ at $(n+2)$ distinct points in $[0,1]$. Assume the contrary. Then there are $n+2$ distinct points $s_0 < \dots < s_{n+1}\in [0,1]$ such that 
\[
0 = s - \sum_{k=0}^{n} a_k s^{2k}, \qquad s\in \{s_0, \dots, s_{n+1}\}.
\]
By Rolle's theorem, between two such points $s_i, s_{i+1}$ there exists $\xi_i$ such that 
\[
0 = \frac{d}{ds}\bigg|_{s=\xi_i} \left(s - \sum_{k=0}^{n} a_k s^{2k}\right).
\]
The contradiction follows as in Step 1 of this proof.

{\bf Step 3.} Combining the results of the second and third step of this proof, we can choose $\{s_0, \dots, s_{n+1}\} = \Theta$ and find a unique vector $\nu\in \R^{d+2}$ such that 
\[\showlabel\label{eq linear moment system}
V\nu= (1, 0,0,\dots, 0)^T 
\]
The measure under consideration is now
\[
\mu= \sum_{i=0}^{n+1} \mu_i \delta_{s_i}\quad\text{which satisfies } \int_0^1s^{2k}\,\d\mu_s = \begin{cases} (V\nu)_1 &k=0\\ (V\nu)_{k+2} &k\geq 1\end{cases} = 0, \quad \int_0^1s^{2k}\,\d\mu_s = (V\nu)_2 = 1
\]
by construction. Thus the moment conditions are met. It remains to show that $\mu_i \cdot \left(s- \sum_{k=0}^n a_k s^{2k}\right)$ does not change sign in order to ensure that equality is attained in H\"older's inequality. Using Chebyshev's equi-oscillation theorem again, it suffices to show that $\mu_i$ and $\mu_{i+1}$ have opposite signs for all $i$.

For any $i\in\{0,\dots, n\}$, consider the unique even polynomial $P$ of degree $n$ such that $P(s_j) = 0$ for $0\leq j\leq n+1$ except $j\in \{i, i+1\}$. Then, since $P$ is an even polynomial of degree $\leq 2n$
\[\showlabel\label{eq consecutive oscillation}
0 = \int_0^1 P(s)\,\d\mu_s = \mu_i P(s_i) + \mu_{i+1}P(s_{i+1}),
\]
but since $P$ has $2n$ zeros at $\pm s_j$ for $i\notin\{i, i+1\}$, we find that $P(s)\neq 0$ for any $s\in[s_i, s_{i+1}]$. Thus $P(s_i)$ and $P(s_{i+1})$ have the same sign. In order to satisfy \eqref{eq consecutive oscillation}, we therefore find that $\mu_i$ and $\mu_{i+1}$ must have different signs.
\end{proof}

\begin{proof}[Proof of the claim in the proof of Theorem \ref{theorem main 3}]
To see this, we use the lower bound
\[\showlabel\label{eq moments and norm 2}
\|\mu\|\geq \frac1{\dist_{L^\infty(\eps,1)}\big(s,\:\mathrm{span}\{1,s^2,\dots,s^{2n}\}\big)}
\]
from \eqref{eq moments and norm}. By replacing the variable $s$ by $s^2$, we find that 
\[
\dist_{L^\infty(\eps,1)}\big(s,\:\mathrm{span}\{1,s^2,\dots,s^{2n}\}\big) = \dist_{L^\infty(\eps^2 ,1)}\big(\sqrt s,\:\mathrm{span}\{1,s,\dots,s^{n}\}\big).
\]
Recall that the function $\sqrt s$ is an analytic function on the interval $[\eps^2, 1]$ and
\[
\sqrt{s} = 1 + \sum_{n=1}^\infty(-1)^{n+1}\frac{\prod_{k=1}^n (2k-1)}{2^n\,n!}(s-1)^n
 = 1 - \frac1{\sqrt\pi} \sum_{n=1}^\infty \frac{\Gamma(n+1/2)}{\Gamma(n+1)} (1-s)^n.
\]
The coefficients decay asymptotically as $n^{-1/2}$ since
\[
\lim_{n\to\infty}\left(\sqrt n\, \frac{\Gamma(n+1/2)}{\Gamma(n+1)}\right) = 1,
\]
so for every $\delta>0$, there exists $N\in\N$ which is independent of $\eps$ such that the $L^\infty$-distance of the function $s\mapsto\sqrt s$ from the space $\mathcal P_n$ of polynomials of degree $\leq n$ is at most
\begin{align*}
\dist_{L^\infty(\eps^2 ,1)}\big(\sqrt s,\:\mathcal P_n \big) &\leq \max_{s\in[\eps^2, 1]} \left|\sqrt{s} - 1 - \frac1{\sqrt\pi} \sum_{k=1}^n \frac{\Gamma(k+1/2)}{\Gamma(k+1)} (1-s)^k\right|\\
	&\leq \frac{1+\delta}{\sqrt{\pi n}} \sum_{k=n+1}^\infty  (1- \eps^2)^k\\
	&= (1+\delta)\frac{(1-\eps^2)^{n+1}}{ \sqrt{\pi n} \,\eps^2}.
\end{align*}

\end{proof}

\section{Brief proofs of known results}\label{appendix known results}

In this appendix, we merely sketch the proofs of known results. For a more detailed introduction, we recommend e.g.\ \cite{barron_new}. We begin by sketching a proof of Proposition \ref{proposition properties of barron functions}, where we established general properties of Barron 

\begin{proof}[Proof of Proposition \ref{proposition properties of barron functions}]
{\bf First claim.} We note that, assuming existence of the integrals and for fixed $x\in\R^d$, we have
\begin{align*}
f_\pi(0) &= \int_{\R^{d+2}}a\,\sigma(b)\,\d\pi\\
\left|f_\pi(x) - f_\pi(0)\right| &= \left|\int_{\R^{d+2}}a\,\big\{\sigma(w^Tx+b)-\sigma(b)\big\}\,\d\pi\right|
	\leq \int_{\R^{d+2}}|a|\,|w^Tx|\,\d\pi
	\leq \frac{|x|}2\int_{\R^{d+2}}|a|^2 + |w|^2 \,\d\pi.
\end{align*}
If the first integral exists, then also the integral defining $f_\pi(x)$ exists as the integrand is continuous and grows at most linearly. Then
\[
f_\pi(x) = f_\pi(x) - f_\pi(0) + f_\pi(0) = \int_{\R^{d+2}}a\,\big\{\sigma(w^Tx+b)-\sigma(b)\big\}\,\d\pi + f_\pi(0).
\]
Measurability is not an issue for fixed $x$ due to the continuity of the integrand.
For the sake of brevity, denote $h_{(a,w,b)}(x) = a\,\big\{\sigma(w^Tx+b) -\sigma(b)\big\}$. More generally, we note that the Bochner-integral
\[
f = c + \int_{\R^{d+2}} \left(x\mapsto h_{(a,w,b)}(x)\right) \,\d\pi_{(a,w,b)}
\]
converges in $C^0(K)$ for compact sets $K$ and in $L^p(\P)$ for $1\leq p<\infty$ and probability distributions $\P$ with finite $p$-th moments in $x$, i.e.\ the function $(a,w,b)\mapsto h_{(a,w,b)}$ is Bochner integrable with respect to $\pi$ when considered as a function with values in either $C^0(K)$ or $L^p(\P)$. To see this, consider step functions
\[
\tilde h_i = \sum_j  1_{Q_{ij}} h_{(a_{ij},w_{ij},b_{ij})}, \qquad \tilde f_i = \int_{\R^{d+2}} \tilde h_i \,\d\pi
\]
where $Q_{ij}$ are $(d+2)$-dimensional cubes of side length $2^{-i}$ whose union is $\bigcup_j Q_{ij}= [-2^i, 2^i]^{d+2}$ and $(a_{ij},w_{ij}, b_{ij})\in W_{ij}$. If $(a,w,b) \in Q_{ij}$, then 
\begin{align*}
|h_{(a,w,b)}(x) - h_{(a_{ij}, w_{ij}, b_{ij})}(x)| &\leq |a-a_{ij}| \,|\sigma(w^Tx+b)| + |a_{ij}| \,\big|\sigma(w^Tx+b) - \sigma(w_{ij}^Tx+b_{ij}\big|\\
	&\leq |a-a_{ij}|\,|w^Tx+b| + |a_{ij}| \big[|w-w_{ij}|\,|x| + |b-b_{ij}|\big]\\
	&\leq C \left( \frac{|a-a_{ij}|^2 + |w-w_{ij}|^2 + |b-b_{ij}|^2}{\eps} + \eps\,\big\{|a|^2+ |w|^2|x|^2 + |b|^2 \big\}\right)
\end{align*}
for any $\eps>0$. Fixing $\eps$ to be the square root of the side-length of $Q_{ij}$, we find that $\tilde f_i(x)\to f_\pi(x)$ pointwise for all $x$. Furthermore, $\tilde f_i$ is Lipschitz continuous in $x$ uniformly in $i$, so $\tilde f_i$ converges to a limit in $C^0(K)$ by the compact embedding of Lipschitz functions in $C^0$, which coincides with the pointwise limit $f_\pi$. In other words, the Bochner integral exists in $C^0$. The argument follows in $L^p(\P)$ by the dominated convergence theorem considering $|\tilde f_i|(x) \leq 2(1 + [f]_\B|x|)$ for all $x\in \R^d$.

{\bf Second claim.} In this step, we show that $V_0$ is a Banach space and illustrate that $\B$ and $\B_0$ are different spaces. The fact that $V_0$ is a Banach space follows as \cite[Lemma 1]{siegel2021characterization} from the previous claim, where we have shown the existence of $f\in \B_0$ as a Bochner integral in $L^2(\P)$, i.e.\ as a continuous convex combination not only pointwise, but in a function space.

To see that $\B\neq \B_0$, observe that any $f\in \B$ can be decomposed into a positively one-homogeneous and a bounded part due to \cite[Corollary 5.3]{barron_new}. On the other hand, in one dimension, the function $f(x) = \log(1+x^2)$ satisfies $f(0) = f'(0) = 0$ and has an integrable second derivative $f''(x) = 2\frac{1-x^2}{(1+x^2)^2}$. By Proposition \ref{proposition one-dimensional}, we find that $f\in \B_0$. Since $f$ is not bounded but grows sub-linearly, we conclude that $\B\subseteq \B_0\not\subseteq \B$. The first inclusion follows from the fact that $[f]_\B\leq \|f\|_\B$ as shown next. 

{\bf Third claim.} The claim that $[f]_\B\leq \|f\|_\B$ is self-evident by definition, as the full Barron norm also limits the magnitude of the bias.

{\bf Fourth claim.} Finally, we note that $f\in \B_0$ is Lipschitz-continuous, since
\begin{align*}
|f_\pi(x) - f_\pi(x')| &= \left|\int_{\R^{d+2}} a\,\big[\sigma(w^Tx+b) - \sigma(w^Tx'+b)\big] \,\d\pi_{(a,w,b)}\right|\\
	&\leq \int_{\R^{d+2}} |a| \,|w^T(x-x')|\,\d\pi_{(a,w,b)} \leq |x-x'|\int_{\R^{d+2}} |a| \,|w|\,\d\pi_{(a,w,b)}. 
\end{align*}
Taking the infimum over $\pi$ (and optionally noting that $2|a|\,|w|\leq |a|^2+|w|^2$), we find that $|f(x) - f(x')| \leq [f]_\B|x-x'|$.
\end{proof}

Proposition \ref{proposition structure theorem} is proved in \cite[Theorem 5.18]{barron_new} and Corollary \ref{corollary structure barron} follows from it directly. Let us sketch how the structure of one-dimensional Barron functions can be understood.

\begin{proof}[Proof of Proposition \ref{proposition one-dimensional}]
{\bf Upper bound.} Let $a\in\R$ and $f\in C^2(\R)$ be such that $f''\in L^1(\R)$. Then for $x>a$ we have
\begin{align*}
f(x) &= f(a) + \int_a^xf'(s)\cdot 1\ds = f(a) + f'(a)\,(x-a)- \int_a^x f''(s)\,(s-x)\ds \\
	&= f(a) + f'(a)\,\sigma(x-a)+ \int_a^\infty f''(s)\,\sigma(x-s)\ds
\end{align*}
and for $x<a$
\begin{align*}
f(x) &= f(a) - \int_x^a f'(s)\cdot 1\ds = f(a) - f'(a)\,(a-x)- \int_a^x f''(s)\,(s-x)\ds \\
	&= f(a) - f'(a)\,\sigma(a-x)+ \int_a^\infty f''(s)\,\sigma(x-s)\ds.
\end{align*}
Noting that the $\sigma$ terms in the first expression vanish for when $x<a$ and vice versa, we find that
\[
f(x) = f(a) + f'(a)\big[\sigma(x-a) - \sigma(a-x)\big] + \int_\R f''(s) \big[ \sigma(x-s)\,1_{(a,\infty)}(s) + 1_{(-\infty,a)}(s)\,\sigma(s-x)\big]\ds.
\]
Consequently, $f= f_\mu$ for a measure
\[
\mu = f'(a) \,\big[ \delta_{(1,a)} - \delta_{(-1,a)}\big] + f''(b)\,\cdot\big[\H^1|_{\{w=1, b>a\}} + \H^1|_{\{w=-1,b<a\}}\big]
\]
where $\delta$ denotes the atomic point measure of mass one and $\H^1$ denotes the one-dimensional Hausdorff measure, restricted to half-lines $\{w=1, b>a\}$ and $\{w=-1,b<a\}$. and hence
\[
[f]_\B = \inf_{f=f_\mu} \|\mu\|_{TV} \leq 2\inf_{a\in\R} |f'(a)| + \int_\R |f''(s)|\ds.
\]
By approximation, the same is true if $f\notin C^2$ and $f''$ is merely a measure.

{\bf Lower bound direction.} The bound
\[
[f]_\B \leq [f]_{Lip} = \sup_{a\in\R} \max_{v\in \partial f(a)} |v| = \sup_{a\in\R}|f'(a)|
\]
follows from Proposition \ref{proposition properties of barron functions} and the Rademacher Theorem on the differentiability of Lipschitz functions.
For the second form of the lower bound, let $f\in \B_0$, i.e.\ there exists a measure $\mu$ on $\R^2$ such that 
\begin{align*}
f(x) &= \int_{\R^2} \sigma(w^Tx+b) \d\mu_{(w,b)} =\int_{\{w=0\}} \sigma(b)\d\mu_{(w,b)} +  \int_{\R^2}|w|\,\sigma\big(w/|w|\,x + b/|w|\big) \,\d\mu_{(w,b)}.
\end{align*}
The second expression can be written as 
\[
f(x) = c +  f^+(x) + f^-(x) = c + \int_{\R} \sigma(x+b) \,\d\mu^+_{b} + \int_{\R} \sigma(-x+b) \,\d\mu^+_{b}
\]
where 
\[
\mu_\pm = \psi_\sharp \big(|w| \cdot 1_{\{\pm w>0\}}\cdot\mu\big), \qquad \psi(w,b) = b/|w|,
\]
i.e.\ $\mu_\pm$ is the push-forward of the measure which has density $|w|$ with respect to $\mu$ onto the real line.
If $\phi \in C_c^\infty(\R)$ is any function, then by exchanging the order of integration and integrating by parts, we find that
\begin{align*}
\int_{-\infty}^\infty f^+(x)\,\phi''(x)\dx &= \int_{-\infty}^\infty \phi''(x)\int_{\R} \sigma(x+b) \,\d\mu^+_{b} \dx\\
	&= \int_{\R} \int_{-b}^\infty \phi''(x)\,(x+b) \dx\,\d\mu^+_{b} \\
	&= - \int_{\R} \int_{-b}^\infty \phi'(x) \dx\,\d\mu^+_{b}\\
	&= \int_\R \phi(b) \,\d\mu^+_b
\end{align*}
we find that $(f^+)''=\mu^+$ in the distributional sense, and thus $f'' = \mu^+ + \mu^-$. In particular, 
\[
\|f''\|_{TV} = \|\mu^++\mu^-\|_{TV} \leq \inf_\mu \|\mu^+\|_{TV} + \|\mu^-\|_{TV} \leq \inf_\mu \int_{\R^2} |w|\,\d|\mu|_{(w,b)} = [f]_\B.
\]
\end{proof}

We sketch a proof of the direct approximation theorem for Barron spaces.

\begin{proof}[Proof of Proposition \ref{proposition direct approximation}]
{\bf Step 1.} Consider the Hilbert space $L^2(\P)$ and observe that $h_{(a,w,b)}\in H$ defined by $h_{(a,w,b)}(x) = a\,\big\{\sigma(w^Tx+b)-\sigma(b)\}$ has norm at most
\[
\|h_{(a,w,b)}\|^2_H = \int_{\R^d} a^2\big[\sigma(w^Tx+b)-\sigma(b)\big]^2\,\d\P \leq a^2 \int_{\R^d}|w^Tx|^2\,\d\P.
\]
We use Proposition \ref{proposition properties of barron functions} to write $f\in\B_0$ as
\[
f(x) = f(0) + \int_{\R^{d+2}} h_{(a,w,b)}(x)\,\d\pi_{(a,w,b)}.
\]

{\bf Step 2.} Using the homogeneity relation $\sigma(z) = \lambda^{-1}\sigma(\lambda z)$, the distribution $\pi$ can be normalized such that 
\[
|a|^2 = |w|^2 = \frac12\int_{\R^{d+2}}|a'|^2 + |w'|^2 \,\d\pi_{(a',w',b')}
\]
almost surely by considering the push-forward of $\pi$ along the map 
\[
T: \R^{d+2} \to \R^{d+2}, \qquad T(a,w,b) = \left(a\,\sqrt{\frac{|w|}{|a|}}, \: w \,\sqrt{\frac{|a|}{|w|}},w \,\sqrt{\frac{|a|}{|w|}} \: \right)
\]
if $a, w\neq 0$ and $T(a,w,b) =0$ otherwise, which satisfies $f_{T_\sharp\pi}\equiv f_\pi$. Thus for any $\eps>0$, $f-f(0)$ is in the $H$-closed convex hull of the family 
\[
\mathcal G_{\|f\|_\B+\eps} = \{h_{(a,w,b)} : |a| = |w| \leq \|f\|_\B + \eps\}.
\]

{\bf Step 3.} By the Maurey-Barron-Jones Lemma \cite[Lemma 1]{barron1993universal}, for every $m\in\N$ and every $\eps'>0$, there exist $h_{(a_i,w_i,b_i)} \in \G_{\|f\|_\B + \eps}$ such that 
\[
\left\|f - f(0)- \frac1m \sum_{i=1}^m h_{(a_i, w_i, b_i)}\right\|_H \leq \frac{\|f\|_\B+\eps}{\sqrt m} + \eps'.
\]
As the vectors $(a_i,w_i,b_i)$ are constrained to a compact domain of $\R^{d+2}$ and the map $\R^{d+2}\to H$, $(a,w,b)\mapsto h_{(a,w,b)}$ is continuous, we can set $\eps, \eps'\to 0$ and obtain the result without constant by an appropriate subsequence. 

Finally, we write $c= f(0) + \frac1m \sum_{i=1}^m a_i \sigma(b_i)$ for compatibility with the original notation.
\end{proof}

\section{Further results}\label{appendix extensions}

\subsection{On the decay of $f_d^*(x)$ for $x\neq 0$} 
Numerical experiments in Appendix \ref{appendix plots} suggest that $f_d^*(x)$ decays to zero exponentially fast for $x\neq 0$. While we cannot prove this in full generality, we show that
\[
0 \leq f_d^*(x) \leq C\,d^{3/2}\,\left(\frac{1-|x|^2}{|x|}\right)^\frac{d-3}2
\]
for a constant $C>0$ which is independent of $d$.
In particular, $f_d^*(x) \to 0$ exponentially fast in $d$ if $|x|>0.62$. To see this, observe that
\begin{align*}
f_d^*(r) &= c_d \int_{-1}^1 g(rs)\,(1-s^2)^\frac{d-3}2\ds = 2c_d\,r^{\frac{1-d}2}\int_{0}^r g(z) (r^2-z^2)^\frac{d-3}2 \dz\\
	&= -2c_d\,r^{\frac{1-d}2} \int_{r}^1 g(z) (r^2-z^2)^\frac{d-3}2 \dz
\end{align*}
for $r<1$ since $g$ is $L^2(0,1)$-orthogonal to the polynomial $(r^2-z^2)^\frac{d-3}2$. Since $\|g\|_{L^\infty(0,1)}\leq \gamma_\frac{d-1}2$, we may estimate 
\[
|f_d^*(r)| \leq 2c_d \,r^\frac{1-d}2\,\gamma_d\,(1-r^2)^\frac{d-3}2 = \frac{2c_d\gamma_{\frac{d-1}2}}r\, \left(\frac{1-r^2}r\right)^\frac{d-3}2.
\]
The pre-factor grows as $d^{3/2}$ since $\gamma_d \sim d$ and 
\[
c_d = \frac1{\int_{-1}^1 (1-s^2)^\frac{d-3}2\ds} = \frac{\Gamma\left(d/2\right)}{\sqrt\pi\,\Gamma\left(\frac{d-1}2\right)} \sim \sqrt{\frac{d}{2\pi}}.
\]
Finally, we note that $(1-r^2)/r <1$ holds for positive $r$ if and only if $r> \frac{\sqrt 5 -1}2\approx 0.618$.

\subsection{Non-radial minimum norm solutions}

In this note, we constructed 
\[\showlabel\label{eq minimization appendix}
f_d^* \in \argmin_{f\in \F} [f]_\B, \qquad \F = \left\{f \in \B_0(\R^d) : f(0) =1 \text{ and } f\equiv 0 \text{ on }\R^d\setminus B_1(0)\right\}.
\]
Since both the Barron semi-norm and the class $\F$ are convex and invariant under rotations of the data domain, we find that there exists at least one minimizer which is radially symmetric. By direct construction, we saw that this minimizer 
\[
f_d^*(x) = 1 + \sum_{i=0}^\frac{d+1}2 \mu_i \avint_{S^{d-1}} \sigma(\nu^Tx-b_i)\,\d\H^{d-1}_\nu
\]
is unique, at least if $d$ is odd. The biases $0=b_0 < \dots < b_{(d+1)/2} = 1$ and weights $\mu_i \neq 0$ are given by the optimization process. Our proof does not exclude the existence of other minimizers, which are not radially symmetric. In fact, assume that $\phi_i\in L^\infty(S^{d-1})$ for $i=0,\dots, \frac{d+1}2$ are functions such that
\[\showlabel\label{eq structure fphi}
f_\phi(x) = \sum_{i=0}^\frac{d+1}2 \avint_{S^{d-1}} \sigma(\nu^Tx-b_i)\,\phi_i(\nu)\,\d\H^{d-1}_\nu = 0 \qquad\forall\ |x| \geq 1.
\]
Then trivially also $f_\phi(0) = 0$ since $b_i\geq 0$, and thus 
\[
(f_d^* + \eps f_\phi)(x) = 1 + \sum_{i=0}^\frac{d+1}2 \avint_{S^{d-1}} \big(\mu_i + \eps \phi_i(\nu)\big)\,\sigma(\nu^Tx-b_i)\,\d\H^{d-1}_\nu = \begin{cases} 1 &x=0\\ 0 &|x|\geq 1\end{cases}.
\]
Since $f_d^*$ is the {\em unique} radial solution, we can average in the radial direction and observe that $\int_{S^{d-1}}\phi_i(\nu) = 0$ for all $i=0,\dots,\frac{d+1}2$. The Barron norm of the combined solution is 
\[
\sum_{i=0}^{(d+1)/2} \frac{ \|\mu_i + \eps\phi_i\|_{L^1(S^{d-1})}}{\H^{d-1}(S^{d-1})} = \sum_{i=0}^{(d+1)/2} |\mu_i|
\]
if $\eps$ is so small that $\eps \|\phi_i\|_{L^\infty} \leq |\mu_i|$ for all $i$, since the function $\mu_i + \eps\phi_i$ does not change signs in this case, and the integral of $\phi_i$ averages to zero. In particular, if $(\phi_0,\dots, \phi_{(d+1)/2})$ exist such that $f_\phi$ is supported in $\overline{B_1(0)}$ and fails to be radial, then a non-radial minimizer exists. 

By considering the behavior of $f_\phi$ at infinity, we establish two conditions: $\sum_{i=0}^{(d+1)/2} \phi_i \equiv 0$ in order to have $f_\phi$ bounded, and $\sum_{i=0}^{(d+1)/2} b_i\phi_i\equiv 0$ in order to have $\lim_{x\to\infty}f_\phi(x) = 0$.

\begin{lemma}
Assume there exist $\frac{d+3}2$ measures $\bar \mu_i$ on $S^{d-1}$ such that
\[
f_{\bar\mu}(x) := \sum_{i=0}^\frac{d+1}2 \avint_{S^{d-1}} \sigma(\nu^Tx-b_i)\,\d\bar\mu_i = 0
\]
for all $|x| \geq 1$ and $f_{\bar\mu}(x)\not\equiv 0$. Then there exists a minimizer $\hat f_d\in \F$ of the Barron semi-norm which is not radially symmetric. Without loss of generality, we may assume that $\hat f_d$ is radially symmetric with respect to $(x_2,\dots,x_d)$.
\end{lemma}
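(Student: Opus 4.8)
The plan is to perturb the unique radial minimizer $f_d^*$ of Theorem~\ref{theorem main 1} inside $\F$ along the direction given by the hypothetical measures: set $\hat f_d:=f_d^*+\eps\,f_{\bar\mu}$ for small $\eps$. Since every $b_i\ge 0$ one has $f_{\bar\mu}(0)=\sum_i\avint_{S^{d-1}}\sigma(-b_i)\,\d\bar\mu_i=0$, and $f_{\bar\mu}\equiv 0$ on $\{|x|\ge1\}$ by hypothesis, so $\hat f_d\in\F$ for every $\eps$; it remains to show that for $|\eps|$ small it is norm-minimizing and non-radial. Both follow from the minimality and the \emph{uniqueness} of $f_d^*$ together with the finite representation $f_d^*(x)=1+\sum_{i=0}^{(d+1)/2}\mu_i\avint_{S^{d-1}}\sigma(\nu^Tx-b_i)\,\d\H^{d-1}$ in which every $\mu_i\neq 0$ (Lemma~\ref{lemma auxiliary bernstein}).

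To see that $\hat f_d$ is a minimizer, write
\[
\hat f_d = 1 + \sum_{i=0}^{(d+1)/2}\avint_{S^{d-1}}\sigma(\nu^Tx-b_i)\,\d\big(\mu_i\,\H^{d-1}+\eps\,\bar\mu_i\big);
\]
by the definition of $[\,\cdot\,]_\B$ as an infimum over representations (the additive constant contributing nothing to it), this gives, as in the computation preceding the statement,
\[
[\hat f_d]_\B \le \sum_{i=0}^{(d+1)/2}\frac{\|\mu_i\,\H^{d-1}+\eps\,\bar\mu_i\|_{TV}}{\H^{d-1}(S^{d-1})}.
\]
Taking $\bar\mu_i=\phi_i\,\H^{d-1}$ with $\phi_i\in L^\infty(S^{d-1})$, for $|\eps|\,\|\phi_i\|_{L^\infty}\le|\mu_i|$ the density $\mu_i+\eps\phi_i$ keeps the sign of $\mu_i$, so $\|\mu_i\H^{d-1}+\eps\bar\mu_i\|_{TV}=|\mu_i|\,\H^{d-1}(S^{d-1})+\eps\,\sign(\mu_i)\int_{S^{d-1}}\phi_i\,\d\H^{d-1}$ and hence $[\hat f_d]_\B\le[f_d^*]_\B+\eps A$ with $A:=\frac{1}{\H^{d-1}(S^{d-1})}\sum_i\sign(\mu_i)\int_{S^{d-1}}\phi_i\,\d\H^{d-1}$. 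Since $f_d^*$ minimizes $[\,\cdot\,]_\B$ over $\F$ and $\hat f_d\in\F$ for both signs of $\eps$, we get $[f_d^*]_\B\le[\hat f_d]_\B\le[f_d^*]_\B+\eps A$ for all small $\eps$, so $A=0$ and $[\hat f_d]_\B=[f_d^*]_\B$; thus $\hat f_d$ is a minimizer whenever $|\eps|$ is small.

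For non-radiality: if this minimizer $\hat f_d$ were radially symmetric, it would be a radial minimizer distinct from $f_d^*$ (since $\eps f_{\bar\mu}\not\equiv 0$), contradicting the uniqueness of the radial minimizer in Theorem~\ref{theorem main 1}; hence $\hat f_d$ is a non-radial minimizer, which already proves the main assertion. To put it in the stated normal form, I would first average the family $(\bar\mu_i)$ over the subgroup $SO(d-1)\subset SO(d)$ fixing the $x_1$-axis: the averaged measures depend only on $\nu_1$, the induced function is radially symmetric in $(x_2,\dots,x_d)$ and still vanishes on $\{|x|\ge1\}$, and rerunning the previous two paragraphs with it gives a minimizer with the stated symmetry, provided the averaging does not annihilate the ghost. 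The latter is arranged by first rotating $(\bar\mu_i)$ by a suitable $R\in SO(d)$ — still admissible, since the property of inducing a function vanishing on $\{|x|\ge1\}$ is $SO(d)$-invariant — and using that the $SO(d)$-translates of zonal functions span $L^2(S^{d-1})$, so that some rotate has nonzero $SO(d-1)$-average.

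The conceptual core, which I expect to be the point requiring most care, is the step in the second paragraph: minimality of $f_d^*$ together with the \emph{finite} representation forces the first-order variation $A$ of the semi-norm along the admissible direction $f_{\bar\mu}$ to vanish, so that the perturbed function is automatically norm-minimizing, and then uniqueness of the \emph{radial} minimizer converts ``$f_{\bar\mu}\not\equiv0$'' into ``$\hat f_d$ non-radial''. Two technical points remain: (i) for a general measure $\bar\mu_i$ with a part singular with respect to $\H^{d-1}$, the bound on $[\hat f_d]_\B$ is only an inequality, in fact strict for every $\eps\neq0$, so one should either assume the $\bar\mu_i$ absolutely continuous with bounded density (matching the $\phi_i\in L^\infty(S^{d-1})$ of the preceding discussion) or show separately that the extremal directions have bounded density; and (ii) justifying the axial ``without loss of generality'', i.e.\ that the $SO(d-1)$-symmetrization can be taken nonzero, which is exactly where the zonal-spanning property enters.
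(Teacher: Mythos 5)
Your proposal follows the same route as the paper: perturb the unique radial minimizer by $\eps f_{\bar\mu}$, use sign-preservation of the densities $\mu_i+\eps\phi_i$ for small $\eps$ to control the total variation of the perturbed representation, and invoke uniqueness of the radial minimizer to convert $f_{\bar\mu}\not\equiv 0$ into non-radiality. Your two-sided first-variation argument forcing $A=0$ is correct and arguably cleaner than the paper's version, which instead deduces the stronger statement $\int_{S^{d-1}}\phi_i\,\d\H^{d-1}=0$ for each $i$ from uniqueness of the radial minimizer; both yield $[\hat f_d]_\B=[f_d^*]_\B$.

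The one genuine gap is exactly the point you flag as (i): the lemma hypothesizes arbitrary measures $\bar\mu_i$ on $S^{d-1}$, and your sign-preservation step fails whenever $\bar\mu_i$ has a part singular with respect to $\H^{d-1}$ (e.g.\ Dirac masses), since then $\|\mu_i\H^{d-1}+\eps\bar\mu_i\|_{TV}=|\mu_i|\,\H^{d-1}(S^{d-1})+|\eps|\,\|\bar\mu_i^{\mathrm{sing}}\|_{TV}+O(\eps)$ and the upper bound strictly increases for every $\eps\neq0$. Neither of your two proposed remedies is carried out, and the paper uses a third: rotational mollification. Convolving with a smooth probability density $\psi_\delta$ on $SO(d)$ supported near the identity, i.e.\ setting $\tilde\mu_{i,\delta}(B)=\int_{SO(d)}\psi_\delta(O)\,\bar\mu_i(O\cdot B)\,\d H_O$, produces measures that are absolutely continuous with $L^\infty$ densities; the associated function still vanishes at $0$ and on $\{|x|\geq1\}$ because these constraints are rotation-invariant, and it converges locally uniformly to $f_{\bar\mu}$ as $\delta\to0$, hence is not identically zero for $\delta$ small. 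After this reduction your argument applies verbatim. On point (ii), the axial ``without loss of generality'' needs no zonal-spanning argument: since $f_\phi\not\equiv0$ there is a ray $\{t\bar\nu:t\geq0\}$ on which it is not identically zero; rotate so that $\bar\nu=e_1$ and average over the subgroup of rotations fixing $e_1$. The average leaves the values on the $e_1$-axis unchanged, so it cannot annihilate the function, and it is radially symmetric in $(x_2,\dots,x_d)$ by construction.
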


\begin{proof}
{\bf Step 1.} Assume for now that $f_{\bar\mu}$ is identically zero. Let $\psi_\delta$ be a $C^\infty$-probability density on the group of rotations $SO(d)$ which is supported in an $\delta$-neighbourhood of the unit matrix, and let $H$ be the Haar measure on $SO(d)$. Define the radial mollification
\begin{align*}
f_{\bar\mu,\delta}(x) &= \int_{SO(d)} \psi_\delta(O) \,f_{\bar\mu}(O^{T}x)\,\d H_O\\
	&= \sum_{i=0}^\frac{d+1}2 \avint_{S^{d-1}} \left(\int_{SO(d)} \psi_\delta(O)\, \sigma((O\nu)^Tx-b_i) \,\d H_O
\right)\,\d\bar\mu_{i,\nu}\\
	&= \sum_{i=0}^\frac{d+1}2 \int_{S^{d-1}} \sigma(\nu^Tx-b_i)\,\d\tilde\mu_i
\end{align*}
where 
\[
\tilde\mu_{i,\delta}(B) = \int_{SO(d)} \psi_\delta(O)\,\bar\mu_i(O\cdot B)\,\d H_O.
\]
We make three observations.
\begin{enumerate}
\item $f_{\bar\mu,\delta}(x) =0$ if $x=0$ or $|x|\geq 1$. 
\item $f_{\bar\mu,\delta}\to f_{\bar\mu}$ as $\delta\to 0$ (pointwise and locally uniformly), so $f_{\bar\mu,\delta}$ cannot be identically zero for sufficiently small $\delta>0$.
\item $\tilde\mu_i$ is absolutely continuous with respect to the uniform distribution on the sphere since 
\[
|\tilde\mu_i| (B) \leq \|\psi_\delta\|_{L^\infty} \|\bar\mu_i\|_{TV}.
\]
Due to the uniform estimate, the Radon-Nikodym derivative $\phi_{i,\delta}:= \frac{\d \tilde\mu_{i,\delta}}{\d\H^{d-1}}$ is an $L^\infty(S^{d-1})$-function. 
\end{enumerate}

We now fix $\eps,\delta$ small enough, write $\phi_i = \phi_{i,\delta}$ and note that $f_d^* + \eps f_{\phi}$ is also a solution to \eqref{eq minimization appendix}. In particular, $f_\phi$ cannot be radially symmetric since $f_d^*$ is the unique radially symmetric minimizer.

{\bf Step 2.} Take $f_\phi$ to be non-trivial as implied by step 1. Then there exists at least one direction $\bar \nu$ such that $f_\phi(t\bar \nu) \not \equiv 0$. Without loss of generality, we may take $\bar\nu = e_1$. We can now average over all rotations which leave $e_1$ fixed. The resulting function $\hat f_\phi$ is radially symmetric in all components orthogonal to $e_1$, i.e.\ in $(x_2,\dots, x_d)$. Since we only average over rotations which leave the $e_1$-direction fixed, we have $\hat f_\phi(te_1)= f_\phi(te_1)\not\equiv0$. In particular, we may assume that $f_\phi$ has the desired symmetry.
\end{proof}

The question whether there exists $\bar\mu = (\bar\mu_0, \dots, \bar\mu_{(d+1)/2})$ such that $f_{\bar\mu} \equiv 0$ on $\R^d\setminus B_1(0)$ but $f_{\bar\mu}\not\equiv 0$ on $\R^d$ can be rephrased in terms of functional analysis. Namely, if we understand $\bar\mu$ as an element of the dual space $Z^*$ of $Z:= C^0(S^{d-1};\R^{(d+3)/2})$ and we associate to $x\in \R^d$ the function $h_x\in Z$ given by $\nu\mapsto \big(\sigma(\nu^Tx-b_0), \dots, \sigma(\nu^Tx-b_{(d+1)/2})\big)$, then we can write $f_{\bar\mu}(x) = \langle \bar\mu, h_x\rangle_{Z^*,Z}$ as a duality product.

In particular, we consider two subspaces $V_1, V_2\subseteq Z$:
\[\showlabel\label{eq v1 v2}
V_1 = \mathrm{span}\{h_x : x\in\R^d\}, \qquad V_2 = \mathrm{span}\{h_x : |x|\geq 1\}.
\]
Obviously $V_2\subseteq V_1$. We note the following: If $\overline{V_2}\neq \overline{V_1}$, then by the Hahn-Banach theorem there exists $\mu\in Z^*$ such that $\langle \mu, v\rangle = 0$ for all $v\in \overline{V_2}$ but not all $v\in \overline{V_1}$. It is easy to see by contradiction that there exists in particular $h_x$ with $|x|<1$ such that $f_\mu(x) = \langle \mu, h_x\rangle_{Z^*,Z} \neq 0$. Note that $x\neq 0$ since $f_\mu(0) = 0$ for any $\mu$ by design. 

We have thus proved the following.

\begin{corollary}
Denote $Z:= C^0(S^{d-1};\R^{(d+3)/2})$ and $h_x\in Z$, $h_x(\nu) = \big(\sigma(\nu^Tx-b_0), \dots, \sigma(\nu^Tx-b_{(d+1)/2})\big)$. Consider the subspaces $V_1, V_2$ of $Z$ as in \eqref{eq v1 v2}. There exists a non-radial solution $f$ of the minimization problem \eqref{eq minimization appendix} if and only if $\overline{V_1}\neq\overline{V_2}$.
\end{corollary}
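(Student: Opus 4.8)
The plan is to prove both implications after reducing the statement to a single equivalence: \emph{a non-radial minimizer exists if and only if there is a tuple $\bar\mu=(\bar\mu_0,\dots,\bar\mu_{(d+1)/2})$ of finite signed Radon measures on $S^{d-1}$ with $f_{\bar\mu}\equiv 0$ on $\R^d\setminus B_1(0)$ and $f_{\bar\mu}\not\equiv 0$}, where $f_{\bar\mu}(x)=\langle\bar\mu,h_x\rangle_{Z^*,Z}$. Identifying $Z^*$ with such tuples and noting $f_{\bar\mu}(0)=0$ always (since $b_i\geq 0$ forces $h_0=0$), the existence of such a $\bar\mu$ is, by Hahn--Banach and $V_2\subseteq V_1$, precisely the statement $\overline{V_1}\neq\overline{V_2}$. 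I also record that, $f_d^*$ being the unique radial minimizer (Theorem~\ref{theorem main 1}), ``a non-radial minimizer exists'' means exactly ``$\M\neq\{f_d^*\}$''.

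\emph{If $\overline{V_1}\neq\overline{V_2}$.} This direction is essentially the discussion preceding the statement: Hahn--Banach gives $\bar\mu\in Z^*$ annihilating $\overline{V_2}$ but not $\overline{V_1}$, hence $f_{\bar\mu}$ vanishes on $\{|x|\geq1\}$ and is nonzero at some $x$ with $0<|x|<1$; the Lemma above then produces a non-radial minimizer.

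\emph{If a non-radial minimizer exists.} Let $\hat f_d\in\M$ be non-radial and set $g:=\hat f_d-f_d^*$, a nonzero Barron function vanishing on $\R^d\setminus B_1(0)$ with $g(0)=0$. The heart of the argument is to show that $g$ can be written as $\sum_{i}\avint_{S^{d-1}}\sigma(\nu^Tx-b_i)\,\d\bar\mu_i$ for suitable measures $\bar\mu_i$ on $S^{d-1}$, i.e.\ that $g=f_{\bar\mu}$; once this is known, $\bar\mu\in Z^*$ annihilates $\overline{V_2}$ (as $g\equiv0$ outside $B_1(0)$) but not $\overline{V_1}$ (as $g\not\equiv0$), giving $\overline{V_1}\neq\overline{V_2}$. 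To obtain the representation of $g$ I would prove that \emph{every} minimizer of $[\,\cdot\,]_\B$ over $\F$ admits a representing measure supported on $\bigcup_{i=0}^{(d+1)/2}\big(S^{d-1}\times\{b_i\}\big)$. The tool is convex duality: the program ``minimize $[f]_\B$ subject to $f(0)=1$ and $f\equiv0$ on $\{|x|\geq1\}$'' has an $\ell^1$-type objective over the ReLU dictionary with affine equality constraints, and strong duality should furnish a certificate $L=\lambda_0\delta_0+\lambda$ with $\lambda$ a signed measure on $\{|x|\geq1\}$ and $|\langle L,\sigma(\omega^T\cdot+\beta)\rangle|\leq1$ for all $(\omega,\beta)\in S^{d-1}\times\R$; complementary slackness then forces the representing measure of every minimizer to sit on the active set $A=\{(\omega,\beta):|\langle L,\sigma(\omega^T\cdot+\beta)\rangle|=1\}$ with matching signs. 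By rotational invariance of both the objective and the constraints, together with convexity of the dual-feasible set, $L$ may be taken rotationally symmetric, so $A=S^{d-1}\times B$ for a closed set $B$ of active biases. Reducing the radial problem to the one-dimensional moment problem of Lemma~\ref{lemma auxiliary bernstein} via Lemma~\ref{lemma 1d reduction}, and using the uniqueness/equioscillation of the best even polynomial approximant of $s$ on $[0,1]$, I would identify the part of $B$ below $1$ with exactly $\{b_0,\dots,b_{(d+1)/2}\}$; biases above $1$ contribute atoms vanishing on $\overline{B_1(0)}$ and may be discarded from a minimal representation. This yields the desired form of $\hat f_d$ and hence of $g$.

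The step I expect to be the main obstacle is precisely this structural claim: that \emph{all} minimizers, not merely the constructed radial one, use only the biases $b_0,\dots,b_{(d+1)/2}$. Making it rigorous requires verifying strong duality for the infinite-dimensional constrained program and controlling the active bias set $B$ carefully --- in particular excluding spurious active biases and handling unbounded biases, where convergence of the representing integrals must be treated as in Proposition~\ref{proposition properties of barron functions}. A variant avoiding a full representer theorem is to work along the segment $t\mapsto f_d^*+t\,(\hat f_d-f_d^*)\subseteq\M$, $t\in[0,1]$, along which $[\,\cdot\,]_\B$ is constant, and to compute the first variation of $[\,\cdot\,]_\B$ at $f_d^*$: the admissible perturbation $\hat f_d-f_d^*$ must then lie in the linear span of the $b_i$-atom families. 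This is the same active-set computation in another guise.
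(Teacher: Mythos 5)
Your forward implication is exactly the paper's argument and is correct: by Hahn--Banach, $\overline{V_2}\subsetneq\overline{V_1}$ yields $\bar\mu\in Z^*$ annihilating $\overline{V_2}$ but not $\overline{V_1}$, hence a nontrivial $f_{\bar\mu}$ vanishing on $\R^d\setminus B_1(0)$, and the Lemma preceding the corollary converts this into a non-radial minimizer.

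The converse is where the genuine gap sits, and you have located it yourself: to get from a non-radial minimizer $\hat f_d$ to $\overline{V_1}\neq\overline{V_2}$ you must know that $\hat f_d - f_d^*$ is of the form $f_{\bar\mu}$, i.e.\ that \emph{every} minimizer admits a representing measure supported on the finitely many biases $b_0,\dots,b_{(d+1)/2}$. Your convex-duality sketch does not establish this: strong duality for the infinite-dimensional program, the existence of the dual certificate $L$, the identification of the active bias set, and -- before complementary slackness can even be invoked -- the existence of a norm-attaining representing measure for an arbitrary minimizer (nontrivial here because the bias variable is unbounded) are all asserted rather than proved. You should be aware that the paper supplies no proof of this direction either: the discussion preceding the corollary establishes only the forward implication, and the conclusion explicitly flags this appendix as heuristic, so you are not overlooking an argument the paper actually gives. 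If you do want to close the gap, a shorter route than duality is the equality case in radial averaging: a minimal representing measure $\hat\mu$ of any minimizer averages over $SO(d)$ to a representing measure of $f_d^*$ with no increase in total variation, and minimality forces equality, hence no sign cancellation under averaging; since the unique minimal radial representation from Lemma~\ref{lemma auxiliary bernstein} has bias marginal $\sum_i\mu_i\delta_{b_i}$, the bias marginal of $|\hat\mu|$ must already be supported on $\{b_0,\dots,b_{(d+1)/2}\}$. Even this route still requires proving that a minimal $\hat\mu$ exists, which is the genuinely technical point your proposal leaves open.
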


\end{document}